  \providecommand\BibTeX{{%
    \normalfont B\kern-0.5em{\scshape i\kern-0.25em b}\kern-0.8em\TeX}}}
\newcommand{\BLUE}[1]{{\color{black} #1}}
\newcommand{\bw}{\boldsymbol{w}}
\newcommand{\bu}{\boldsymbol{u}}
\newcommand{\bom}{\boldsymbol{m}}
\newcommand{\bv}{\boldsymbol{v}}
\newcommand{\bnu}{\boldsymbol{\nu}}
\newcommand{\bone}{\beta_1}
\newcommand{\btwo}{\beta_2}
\theoremstyle{plain}
\newtheorem{theorem}{Theorem}[section]
\newtheorem{lemma}[theorem]{Lemma}
\newtheorem{corollary}[theorem]{Corollary}
\theoremstyle{definition}
\newtheorem{assumption}[theorem]{Assumption}
\theoremstyle{remark}
\newtheorem{remark}[theorem]{Remark}
\begin{document}

\title{Provable Adaptivity of Adam under Non-uniform Smoothness}

\author{Bohan Wang}
\authornote{Both authors contributed equally to this research.}
\email{bhwangfy@gmail.com}
\orcid{1234-5678-9012}
\affiliation{%
  \institution{University of Science and Technology of China \& Microsoft Research Asia}
  \state{Beijing}
  \country{China}
}

\author{Yushun Zhang}
\authornotemark[1]
\email{yushunzhang@link.cuhk.edu.cn}
\affiliation{%
  \institution{The Chinese University of Hong Kong, Shenzhen}
  \city{Shenzhen}
  \state{Guangdoug}
  \country{China}
}

\author{Huishuai Zhang}
\email{zhanghuishuai@pku.edu.cn}
\authornote{Corresponding authors}
\affiliation{%
  \institution{Peking University}
  \city{Beijing}
  \country{China}}

\author{Qi Meng}
\email{meq@amss.ac.cn}
\affiliation{%
  \institution{Chinese Academy of Mathematics and Systems Science}
  \state{Beijing}
  \country{China}
}

\author{Ruoyu Sun}
\email{sunruoyu@cuhk.edu.cn}
\affiliation{%
 \institution{The Chinese University of Hong Kong, Shenzhen}
 \city{Shenzhen}
 \state{Guangdong}
 \country{China}}

\author{Zhi-Ming Ma}
\email{mazm@amt.ac.cn}
\affiliation{%
  \institution{Chinese Academy of Mathematics and Systems Science}
  \state{Beijing}
  \country{China}}

  \author{Tie-Yan Liu}
  \email{tie-yan.liu@microsoft.com}
  \affiliation{%
    \institution{Microsoft}
    \state{Beijing}
    \country{China}}

\author{Zhi-Quan Luo}
\email{luozq@cuhk.edu.cn}
\affiliation{%
  \institution{The Chinese University of Hong Kong, Shenzhen}
  \city{Shenzhen}
  \state{Guangdong}
  \country{China}
}

\author{Wei Chen}
\email{chenwei2022@ict.ac.cn}
\authornotemark[2]
\affiliation{%
  \institution{Institute of Computing Technology, Chinese Academy of Sciences}
  \state{Beijing}
  \country{China}
  }

\renewcommand{\shortauthors}{Wang and Zhang et al.}

\begin{abstract}

Adam is widely adopted in practical applications due to its fast convergence. However, its theoretical analysis is still far from satisfactory. Existing convergence analyses for Adam rely on the bounded smoothness assumption, referred to as the \emph{L-smooth condition}. Unfortunately, this assumption does not hold for many deep learning tasks. Moreover, we believe that this assumption obscures the true benefit of Adam, as the algorithm can adapt its update magnitude according to local smoothness. This important feature of Adam becomes irrelevant when assuming globally bounded smoothness. 
This paper studies the convergence of randomly reshuffled Adam (RR Adam) {with diminishing learning rate}, which is the major version of Adam adopted in deep learning tasks. We present the first convergence analysis of RR Adam without the bounded smoothness assumption. We demonstrate that RR Adam can maintain its convergence properties when smoothness is linearly bounded by the gradient norm, referred to as the \emph{$(L_0, L_1)$-smooth condition}. {We further compare Adam to SGD when both methods use diminishing learning rate. We refine the existing lower bound of SGD  and show that SGD can be slower than Adam.} 
To our knowledge, this is the first time that Adam and SGD are rigorously compared in the
same setting and the advantage of Adam is revealed.




\end{abstract}

\begin{CCSXML}
<ccs2012>
   <concept>
       <concept_id>10002950.10003714.10003716.10011138.10011140</concept_id>
       <concept_desc>Mathematics of computing~Nonconvex optimization</concept_desc>
       <concept_significance>500</concept_significance>
       </concept>
 </ccs2012>
\end{CCSXML}

\ccsdesc[500]{Mathematics of computing~Nonconvex optimization}
\keywords{Adaptive Optimizer, Convergence Analysis, Non-uniform smoothness}


\received{08 February 2024}
\received[revised]{16 June 2024}
\received[accepted]{5 June 2024}

\maketitle

\section{Introduction}
\label{sec: introduction}
Machine learning tasks are often formulated as solving the following finite-sum problem: 
\begin{equation} \label{finite_sum}
  \min _{\bw \in \mathbb{R}^{d}} f(\bw)=\frac{1}{n}\sum_{i=0}^{n-1} f_{i}(\bw),
\end{equation}
where $n$ denotes the number of samples or mini-batches, and $\bw$ denotes the trainable parameters. 
Recently, it is noted that adaptive gradient methods including  Adaptive Moment estimation (Adam) \citep{kingma2014adam}  
are widely used to train modern  deep neural networks including GANs \citep{brock2018large}, BERTs \citep{kenton2019bert}, GPTs \citep{brown2020language} and ViTs \citep{dosovitskiy2020image}. It is often observed that Adam converges considerably faster than vanilla Stochastic Gradient Descent (SGD) for the training of Transformers, as seen in Figure \ref{fig: attention}(a). Similar phenomena are also reported in  \citep{zhang2024transformers}.

Despite its practical success, the theoretical analysis of Adam is less than satisfactory. Existing analyses rely on bounded smoothness assumption, i.e., the Lipschitz coefficient of gradients (or the spectrum norm of the Hessian) is globally upper bounded by constant $L$, referred to as \emph{$L$-smooth condition}. However, recent studies show that the $L$-smooth condition does {\it not} hold in  practical deep learning tasks such as LSTM 
 \citep{zhang2019gradient} and Transformers \citep{crawshaw2022robustness}. 

Moreover, such an assumption hides the benefit of Adam. 
Intuitively, Adam can  overcome the issue of unbounded smoothness using adaptive learning rate. First, Adam uses the reciprocal of the square root of the exponential moving averages of past squared gradients as an effective learning rate (see Algorithm \ref{alg:def_adam} for the update rule). 
Thus, the effective learning rate would be adapted to the local gradient norm. Second, there is a strong correlation between the Lipschitz coefficient and the gradient norm of deep neural networks \citep{zhang2019gradient,cohen2021gradient,crawshaw2022robustness}.  As a result, Adam can adapt the update magnitude to the local Lipschitz coefficient and is empirically observed to converge fast (Figure \ref{fig: attention}(a) and \citep{zhang2019gradient}). Unfortunately, 
such benefit is hidden because existing theories of Adam are built upon $L$-smooth condition.

\BLUE{To reveal the theoretical benefit of Adam, we analyze its convergence under a relaxed smoothness condition called $(L_0,L_1)$-smooth condition \citep{zhang2019gradient}: } 
\begin{equation}
    \label{intro_L0L1}
\|\nabla^2 f_i(\bw)\| \leq L_0 + L_1 \|\nabla f_i(\bw) \|.
\end{equation}
\BLUE{When $L_1 =0$, Eq. \eqref{intro_L0L1} degenerates into classical $L$-smooth condition.}
The $(L_0,L_1)$-smooth condition allows the spectral norm of the Hessian (Lipschitz coefficient of gradients) to linearly grow with the gradient norm of $\bw$, so it is a relaxed version of  $L$-smooth condition. The $(L_0,L_1)$-smooth condition is empirically observed to hold in LSTM \citep{zhang2019gradient, zhang2020improved} and Transformers (Figure \ref{fig: attention}(b) and \citep{crawshaw2022robustness}).

\begin{figure*}[htbp]  
	\centering  
	\begin{subfigure}{0.49\textwidth}  
		\centering  
		\includegraphics[width=0.99\textwidth]{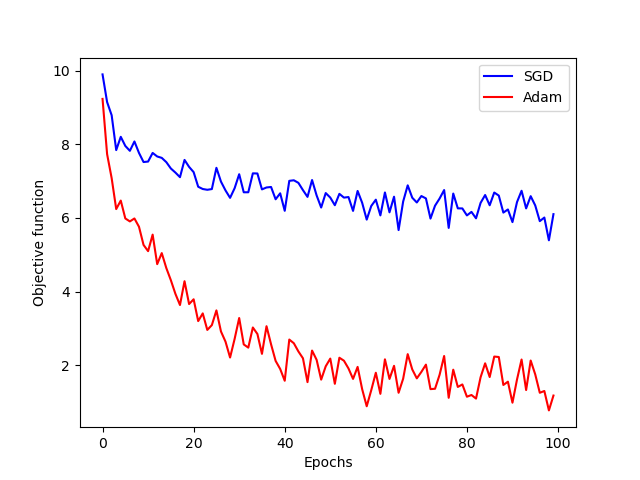}  
		\caption{Training loss}  
	\end{subfigure}  
	\hfill 
	\begin{subfigure}{0.49\textwidth}  
		\centering  
		\includegraphics[width=0.99\textwidth]{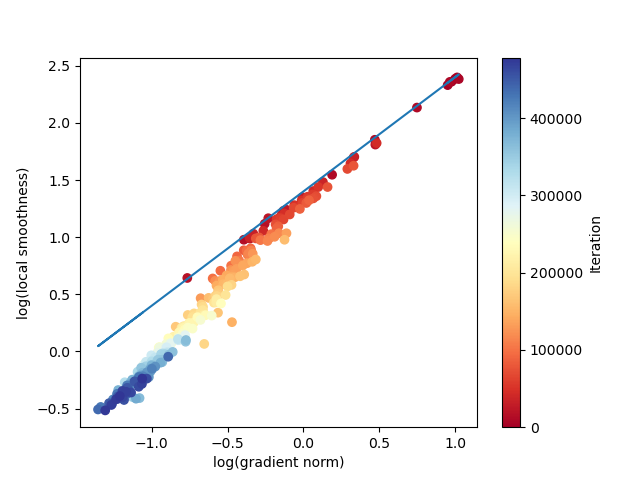}  
		\caption{Gradient vs. smoothness}  
	\end{subfigure}  
	\caption{Experiments on the WMT 2014 dataset trained with the transformer. {\bf (a):} The training loss of SGD and Adam. {\bf (b):} The gradient norm vs. the local smoothness on the training trajectory. The blue line in (b) stands for $\log(\text{local smoothness})= \log(\text{gradient norm})+1.4$. It can be observed that $(e^{1.4},0)$-smooth condition holds in this task. Similar results can be seen in \citet{zhang2019gradient}.}  
	\label{fig: attention}  
\end{figure*}  

\textbf{Our Contribution:} Under the $(L_0, L_1)$-smooth condition, we establish the convergence of randomly-reshuffled Adam. Specifically, our contributions are summarized as follows.
\BLUE{
    \begin{itemize}
        \item We establish the first convergence result of Adam without ``$L$-smoothness". We prove that  Adam converges under the $(L_0,L_1)$-smooth condition.
        
        \item Our convergence result enjoys several good properties.   First,there is no need for the bounded gradient assumption  (i.e. $\|\nabla f(\bw)\| \leq C$). Eliminating this assumption is essential since the $(L_0, L_1)$-smooth condition would otherwise degenerate to the $L$-smooth condition. Second, our result does not rely on other assumptions such as a bounded adaptor or a large regularizer for numerical stability. Lastly, the convergence holds for every possible trajectory, which is not only technically demanding but also much stronger than ``convergence in expectation''.

        \item  {We further compare Adam to SGD when both methods use diminishing learning rate.} We present an improved lower bound for  (S)GD under the $(L_0,L_1)$-smooth condition. In this lower bound,  there is a factor related to the gradient norm of the initial point, which does not exist in the  upper bound of Adam. This  indicates that (S)GD can converge slow under the $(L_0,L_1)$-smooth condition,  showing the advantage of Adam over (S)GD. To our knowledge, this is the  first time that Adam and SGD are rigorously compared in the same setting where the advantage of Adam can be revealed. 
We believe these results shed new light on understanding the benefit of  Adam.
        \end{itemize}
}

\textbf{Organization of this paper.} The rest of this paper is organized as follows: In Section \ref{sec: related_works}, we review related works on the convergence analysis for Adam, the relaxed smoothness assumption, and the variants of Adam. In Section \ref{sec: preliminary}, we define notations, present the psedocode of Adam, and provide the assumptions that our result rests on. In Section \ref{sec: convergence_rate_adam}, we provide our main result on the convergence of RR Adam under non-uniform smoothness together with explanations regarding the result. In Section \ref{sec: proof ideas}, we then state the proof ideas of the main result. In Section \ref{sec: discuss}, we provide discussions on intuitions of why non-adaptive optimizers can be used for fine-tuning tasks, comparison of Adam and Clipped SGD, insights for practioners and limitations of Theorem \ref{thm:rate}.







\section{Related works}
\label{sec: related_works}

\textbf{Convergence analysis for Adam.} Adam is firstly proposed in \citet{kingma2015adam} with a convergence proof. However, the proof is pointed out to have flaws by \cite{reddi2018convergence} and \cite{reddi2018convergence} further provide simple counterexamples with which Adam diverges. This discovery caused the convergence analysis of Adam to stagnate for a while and motivated a series of works developing variants of Adam without divergent issues (see discussion latter in this section). On the other hand, vanilla Adam works well in practice and \BLUE{divergence is not empirically observed.} This phenomenon motivates researchers to rethink the counterexamples. The counterexamples states ``for every $\beta_1 <\sqrt{\beta_2} $, there exists a problem that Adam diverges". That is to say, the divergence statement requires picking $(\beta_1,\beta_2)$ before fixing the problem, while in practice, the algorithmic parameters are often picked according to the problem. Based on this observation, a recent work \citep{Zhang2022Adam} proves that Adam can converge with $(\beta_1,\beta_2)$ picked after the problem is given. 

We categorize the existing results of Adam into two classes based on the sampling strategy: \textbf{with-replacement sampling (a.k.a., i.i.d. sampling, abbreviated as ``WR") and RR Adam.}  We believe both sampling strategies are worth studying: WR is more favored among the theory community due to its simple form, whereas RR is widely used among practitioners because it is easy to implement. Further, RR guarantees to pass each data at least once and brings good performance \citep{bottou2009curiously, bottou2012stochastic}.

The first line of work analyzes WR Adam. For instance,
 \cite{zaheer2018adaptive} shows that WR RMSProp (a simplified version of Adam with $\beta_1 = 0$) converges to the neighborhood of the stationary points. \cite{de2018convergence} prove the convergence of WR RMSProp by assuming the signs
of the gradients to remain the same along the trajectory. However, this condition is not guaranteed to hold in practice. 
\cite{defossez2020simple} prove the convergence of WR Adam with $\beta_1 < \btwo$. However, their convergence bound is inversely proportional to $\xi$, which is the hyperparameter for numerical stability.
 Consequently, their bound becomes vacuous as $\xi$ approaches zero. This result does not match practical observations because small values of $\xi$, like $10^{-8}$, often yield satisfactory performance. Moreover, employing large values of $\xi$ obscures the effect of  $\sqrt{v_k}$, and thus the proof is largely reduced to the proof of SGD.
\cite{huang2021super, guo2021novel} provide simple convergence proof for WR Adam with $\bone$ close to $1$.  However, their results require the  $\sqrt{v_k}$ to be bounded in a certain interval $[C_l, C_u]$. This condition changes Adam into AdaBound \citep{luo2019adaptive}.
In summary, all the above works require certain strong conditions such as bounded  $\sqrt{v_k}$  or large $\xi$. Further, they all require bounded gradient ($\|\nabla f(x)\| \leq C$) and bounded smoothness ($L$-smooth) condition.

Our analysis falls into the second line of works, which focus on RR Adam. \cite{shi2021rmsprop} prove the trajectory-wise convergence of RR RMSProp and \cite{Zhang2022Adam} prove the in-expectation convergence of RR Adam. However, these works both require $L$-smooth condition.  Our analysis follows this line of works and provides the first convergence result of RR Adam under relaxed smoothness condition. 



\noindent
\textbf{Relaxed smoothness assumption.} 
There are several attempts on relaxing $L$-smooth condition. 
\citet{zhang2019gradient} proposes $(L_0,L_1)$-smooth condition  to theoretically explain the acceleration effect of clipped SGD
over SGD. Similar results are also extended to  clipped SGD with momentum \citep{zhang2020improved}, distributionally-robust optimization \citep{jin2021non}
, differentially-private SGD \citep{yang2022normalized} and generalized SignSGD \citep{crawshaw2022robustness}.
   \BLUE{ However, they did not theoretically analyze Adam  in this setting.  Considering the great empirical impact of Adam, we believe it is important to study Adam in its original form.}

   One concurrent work \citep{li2023convergence} studies the convergence of WR Adam under $(L_0,L_1)$-smooth condition by cleverly constructing certain stopping time. They also propose a variance-reduced variant with better convergence rate. However, their bound on Adam not only assumes the noise is deterministically bounded, but also has polynomial dependence over $1/\xi$ (the hyperparameter for numerical stability). Similarly to \citep{de2018convergence}, this result does not match practice observations, since Adam performs well even when  $\xi$ is as small as $10^{-8}$.
   
\noindent \textbf{Variants of Adam.} Ever since the counter-example of the convergence of Adam raised by \cite{reddi2018convergence}, many new variants of Adam have been designed. For instance,  \citet{zou2019sufficient,gadat2020asymptotic,chen2018convergence,chen2021towards} 
replaced the constant hyperparameters by iterate-dependent ones e.g. $\beta_{1t}$ or $\beta_{2t}$. AMSGrad \citep{reddi2019convergence} and AdaFom \citep{chen2018convergence} enforced $\{v_t\}$ to be  non-decreasing.  Similarly,
AdaBound \citep{luo2019adaptive}  imposed constraints  $v_t \in [C_l, C_u]$ to prevent the learning rate from  vanishing or exploding.  Similarly, \cite{zhou2018adashift}  adopted a new estimate of  $v_t$ to correct the bias.
In addition, there are attempts to combine Adam with Nesterov momentum \citep{dozat2016incorporating} as well as warm-up techniques \citep{Liu2020On}. 
There are also some works providing theoretical analysis on the variants of Adam.  For instance, \citet{zhou2018convergence} studied the convergence of  AdaGrad and AMSGrad. \citet{gadat2020asymptotic} studied the asymptotic behavior of a subclass of adaptive gradient methods from landscape point of view. 
Their analysis applies to RMSprop-variants with iterate-dependent $\beta_{2t}$. In summary, all these works study variants of Adam, which is different from our work since we focus on vanilla Adam.

\section{Preliminaries}
\label{sec: preliminary}
This section introduces notations, definitions, and assumptions that are used throughout this work. 


\textbf{Notations.} We list the notations that are used in the formal definition of the randomly-shuffled Adam and its convergence analysis.
\begin{itemize}
        \item (Vector) We define $\boldsymbol{a} \odot \boldsymbol{b}$ as the Hadamard product (i.e., component-wise product) between two vectors $\boldsymbol{a}$ and $\boldsymbol{b}$ with the same dimension. We also define $\langle \boldsymbol{a},\boldsymbol{b}\rangle$ as the $\ell^2$ inner product between $\boldsymbol{a}$ and $\boldsymbol{b}$. We define $\mathds{1}_d$ as an all-one vector with dimension $d$. 
    
    
    \item (Array) We define $[m_1,m_2]\triangleq \{m_1,\cdots,m_2\}$, $\forall m_1,m_2\in \mathbb{N}, m_1\le m_2$. Specifically, we use $[m]$ $\triangleq \{1,\cdots,m\}$.

    \item (Asymptotic notation) We define $A_1(x)=\mathcal{O}_{x\rightarrow a}(A_2(x))$ if $\left\vert\frac{A_1(x)}{A_2(x)} \right\vert $ is bounded when $x\rightarrow $ $a$. We define $A_2(x)=\Omega_{x\rightarrow a}(A_1(x))$ when $A_1(x) $$ =\mathcal{O}_{x\rightarrow a}(A_2(x))$. We use $\tilde{\mathcal O}$ to denote $\mathcal{O}$ with logarithmic factors hidden, i.e., $A_1(x)=\tilde{\mathcal{O}}_{x\rightarrow a}(A_2(x))$ if $A_1(x)=\mathcal{O}_{x\rightarrow a}(A_2(x)\log \vert A_2(x)\vert )$. When the context is clear, we hide "$x\rightarrow a$" and only use $\mathcal{O},\Omega,\tilde{\mathcal{O}}$.
\end{itemize}

\noindent\textbf{Pseudocode.} To facilitate the  analysis, we provide the pseudocode of Adam  in  Algorithm \ref{alg:def_adam}.
\begin{algorithm}[htb!]
   \caption{Randomly reshuffled Adam (RR-Adam)}
   \label{alg:def_adam}
\begin{algorithmic}
   \STATE {\bfseries Input:} Objective function $f(\bw):=\frac{1}{n}\sum_{i=0}^{n-1} f_i(\bw)$, learning rate series $\{\eta_{k}\}_{k=1}^{T}$ and hyperparameters $(\beta_1,\beta_2)\in [0,1)^2$. Initialize the parameter $\bw_{1,0} \in \mathbb{R}^d$, the conditioner $\bnu_{1,-1}\in \mathbb{R}^{d,\ge 0}$, and the momentum $\bom_{1,-1}\in \mathbb{R}^{d}$.
   \FOR{$k=1$ {\bfseries to} $T$}
   \STATE Randomly shuffle $[0,n-1]$ to get $\{\tau_{k,j}\}_{j=0}^{n-1}$
  \FOR{$i=0$ {\bfseries to} $n-1$}
  \STATE  Calculate $g_{k,i}=\nabla f_{\tau_{k,i}} (\bw_{\tau_{k,i}})$
   \STATE Update $\bnu_{k,i}=\btwo\bnu_{k,i-1}+(1-\btwo)g_{k,i}^{\odot2}$,
   \STATE Update $\bom_{k,i}=\bone\bom_{k,i-1}+(1-\bone)g_{k,i}$
   \STATE Update $\bw_{k,i+1}= \bw_{k,i}-\eta_k \frac{1}{\sqrt{\bnu_{k,i}}+\xi}\odot \bom_{k,i}$
   \ENDFOR
   \STATE Update \small $\bnu_{k+1,-1}= \bnu_{k,n-1}$, $\bom_{k+1,-1}= \bom_{k,n-1}$, $\bw_{k+1,0}=\bw_{k,n}$
   \ENDFOR
\end{algorithmic}
\end{algorithm}

 $\bom_{k,i}$ and $\bnu_{k,i}$ are weighted averages with hyperparamter $\beta_1 \in [0,1)$ and $\beta_2 \in [0,1)$, respectively.
$\xi$ is
adopted for numerical stability and it is often chosen to be $10^{-8}$ in practice. In our theory, we allow $\xi$ to be an arbitrary non-negative constant including 0.

 Algorithm \ref{alg:def_adam} follows a without-replacement sampling strategy (also known as shufﬂing), which is the default strategy used in CV, NLP, GANs, etc. However, it is not necessarily easy to analyze shuffling strategy, because the stochastic gradients sampled by random-shuffling lack statistical unbiasedness, i.e. $\mathbb{E}\left[\nabla f_{k,i}(x_{k,i})\vert x_{k,i} \right] \neq \nabla f(x_{k,i})$. This bias requires a much different analysis from its with-replacement counterpart. Even for SGD, the  analysis for shuffling is often known to be ``more challenging"
 \citep{tran2021smg,mishchenko2020random}.  However, we choose to study this version as it is closer to the practice.
 


\noindent\textbf{Assumptions.} Here we state the assumptions that our result will rest on. The first one is the $(L_0,L_1)$-smooth condition introduced in Section \ref{sec: introduction}.
\begin{assumption}[$(L_0,L_1)$-smooth condition]
\label{assum:regular} We assume that  $f_i(\bw)$ is lower bounded by $0$, and $f_i(\bw)$ satisfies  $(L_0,L_1)$-smooth condition, i.e.,  there exist positive constants ($L_0$, $L_1$), such that, $\forall \bw_1,\bw_2\in \mathbb{R}^d$ satisfying $\Vert \bw_1-\bw_2\Vert \le \frac{1}{L_1}$,
\begin{equation}
\label{eq: smoothness}
    \Vert \nabla f_i(\bw_1)-\nabla f_i(\bw_2)\Vert \le (L_0+L_1 \Vert \nabla f_i(\bw_1)\Vert)\Vert \bw_1-\bw_2\Vert .
\end{equation}
\end{assumption}
 Eq. (\ref{eq: smoothness}) 
 is firstly introduced by \citet{zhang2020improved}, and is the weakest version of $(L_0,L_1)$-smooth condition to our best knowledge since it does not require $f_i(\bw)$ to be twice  differentiable.
When $f_i(\bw)$ is twice differentiable, Eq. (\ref{eq: smoothness}) is equivalent to Eq. \eqref{intro_L0L1} \citep{zhang2020improved}. $(L_0,L_1)$-smooth condition generalizes the $L$-smooth condition (i.e., $(L_0,L_1)$-smooth condition with $L_0=L$ and $L_1=0$) in classical non-convex optimization literature \cite{ghadimi2013stochastic,liu2020improved} and allows the smoothness to be unbounded globally.



\begin{assumption}[Affine Noise Variance]
\label{assum:GC} $\forall \bw \in \mathbb{R}^d$, the gradients of  $\{f_i(\bw)\}_{i=0}^{n-1}$ has the following connection with the gradient of $f(\bw)$:
\begin{equation*}
\label{eq:GC}
    \frac{1}{n}\sum_{i=0}^{n-1}\left\|\nabla f_{i}(\bw)\right\|^{2} \leq D_{1}\|\nabla f(\bw)\|^{2}+D_{0}.
\end{equation*}
\end{assumption}
Assumption \ref{assum:GC} is one of the weakest assumption on gradient noise in existing literature. It not only generalizes the ``bounded variance" assumption (which requires $D_1 = 1/n$, and thus further generalizes the "bounded gradient" assumption \citep{defossez2020simple}) \citep{ghadimi2016mini, Manzil2018adaptive, huang2021super}, but also is weaker than the ``strongly growth condition" (which requires $D_0 = 0$) \citep{schmidt2013fast,vaswani2019fast}. Assumption \ref{assum:GC}  allows flexible choices of $D_0$ \& $D_1$ and thus it is among the weakest assumption of this kind.

\section{ Adam Converges under the $(L_0,L_1)$-smooth condition}
\label{sec: convergence_rate_adam}

In this section, we provide our main result on the convergence of RR Adam under $(L_0,L_1)$-smooth condition. As discussed in Section \ref{sec: introduction}, even for the simpler with-replacement sampling Adam, the establishment of the convergence under $(L_0,L_1)$-smooth condition requires restrictive assumptions such as a large $\xi$ (the constant introduced for numerical stability and is as small as $10^{-8}$ in practice), and deterministically bounded noise \citep{li2023convergence}. Such assumptions make the corresponding results hard to apply to practical setting. As for the harder randomly-reshuffled setting, there is no convergence result for Adam under non-uniform smoothness. Our result tackles the limitation in existing works and propose the first convergence result for RR Adam under non-uniform smoothness, provided as follows. 

\begin{theorem}
\label{thm:rate}
Consider RR Adam defined as Algorithm~\ref{alg:def_adam} with diminishing learning rate $\eta_k=\frac{\eta_1}{\sqrt{k}}$. Let Assumptions \ref{assum:regular} and \ref{assum:GC} hold. Suppose the hyperparamters satisfy:  $0\le \bone^2<\btwo<1$ and  $\beta_2$ is larger than a threshold $\gamma ( D_1)$. Then, we have
\begin{small}
    \begin{align}
    \nonumber
    \min_{k\in [1,T]}\left\{\frac{\Vert\nabla f(\bw_{k,0})\Vert }{\sqrt{D_1 } },\frac{\Vert\nabla f(\bw_{k,0})\Vert^2 }{\sqrt{D_0 } }\right\}
    \le &\tilde{\mathcal{O}}\left(\frac{f(\bw_{1,0})-\min_{\bw}f(\bw)}{
    \sqrt{T}}\right)
    \\
   \label{eq: convergent_rate}
    &+\mathcal{O}((1-\btwo)^2\sqrt{D_0}). 
\end{align}  
\end{small}
\end{theorem}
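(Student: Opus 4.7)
My plan is to prove convergence by a per-epoch descent argument combined with a careful ``almost-frozen'' conditioner approximation. The first step is to establish a deterministic bound of the form $|\bom_{k,i}^{(j)}|/(\sqrt{\bnu_{k,i}^{(j)}}+\xi) \le C(\bone,\btwo)$ for every coordinate $j$, every step, and every trajectory; this is where the assumption $\bone^2<\btwo$ is essential (a Cauchy--Schwarz argument applied to the weighted sums defining $\bom$ and $\bnu$). Together with the diminishing learning rate $\eta_k=\eta_1/\sqrt{k}$, this yields a per-step displacement of order $\eta_k$ and an in-epoch displacement $\|\bw_{k,i}-\bw_{k,0}\|=\mathcal{O}(n\eta_k)$. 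For $k$ beyond a (problem-dependent) threshold, this displacement is $\le 1/L_1$, so Assumption~\ref{assum:regular} can be applied locally in a standard descent lemma.

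\textbf{Freezing the conditioner.} Because $\btwo$ is close to $1$, $\bnu_{k,i}$ drifts only by $\mathcal{O}(1-\btwo)$ along the $n$ inner steps of epoch $k$. I would formalize this by comparing $1/\sqrt{\bnu_{k,i}}$ with a frozen conditioner $1/\sqrt{\tbw_k}$ (taken at the start of the epoch), showing the error is $\mathcal{O}((1-\btwo))$ on a per-coordinate basis after controlling the magnitude of the new squared gradients via Assumption~\ref{assum:GC}. Under this replacement, the sum of Adam updates over one epoch, $\sum_{i=0}^{n-1}\bom_{k,i}/(\sqrt{\tbw_k}+\xi)$, unfolds (via Abel summation on the $\bone$-average) into approximately $\tfrac{n}{1-\bone}\bar g_k/(\sqrt{\tbw_k}+\xi)$ where $\bar g_k=\tfrac{1}{n}\sum_i g_{k,i}$. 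Random reshuffling ensures $\bar g_k=\nabla f(\bw_{k,0})$ up to an intra-epoch drift of order $n\eta_k (L_0+L_1\|\nabla f(\bw_{k,0})\|)$, which is absorbed either as a higher-order term or via the $(L_0,L_1)$-smoothness.

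\textbf{From per-epoch descent to the min-structure.} Applying the descent lemma across one epoch, the leading negative term is proportional to $\eta_k n \sum_j (\nabla f(\bw_{k,0})^{(j)})^2/(\sqrt{\tbw_k^{(j)}}+\xi)$. By Assumption~\ref{assum:GC} applied to the geometric tail defining $\tbw_k$, I can upper bound $\sqrt{\tbw_k^{(j)}}+\xi \lesssim \sqrt{D_1}\,\|\nabla f(\bw_{k,0})\| + \sqrt{D_0}$ (at least in an aggregated, Cauchy--Schwarz sense across coordinates), which after summing $j$ yields
\begin{equation*}
\frac{\|\nabla f(\bw_{k,0})\|^2}{\sqrt{D_1}\,\|\nabla f(\bw_{k,0})\|+\sqrt{D_0}}
\;\gtrsim\;
\min\!\left\{\frac{\|\nabla f(\bw_{k,0})\|}{\sqrt{D_1}},\frac{\|\nabla f(\bw_{k,0})\|^2}{\sqrt{D_0}}\right\}.
\end{equation*}
This is precisely the min-quantity appearing in \eqref{eq: convergent_rate}. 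The second-order error from $(L_0,L_1)$-smoothness contributes $\tfrac{1}{2}(L_0+L_1\|\nabla f\|)\|\bw_{k,i+1}-\bw_{k,i}\|^2$, which, together with the per-step bound, is $\mathcal{O}(\eta_k^2(L_0+L_1\|\nabla f\|))$; the $L_1\|\nabla f\|$ part is absorbed into the leading descent for large $k$, while the $L_0$ part is handled by the diminishing-step summation $\sum_k \eta_k^2 < \infty$ up to logarithmic factors.

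\textbf{Summation and main obstacle.} Telescoping the per-epoch inequality from $k=1$ to $T$, using $\sum_{k=1}^T \eta_k \asymp \sqrt{T}$ and $\sum_{k=1}^T \eta_k^2 \asymp \log T$, and dividing by $\sum_k \eta_k$ produces the $\tilde{\mathcal{O}}((f(\bw_{1,0})-\min f)/\sqrt{T})$ rate; the $\mathcal{O}((1-\btwo)^2\sqrt{D_0})$ additive term arises as the residual from the conditioner-freezing approximation, which cannot be removed without sending $\btwo\to 1$. The main obstacle, in my view, is controlling the coupling between the stochastic gradient, the adaptive conditioner, and the unbounded curvature simultaneously while keeping the bound trajectory-wise: in particular, one must upper-bound $1/\sqrt{\tbw_k}$ from above (to lower bound the descent) and from below (to ensure the in-epoch displacement is $\le 1/L_1$) using only Assumption~\ref{assum:GC}, without invoking any bounded-gradient hypothesis. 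The threshold $\gamma(D_1)$ on $\btwo$ enters precisely to make this two-sided control feasible while avoiding any $1/\xi$ blow-up.
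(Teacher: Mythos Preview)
Your high-level skeleton (bounded update via $\bone^2<\btwo$, conditioner freezing, local $(L_0,L_1)$-descent, telescoping with $\sum\eta_k\asymp\sqrt{T}$) matches the paper, but two genuine gaps would make the argument fail as written.

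\textbf{Momentum handling.} The claim that $\sum_{i=0}^{n-1}\bom_{k,i}/(\sqrt{\tbw_k}+\xi)$ ``unfolds via Abel summation into approximately $\tfrac{n}{1-\bone}\bar g_k/(\sqrt{\tbw_k}+\xi)$'' is incorrect. The actual identity is
\[
\sum_{i=0}^{n-1}\bom_{k,i} \;=\; n\,\bar g_k \;+\; \frac{\bone}{1-\bone}\bigl(\bom_{k,-1}-\bom_{k,n-1}\bigr),
\]
and the boundary term $\tfrac{\bone}{1-\bone}(\bom_{k,-1}-\bom_{k,n-1})$ is \emph{not} higher order: multiplied by $\eta_k$ and summed over $k$ it is the same size as the descent, and it does not telescope because $\eta_k/(\sqrt{\bnu_{k,0}}+\xi)$ varies across epochs. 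The paper's key device is to change the potential from $f(\bw_{k,0})$ to $f(\bu_k)$ with $\bu_k\triangleq(\bw_{k,0}-\bone\bw_{k,-1})/(1-\bone)$; the decomposition of $\bu_{k+1}-\bu_k$ then yields exactly $-\tfrac{\eta_k}{\sqrt{\bnu_{k,0}}+\xi}\odot\sum_{i}\nabla f_{\tau_{k,i}}(\bw_{k,i})$ as the leading part, with the historical momentum absorbed into the potential. Without this (or an equivalent construction), the first-order term in your descent inequality need not be negative, and the paper explicitly points this out as the obstacle that forces the new potential.

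\textbf{Conditioner freezing is only conditional.} The statement that ``$\bnu_{k,i}$ drifts only by $\mathcal{O}(1-\btwo)$'' presupposes that the incoming squared gradients are comparable to $\bnu_{k,0}$; under Assumption~\ref{assum:GC} alone this fails on coordinates with small gradient. The paper therefore splits $[d]=\mathbb{L}^k_{large}\cup\mathbb{L}^k_{small}$: for $l\in\mathbb{L}^k_{large}$ (where $\max_i|\partial_l f_i(\bw_{k,0})|$ exceeds a history-weighted threshold of the form $\sum_{r<k}\sqrt{\btwo}^{(k-r)n}\eta_r\|\nabla f(\bw_{r,0})\|+\mathcal{O}(\eta_k)$) one can prove $|\bnu_{l,k,i}-\bnu_{l,k,0}|=\mathcal{O}((1-\btwo)\bnu_{l,k,0})$; for $l\in\mathbb{L}^k_{small}$ the contribution is bounded directly by that threshold. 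Crucially, under $(L_0,L_1)$-smoothness these error terms carry \emph{historical} gradient norms, so one must exchange the order of summation over $k$ and $r$ to show they are dominated by the accumulated descent $\sum_k\eta_k\min\{\|\nabla f(\bw_{k,0})\|/\sqrt{D_1},\|\nabla f(\bw_{k,0})\|^2/\sqrt{D_0}\}$. This step is specific to the non-uniform-smoothness setting and is missing from your sketch; without it, the ``absorbed into the leading descent'' claim for the $L_1\|\nabla f\|$ second-order piece is unjustified.
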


For simplicity, we defer the concrete form of $\gamma$ to Appendix \ref{appen: restate_thm}. We provide some remarks on the results as follows, and state the proof idea in the next section.

\textbf{Explanation for Theorem \ref{thm:rate}.} Theorem \ref{thm:rate} is pioneering in demonstrating that RR Adam is capable of converging under the non-uniform smoothness condition, a finding that is novel to our best knowledge. Observing the right-hand side of inequality (\ref{eq: convergent_rate}), one can see that as $T \to \infty$, it approaches $\mathcal{O}((1-\beta_2)^2\sqrt{D_0})$. This suggests that Adam's convergence to the vicinity of stationary points is inversely related to the proximity of $\beta_2$ to 1. This theoretical insight corroborates the common practice of choosing $\beta_2$ close to $0.99$. A counterexample provided later will further illustrate that convergence to a neighborhood, rather than an exact point, is an intrinsic characteristic of the algorithm.   
  
Beyond the $(L_0,L_1)$-smooth condition, Theorem \ref{thm:rate} presupposes only that the gradient noise exhibits affine variance as per Assumption \ref{assum:GC}, which is a relatively mild constraint that eschews the need for a bounded gradient norm. This is crucial, as imposing a bound would reduce the $(L_0, L_1)$-smooth condition to an $(L_0+L_1M)$-smooth condition with a gradient norm capped by $M$. Additionally, we do not require the adaptive learning rate $\eta_k/\sqrt{\hat{\nu}_k}$ to be upper bounded, nor do we stipulate a large regularizer $\xi$—aligning with common practices in deep learning libraries where a small $\xi$ such as $10^{-8}$ is often effective. Our theorem permits any non-negative $\xi$, including zero. Finally, Theorem \ref{thm:rate} asserts convergence for every possible trajectory, a guarantee that exceeds the typical "convergence in expectation" results and poses a significant technical challenge.

\textbf{On the Comparison to Existing Analyses of RR Adam.}  
Our analysis extends the applicability of RR Adam by ensuring convergence under the $(L_0,L_1)$ smooth condition, which inherently encompasses the traditional $L$-smooth condition. This broadened perspective allows our results to guarantee convergence for RR Adam even under the more general $L$-smooth scenario. When juxtaposed with the state-of-the-art analysis of RR Adam under the $L$-smooth condition by \citet{Zhang2022Adam}, our findings advance the field in two significant ways. Firstly, we elevate the notion of convergence from the expected sense to the more stringent trajectory-wise convergence. Secondly, we refine the estimated convergence neighborhood, tightening it from $(1-\beta_2) \sqrt{D_0}$ to $(1-\beta_2)^2 \sqrt{D_0}$. Collectively, our analysis not only operates under a less restrictive assumption—the $(L_0,L_1)$-smooth condition—but also delivers substantively enhanced convergence results.

\textbf{On the range of hyperparameters.}
Theorem \ref{thm:rate} indicates that Adam can work when  $\btwo$ is close enough to $1$. This matches the practical choice of $\beta_2$ (e.g., $0.999$ in default setting, $0.95$ in the GPT-3 training \citep{brown2020language}). Note that our result does not contradict the  counterexamples of Adam's non-convergence \citep{reddi2018convergence,Zhang2022Adam}, as these divergence results require $\btwo$ to be small and thus not close to $1$.  Rather, these counterexamples suggest that large $\btwo$ is necessary for convergence. As for $\bone$, Theorem \ref{thm:rate} needs $\beta_1^2 <\beta_2$. When $\beta_2$ is large,  Theorem \ref{thm:rate} allows a wide range of candidates of $\beta_1$ (e.g., $0.9$ in default setting and $0.5$ in GAN \citep{radford2015unsupervised}).

\begin{figure}[htb!]  
  \centering  
  \includegraphics[width=0.25\textwidth]{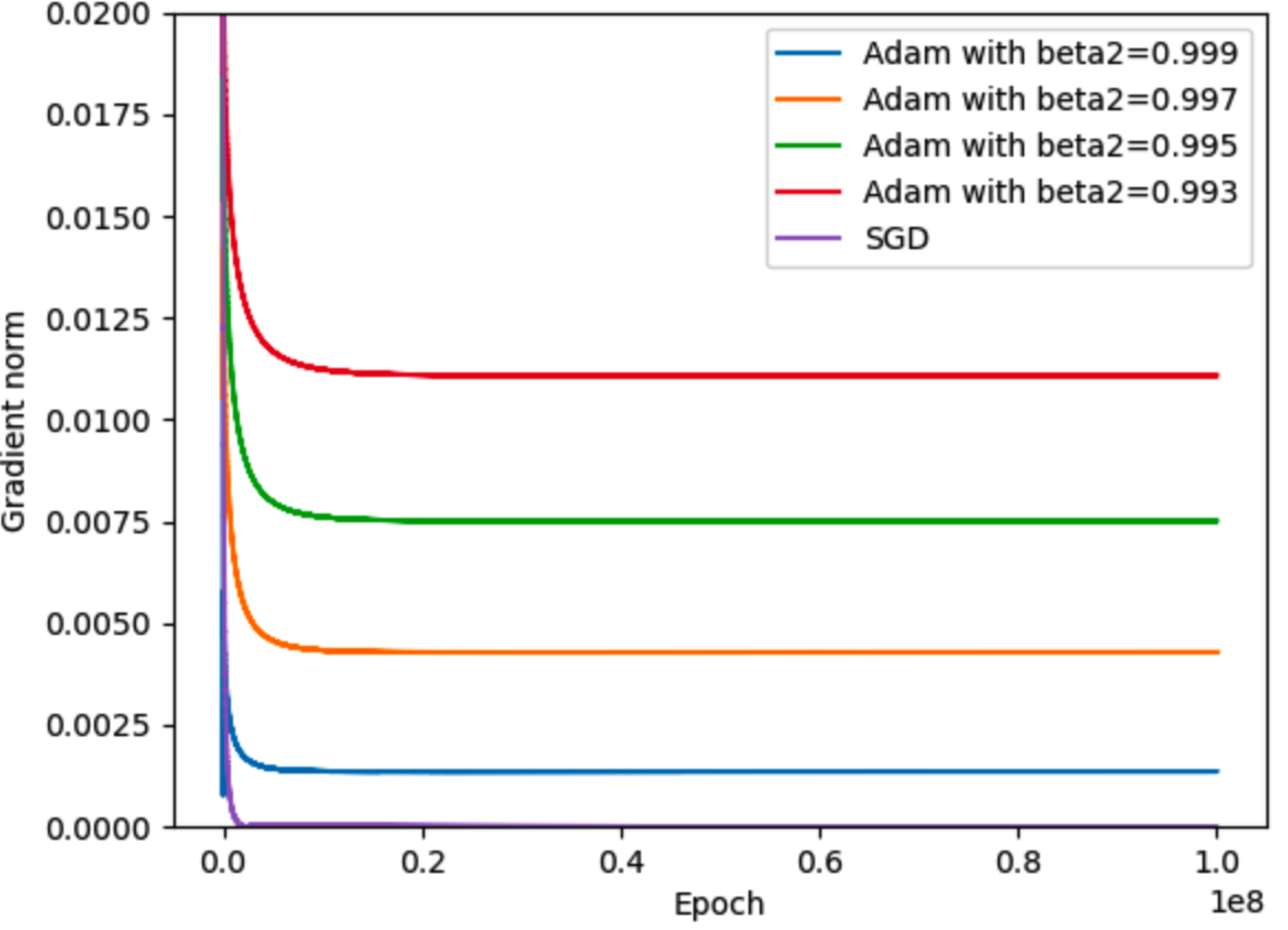}  
  \caption{ Reconduct of experimental results from \citep{Zhang2022Adam}. The objective function is defined in Eq. (\ref{eq: counter_example}). One can observe that while letting $\btwo$ closer to $1$ can make the limiting gradient norm smaller, the limiting gradient norm always stabilizes beyond 0.  }  
  \label{fig: D0}  
\end{figure} 

\textbf{On the neighborhood of stationary points.} When $D_0\ne 0$,  Theorem \ref{thm:rate}  only ensures that Adam converges to a neighborhood of stationary points $\{\bw: \min$ $\{\frac{\Vert\nabla f(\bw))\Vert }{\sqrt{D_1 } },\frac{\Vert\nabla f(\bw)\Vert^2 }{\sqrt{D_0 } }\}\le \mathcal{O}((1-\btwo)\sqrt{D_0})\}$. Since SGD converges to the stationary points with diminishing learning rate, one may wonder if Theorem \ref{thm:rate} can be improved to obtain the same conclusion as SGD. Unfortunately, there is a counterexample in the existing literature ( function (9) in \citet{Zhang2022Adam}) showing that \textbf{Adam does {\it not} converge to stationary points} even if all the conditions in Theorem \ref{thm:rate} are satisfied.
Specifically, \cite{Zhang2022Adam} consider the following function:
\begin{equation}
\label{eq: counter_example}
\begin{aligned}
&f(x)=\sum_{j=0}^{9} f_{j}(x)=\frac{1}{10} x^{2}-1,
\\
&\text{ where } f_{j}(x)=\left\{\begin{array}{l}(x-1)^{2} \text { if  } j=0 \\ -0.1\left(x-\frac{10}{9} \right)^{2} \text { if  } 1 \leq j \leq 9\end{array}\right..
\end{aligned}
\end{equation}
One can easily verify such an example satisfies Assumptions \ref{assum:GC} and \ref{assum:regular} with $D_0 >0$.  
As shown in Figure \ref{fig: D0}, when running Adam (with $\beta_1 = 0.9$, $\eta_k = 0.1 /\sqrt{k}, a= 3, x_0 =-2$), it does not converge to exact stationary points. Instead, it converges to a neighborhood of stationary points with size inversely proportional to $\beta_2$. Therefore, the non-vanishing term in Theorem \ref{thm:rate} is \textit{not} due to the limitation of the proof. Rather, it is an intrinsic property of Adam.

Why cannot Adam converge to exact stationary points when $D_0 >0$?
Intuitively, this is because even with diminishing $\eta_k$, the effective learning rate $\frac{\eta_k}{\xi\mathds{1}_d+\sqrt{\bnu_{k,i}}}$ may not diminish due to the potentially decreasing $\sqrt{\bnu_{k,i}}$.
The good news is that $\mathcal{O}((1-\btwo)\sqrt{D_0})$  approaches $0$ as $\btwo$ gets close to $1$. This means that the neighborhood shrinks as $\btwo\rightarrow 1$ (this is also observed in Figure \ref{fig: D0}). As discussed above, the practical use of $\btwo$ is close to $1$, and thus $O((1-\btwo)\sqrt{D_0})$ is tolerable.

\textbf{On the Diminishing Learning Rate.}
In Theorem \ref{thm:rate}, we consider a diminishing learning rate of the form $\eta_k = \frac{\eta_1}{\sqrt{k}}$ to maintain consistency with existing works on RR Adam, such as those by \cite{shi2021rmsprop,Zhang2022Adam}, which also employ diminishing learning rates. Nonetheless, our results can be readily extended to RR Adam with a constant learning rate. By adhering to the same proof strategy outlined in Theorem 1, one can demonstrate that with a constant learning rate $\eta$, the conclusion (as given in Eq. (\ref{eq: convergent_rate})) of Theorem 1 is modified to $\min_{k\in [1,T]} \{\frac{\Vert \nabla f(w_{k,0}) \Vert}{\sqrt{D_0}}, \frac{\Vert \nabla f(w_{k,0}) \Vert2}{\sqrt{D_1}}\} \le \tilde{\mathcal{O}}\left(\frac{f(\bw_{1,0})-\min_{\bw}f(\bw)}{
    \eta T}\right) +  \mathcal{O}(\sqrt{D_0}(1-\btwo)^2) + \mathcal{O}(\eta)$. In essence, while Adam may converge more rapidly to a neighborhood (with the rate improving from $1/\sqrt{t}$ to $1/t$), the size of this neighborhood is increased by an additional term $\mathcal{O}(\eta)$, which is attributable to the constant step size.

\vspace{-2mm}

\section{Proof sketch}
\label{sec: proof ideas}
{
In this section, we briefly explain our proof idea for Theorem \ref{thm:rate}, which can be divided into two stages. In Stage I, we will prove Theorem \ref{thm:rate} for Adam with $\bone=0$ to show the challenge brought by $(L_0,L_1)$-smooth condition and how we tackle it. In Stage II, we then show the additional difficulty when adding the momentum and our corresponding intuition to solve it.

\textbf{Stage I: Convergence of Adam with $\bone =0$.} By the descent lemma,
\begin{align}
\nonumber
   \begin{matrix} f(\bw_{k+1,0}) -f(\bw_{k,0}) \le &\\
   \quad\vspace{2mm}
   \end{matrix}&  
   \begin{matrix} \underbrace{  \langle \bw_{k+1,0}-\bw_{k,0},\nabla f(\bw_{k,0}) \rangle} \\ \text{First Order} \end{matrix}
   \begin{matrix} \\
    \quad\vspace{2mm}
   \end{matrix}
   \\
   &\begin{matrix}\underbrace{+\frac{L_{loc} }{2}\Vert\bw_{k+1,0}-\bw_{k,0} \Vert^2,}
   \\
    \text{Second Order}
   \end{matrix}\label{eq: f_descent}
\end{align}
where $L_{loc}$ is the local smoothness.  We bound the first-order and the second-order term respectively. The upper bound on second-order term is relatively simple. Due to the limited space, we only show the idea of bounding first-order term here.


The ever-changing adaptive learning rate poses a challenge on deriving the bound. It is even noted that with small $\btwo$, the first order term can be positive \citep{reddi2018convergence}. However, we notice that if $\bnu_{k,i}$ is stationary, i.e., RMSProp degenerates to SGD with preconditioning, the first order term equals to  $-\eta_k\langle \sum_{i} \frac{1}{\xi\mathds{1}_d+\sqrt{\bnu_{k,0}}} \odot \nabla f_{\tau_{k,i}}(\bw_{k,i}), \nabla f(\bw_{k,0})\rangle \approx -\eta_k \langle \sum_{i} \frac{1}{\xi\mathds{1}_d+\sqrt{\bnu_{k,0}}} \odot \nabla f_{\tau_{k,i}}(\bw_{k,0}), \nabla f(\bw_{k,0})\rangle$, which is indeed negative. While that "$\bnu_{k,i}$ is stationary" is too good to be true, we prove that $\bnu_{k,i}$ changes little when $\btwo$ is close to $1$, assuming that the gradient is large. Below we denote $\bnu_{l,k,i}$ as the $l$-th component of $\bnu_{k,i}$.

\begin{lemma}[Informal]
\label{lem: main_difference_nu}
For any $l\in [d]$ and $i\in[0,n-1]$, if  $\max_{p\in[0,n-1]} \vert \partial_l f_p(\bw_{k,0})\vert= \Omega(\sum_{r=1}^{k-1}\btwo^{\frac{(k-1-r)}{2}}\eta_r$ $\Vert \nabla f(\bw_{r,0})\Vert +\eta_k)$, then $\vert \bnu_{l,k,i}-\bnu_{l,k,0} \vert =\mathcal{O}((1-\btwo)\bnu_{l,k,0}) $.
\end{lemma}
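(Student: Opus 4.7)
The plan is to start from the closed-form EMA expansion within epoch $k$, decompose $\bnu_{l,k,i}-\bnu_{l,k,0}$ into a \emph{decay} part and a \emph{fresh-gradient} part, and reduce the entire statement to the single size-matching estimate $\max_{0\le j\le n-1} g_{l,k,j}^{2}=\mathcal{O}(\bnu_{l,k,0})$. Unrolling the recursion $\bnu_{l,k,i}=\btwo\,\bnu_{l,k,i-1}+(1-\btwo)g_{l,k,i}^{2}$ yields
\begin{equation*}
\bnu_{l,k,i}-\bnu_{l,k,0}=(\btwo^{i+1}-\btwo)\bnu_{l,k,-1}+(1-\btwo)\sum_{j=1}^{i}\btwo^{i-j}g_{l,k,j}^{2}+(1-\btwo)(\btwo^{i}-1)g_{l,k,0}^{2}.
\end{equation*}
Since $\bnu_{l,k,0}\ge\btwo\,\bnu_{l,k,-1}$ and $|\btwo^{i+1}-\btwo|\le(1-\btwo)n$, the first summand is already of order $\mathcal{O}((1-\btwo)\bnu_{l,k,0})$; the other two are each bounded by $(n+1)(1-\btwo)\max_j g_{l,k,j}^{2}$, so proving the size match immediately gives the lemma (with $n$ treated as a constant).

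Next I would handle $\max_{j}g_{l,k,j}^{2}$ in two steps. First, apply Assumption~\ref{assum:regular} to pass from $|g_{l,k,j}|=|\partial_l f_{\tau_{k,j}}(\bw_{k,j})|$ to $|\partial_l f_{\tau_{k,j}}(\bw_{k,0})|$. The intra-epoch displacement $\|\bw_{k,j}-\bw_{k,0}\|$ is a sum of at most $n$ Adam steps; a standard bound on $|\bom_{k,i}|/(\xi+\sqrt{\bnu_{k,i}})$ shows that each coordinate of an individual step is $\mathcal{O}(\eta_k)$, so
\begin{equation*}
|g_{l,k,j}|\le C\,\max_{p\in[0,n-1]}|\partial_l f_p(\bw_{k,0})|+C'\eta_k.
\end{equation*}
The assumed $\Omega(\eta_k)$ lower bound on $\max_{p}|\partial_l f_p(\bw_{k,0})|$ absorbs the additive $\eta_k$, leaving $\max_j g_{l,k,j}^{2}=\mathcal{O}(\max_{p}|\partial_l f_p(\bw_{k,0})|^{2})$; it thus remains to lower bound $\bnu_{l,k,0}$ by $\max_p|\partial_l f_p(\bw_{k,0})|^{2}$, which reduces in turn to a lower bound on $\bnu_{l,k,-1}$.

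For that cross-epoch bound I would exploit that random reshuffling visits every sample exactly once per epoch, so for every past epoch $r<k$ the maximizing index $p^{\star}$ appears as $\tau_{r,j(r)}$ for some $j(r)$ and contributes a term of size $(1-\btwo)\btwo^{(k-r)n-1-j(r)}g_{l,r,j(r)}^{2}$ to $\bnu_{l,k,-1}$. Using $(L_0,L_1)$-smoothness,
\begin{equation*}
|g_{l,r,j(r)}|\ge |\partial_l f_{p^{\star}}(\bw_{k,0})|- L_{loc}\,\|\bw_{r,j(r)}-\bw_{k,0}\|,
\end{equation*}
and the cross-epoch displacement telescopes into a sum of past updates whose natural scale is $\eta_s\|\nabla f(\bw_{s,0})\|/\sqrt{\bnu_s}$. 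The exponent $(k-1-r)/2$ in the hypothesis is exactly what is needed to convert the $\btwo$-weighted contribution of epoch $r$ in $\bnu_{l,k,-1}$ (a $g^{2}$ scale) into a comparison at the $g$ scale against this displacement — the square root arises because we compare gradients against $\sqrt{\bnu}$. Under the hypothesis the total displacement is then dominated by $\max_p|\partial_l f_p(\bw_{k,0})|$, giving $\bnu_{l,k,-1}=\Omega(\max_p|\partial_l f_p(\bw_{k,0})|^{2})$ and closing the chain.

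The main obstacle I anticipate is precisely this cross-epoch lower bound: propagating the $(L_0,L_1)$ smoothness error across many past epochs without accumulating losses that spoil the final comparison, while simultaneously checking self-consistently that the trajectory between $\bw_{r,j(r)}$ and $\bw_{k,0}$ stays inside the $1/L_1$ regime in which Assumption~\ref{assum:regular} applies. The specific form $\sum_{r=1}^{k-1}\btwo^{(k-1-r)/2}\eta_r\|\nabla f(\bw_{r,0})\|+\eta_k$ of the hypothesis is engineered so that a single well-chosen past epoch already suffices. Once this step is carried out, the lemma follows by combining it with the two-part decomposition in the opening paragraph.
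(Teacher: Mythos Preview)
Your within-epoch decomposition and the reduction to the single size estimate $\max_j g_{l,k,j}^{2}=\mathcal{O}(\bnu_{l,k,0})$ are correct and match the paper (this is exactly how Corollary~\ref{coro: large_derivative} proceeds). The cross-epoch lower bound on $\bnu_{l,k,-1}$, however, has a gap.

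First, ``a single well-chosen past epoch already suffices'' is false: epoch $r$ contributes weight $(1-\btwo)\btwo^{(k-r)n-1-j(r)}$ to $\bnu_{l,k,-1}$, so one epoch gives at most $\mathcal{O}((1-\btwo))$ times the target, not $\Omega(1)$. You must sum over essentially all past epochs. The paper does exactly this in Lemma~\ref{lem: estimation_adaptor}, and in fact keeps \emph{all} $n$ samples in each past epoch (not only $p^{\star}$): pulling every $\partial_l f_{\tau_{k-t,j}}(\bw_{k-t,j})$ back to $\bw_{k,0}$ and summing over $j$ yields $\sum_i\partial_l f_i(\bw_{k,0})^{2}$ per epoch, which after the geometric sum gives the clean lower bound $\bnu_{l,k,0}\ge \frac{\btwo^{n}}{2n}\sum_i\partial_l f_i(\bw_{k,0})^{2}$.

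Second, the origin of the $\btwo^{(k-1-r)/2}$ weighting is not ``comparing gradients against $\sqrt{\bnu}$''. In the paper the pull-back is done by step-by-step telescoping, so Assumption~\ref{assum:regular} is only ever applied between \emph{adjacent} iterates (which are within $1/L_1$ by the bounded-update Lemma~\ref{lem: bounded_update}); this disposes of your $1/L_1$ concern. The drift from epoch $k-t$ to $k$ is then an \emph{unweighted} sum $\sum_{s=1}^{t}\mathcal{O}\big(\eta_{k-s}(L_0+L_1\sqrt{D_0}+L_1\sqrt{D_1}\Vert\nabla f(\bw_{k-s,\cdot})\Vert)\big)$; note the gradient norm enters through the $L_1$-part of the local Lipschitz constant, not through the step size, which is uniformly $\mathcal{O}(\eta_s)$ rather than $\eta_s\Vert\nabla f\Vert/\sqrt{\bnu_s}$. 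The $\btwo$ weighting appears only after multiplying this unweighted drift by the EMA weight $(1-\btwo)\btwo^{tn}$ and \emph{exchanging the order of summation} over $t$ and $s$: the coefficient of $\eta_{k-s}\Vert\nabla f(\bw_{k-s,\cdot})\Vert$ becomes $\sum_{t\ge s}(1-\btwo)\btwo^{tn}=\mathcal{O}(\btwo^{sn})$, which is then relaxed to $\sqrt{\btwo}^{(s-1)n}$ to match the form needed on the upper-bound side. With these two corrections your plan coincides with the paper's.
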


The idea of Lemma \ref{lem: main_difference_nu} is simple: since $\bnu_{k,i}=\btwo\bnu_{k,i-1}+(1-\btwo) \nabla  f_{\tau_{k,i}} (\bw_{\tau_{k,i}})^{\odot 2}$, the change of $\bnu_{k,i}$ w.r.t. $i$ should be small when $\btwo$ is large. However, we need to check that the relative size of $\nabla  f_{\tau_{k,i}} (\bw_{\tau_{k,i}})^{\odot 2}$ w.r.t. $\bnu_{k,i-1}$ is uniformly bounded across varying $\btwo$, otherwise the term $(1-\btwo) \nabla  f_{\tau_{k,i}} (\bw_{\tau_{k,i}})^{\odot 2}$ may not go to zero when $\btwo\rightarrow 1$. We resolve this challenge by expanding $\bnu_{k,i}$ in terms of squared gradients and bounding the gap between each of the terms and $\nabla  f_{\tau_{k,i}} (\bw_{\tau_{k,i}})^{\odot 2}$ by echoing $(L_0,L_1)$-smooth condition. We defer a detailed proof to Corollary \ref{coro: large_derivative} for details.

As a conclusion, if we denote those dimensions with large gradients (i.e., satisfying the requirement of Lemma \ref{lem: main_difference_nu}) as $\mathbb{L}_{large}^k$ and the rest as $\mathbb{L}_{small}^k$, Lemma \ref{lem: main_difference_nu} indicates that the $\mathbb{L}_{large}^k$ part (i.e., $\sum_{l\in \mathbb{L}_{large}^k} (\bw_{l,k+1,0}-\bw_{l,k,0})\partial_l f(\bw_{k,0})$) in the first order term can be bounded as 
\small
\begin{flalign*}
    & -\eta_k\sum_{l\in \mathbb{L}_{large}^k}\frac{\partial_l f(\bw_{k,0})}{\sqrt{\bnu_{l,k,i}}+\xi} \sum_{i} \partial_l f_{\tau_{k,i}} (\bw_{k,i})\\
    \approx& -\eta_k \sum_{l\in \mathbb{L}_{large}^k}
\left(\frac{\partial_l f(\bw_{k,0})^2}{\sqrt{\bnu_{l,k,0}}+\xi} +\mathcal{O}\left((1-\btwo)\frac{\partial_l \vert f(\bw_{k,0})\vert   \sum_{i} \vert\partial_l f_{\tau_{k,i}} (\bw_{k,i})\vert  }{\sqrt{\bnu_{l,k,0}}+\xi}\right)\right)
\\
    =& - \Omega \left( \eta_k\min\left\{\frac{\Vert\nabla f(\bw_{k,0})\Vert }{\sqrt{D_1 } },\frac{\Vert\nabla f(\bw_{k,0})\Vert^2 }{\sqrt{D_0 } }\right\}\right)
    +O(\eta_k(1-\btwo) \sqrt{D_0}).
\end{flalign*}
\normalsize

The last equation uses the affine noise assumption (Assumption \ref{assum:GC}), and we defer a detailed proof to Appendix \ref{appen: convergence}. A remaining problem is how to deal with those components in $\mathbb{L}_{small}^k$. We treat them as error terms. Concretely, $l\in \mathbb{L}_{small}^k$ indicates 
that $\partial_l f(\bw_{k,0}) =\mathcal{O}(\sum_{r=1}^{k-1}\btwo^{\frac{(k-1-r)}{2}}$ $\eta_r\Vert \nabla f(\bw_{r,0})\Vert +\eta_k)$. Applying it directly into $\sum_{l\in \mathbb{L}_{small}^k} (\bw_{l,k+1,0}-\bw_{l,k,0})\partial_l f(\bw_{k,0})$, we have
\begin{small}
\begin{align*}
 &-\eta_k\sum_{l\in \mathbb{L}_{large}^k}\frac{\partial_l f(\bw_{k,0})}{\sqrt{\bnu_{l,k,i}}+\xi} \sum_{i} \partial_l f_{\tau_{k,i}} (\bw_{k,i}) 
 \\
=&\mathcal{O}\left(\eta_k\left(\sum_{r=1}^{k-1}\btwo^{\frac{(k-1-r)}{2}}\eta
 _r\Vert \nabla f(\bw_{r,0})\Vert +\eta_k\right)\right),
\end{align*}
\end{small}
where the equation is because $\frac{\partial_l f_{\tau_{k,i}}(\bw_{k,i})}{\sqrt{\bnu_{l,k,i}}+\xi}$ is bounded (proved by Lemma \ref{lem: bounded_update}).

In order to upper bound the first order term, we then need to prove that $ - \Omega( \eta_k\min\{\frac{\Vert\nabla f(\bw_{k,0})\Vert }{\sqrt{D_1 } },\frac{\Vert\nabla f(\bw_{k,0})\Vert^2 }{\sqrt{D_0 } }\})$ dominates \\ $\mathcal{O}(\eta_k(\sum_{r=1}^{k-1}\btwo^{\frac{(k-1-r)}{2}}\eta
 _r\Vert \nabla f(\bw_{r,0})\Vert +\eta_k))$. This is not necessarily true, as the historical gradient norms in the latter term can be large.
 \begin{remark}
 We recognize this as the challenge brought by $(L_0,L_1)$-smooth condition, since the latter term degenerates to  $\mathcal{O}(\eta_k^2)$ with $L$-smooth condition, which is minor ($\sum_{k=1}^T\eta_k^2$ is only in order $\log T$).
 \end{remark}
 We address this challenge by noting that what we need to bound is the sum of the first order term. Fortunately, although we cannot upper bound the first order term in one single epoch, we can bound the sum of it across epochs. By a sum order change, the sum of  $\mathcal{O}(\eta_k(\sum_{r=1}^{k-1}\btwo^{\frac{(k-1-r)}{2}}\eta
 _r\Vert \nabla f(\bw_{r,0})\Vert +\eta_k))$ over $k$ equals to  $\mathcal{O}(\sum_{k=1}^T$ $\eta_k^2\Vert \nabla f(\bw_{k,0})\Vert+\ln T)$. This is smaller by the sum of $ - \Omega( \eta_k\min\{\frac{\Vert\nabla f(\bw_{k,0})\Vert }{\sqrt{D_1 } },\frac{\Vert\nabla f(\bw_{k,0})\Vert^2 }{\sqrt{D_0 } }\})$ by order of $\eta_k$ except a $\ln T$ term due to the mean value inequality, i.e., 
 \begin{equation*}
     \eta_k^2 \Vert \nabla f(\bw_{k,0})\Vert\le \mathcal{O}(\eta_k^2)+\mathcal{O}\left(\eta_k^2\sqrt{\frac{D_1}{D_0}} \Vert \nabla f(\bw_{k,0})\Vert^2\right).
 \end{equation*} We then conclude the sum of the first order term can be bounded by $ - \Omega( \eta_k\min\{\frac{\Vert\nabla f(\bw_{k,0})\Vert }{\sqrt{D_1 } },\frac{\Vert\nabla f(\bw_{k,0})\Vert^2 }{\sqrt{D_0 } }\})+\mathcal{O}(\ln T)$.

\begin{remark}[Difficulty compared to the analysis under $L$-smooth condition] Here we illustrate the challenge brought by stepping beyond $L$-smooth condition.  First of all, the change of $\bnu_{k,i}$ is easier to bound without the historical gradient term due to the absence of the gradient norm in the bound of local smoothness. Secondly, under $L$-smooth condition, the error does not contain historical gradient information and is only in order of $\mathcal{O}(\eta_k^2)$, which is easy to bound.
 \end{remark}




\textbf{Stage II: adding the momentum.} The second order term of Adam can be bounded similarly. However, the analysis of the first order term becomes more challenging even though we still have $\bnu_{k,i}\approx \bnu_{k,0}$. Specifically, even with constant $\bnu_{k,i}=\bnu_{k,0}$, $-\eta_k\langle\sum_i \frac{\bom_{k,i}}{\sqrt{\bnu_{k,i}}+\xi},- \nabla f(\bw_{k,0})\rangle>0$  is not necessarily correct, as the momentum $\bom_{k,i}$ contains a heavy historical signal, and may push the  update away from the negative gradient direction.

We resolve this challenge by observing that the alignment of $\bw_{k+1,0}-\bw_{k,0}$ and $-\nabla f(\bw_{k,0})$ is required due to that our analysis is based on the potential function $f(\bw_{k,0})$. However, while this potential function is suitable for the analysis of RMSProp, it is no longer appropriate for Adam based on the above discussion. We need to construct another potential function. Our construction of the potential function is based on the following observation: we revisit the update rule in Algorithm~\ref{alg:def_adam} and rewrite it  as
   $\frac{ \bom_{k,i}-\bone \bom_{k,i-1}}{1-\bone}=\nabla f_{\tau_{k,i}} (\bw_{k,i}).$

Notice that the right-hand-side of the above equation contains no historical gradients but only the gradient of the current step! By dividing $(\sqrt{\bnu_{k,i}}+\xi)/\eta_k$ above,
\small
\begin{align*}
 \frac{\bw_{k,i+1}-\bw_{k,i}-\bone(\bw_{k,i}-\bw_{k,i-1}) }{1-\bone}
 \approx &-\frac{\eta_k}{\sqrt{\bnu_{k,0}}+\xi \mathds{1}_d}
 \odot \frac{ \bom_{k,i}-\bone \bom_{k,i-1}}{1-\bone}
 \\
 =&-\frac{\eta_k}{\sqrt{\bnu_{k,0}}+\xi\mathds{1}_d}\odot\nabla f_{\tau_{k,i}} (\bw_{k,i}).
\end{align*}
\normalsize
After simple rearrangement, one can see that the sequence $\{\frac{\bw_{k,i}-\bone\bw_{k,i-1}}{1-\bone}\}$ are (approximately) doing SGD within one epoch (with coordinate-wise but constant learning rate $\bnu_{k,i}$)! We define
\begin{equation*}
    \bu_{k,i}\triangleq \frac{\bw_{k,i}-\bone\bw_{k,i-1}}{1-\bone}.
\end{equation*}
Then, further notice that the distance between $\bu_{k,i}= \bw_{k,i}+\bone\frac{\bw_{k,i}-\bw_{k,i-1}}{1-\bone}$ and $\bw_{k,i}$ is in order of one step's update, and thus $\bu_{k,i}\approx \bw_{k,i}$ .  Therefore, we choose our potential function as $f(\bu_{k,i})$. The Taylor's expansion of $f$ at $\bu_{k,0}$ then provides a new descent lemma, i.e.,
\begin{small}
\begin{align}
\nonumber
   \begin{matrix} f(\bu_{k+1,0}) -f(\bu_{k,0}) \le\\
   \quad\vspace{2mm}
   \end{matrix}&
   \begin{matrix} \underbrace{  \langle \bu_{k+1,0}-\bu_{k,0},\nabla f(\bu_{k,0}) \rangle} \\ \text{First Order} \end{matrix} 
   \begin{matrix} \\
    \quad\vspace{2mm}
   \end{matrix}
   \\
&\begin{matrix}\underbrace{~+~\frac{L_{0}+L_1 \Vert\nabla f(\bw_{k,0}) \Vert  }{2}\Vert\bw_{k+1,0}-\bw_{k,0} \Vert^2,}
   \\
    \text{Second Order}
   \end{matrix}\label{eq: u_loss}
\end{align}
\end{small}

By noticing $\bw_{k,i}\approx \bu_{k,i}\approx\bu_{k,0}$, the first order term can be further approximated by $
 -\langle \frac{\eta_k}{\sqrt{\bnu_{k,0}}+\xi \mathds{1}_d}\odot\nabla f (\bw_{k,0}),\nabla f(\bw_{k,0}) \rangle$
which is negative. The rest of the proof is the same as that of Stage I.

\begin{remark}[On Why State-of-the-Art Results Do Not Achieve Trajectory-Wise Convergence as Ours]
The state-of-the-art analysis of RR Adam under the $L$-smooth condition, as presented by \citet{Zhang2022Adam}, also addresses the misalignment between $\bw_{k+1,0}-\bw_{k,0}$ and $-\nabla f(\bw_{k,0})$. However, their approach does not employ a potential function, resulting in convergence results that are restricted to in-expectation guarantees. Specifically, \citet{Zhang2022Adam} manage this misalignment by assuming a uniform distribution over all possible shuffling orders and demonstrating that, under this assumption, the expected value of $\bw_{k+1,0}-\bw_{k,0}$ is approximately equal to $-\nabla f(\bw_{k,0})$. In contrast, our methodology introduces an auxiliary function, $f(\bu_{k,i})$, and examines the dynamics of $\bu_{k,i}$. This approach shifts the challenge from aligning $\bw_{k+1,0}-\bw_{k,0}$ with $-\nabla f(\bw_{k,0})$ to aligning $\bu_{k+1,0}-\bu_{k,0}$ with $-\nabla f(\bw_{k,0})$. Such a strategy simplifies the analytical process and facilitates the demonstration of trajectory-wise convergence.
\end{remark}

\begin{remark}[Similar potential functions in the existing literature.] We notice that similar potential functions have already been applied in the analysis of other momentum-based optimizers, e.g., momentum (S)GD in \citep{ghadimi2015global} and \citep{liu2020improved}. However, extending the proof to Adam is highly-nontrivial. The key difficulty lies in showing that the first-order expansion of $f(\bu_{k,0})$ is positive, which further requires that the adaptive learning rate does not change much within one epoch. This is hard for Adam as the adaptive learning rate of Adam can be non-monotonic. The lack of L-smooth condition makes the proof even challenging due to the unbounded error brought by gradient norms.
\end{remark}

}

\section{Comparison Between Adam and SGD}
\label{sec: counter_sgd}

Now we compare the convergence rate of Adam with SGD. To do so, we need a lower bound of SGD {\bf in the same setting} as Theorem \ref{thm:rate}. There are several existing lower bounds of SGD under $(L_0,L_1)$ smoothness condition (e.g., \citep{zhang2019gradient,crawshaw2022robustness}). However, we find these lower bounds cannot be directly applicable for comparison with Adam. This is because:

\begin{itemize}
    \item 1) In the lower bound of \citep{zhang2019gradient,crawshaw2022robustness}, they pick the learning rate \emph{before} the construction of the objective function and initialization point (we restate their lower bound in Appendix B.1 for completeness). In other words, it is possible that if we fix the objective function and tune the learning rate (which is a common practice in the training of deep neural networks), SGD can converge very fast. For rigorous comparison with Adam, we need a lower bound with reversed ordering. That is, we need the following statement:
  ``consider a fixed objective function and initialization point, then no matter how we pick the learning rate, SGD suffers from a certain rate. "
   \item 2) The lower bounds of SGD in \citep{zhang2019gradient,crawshaw2022robustness} require  constant learning rate. However, since Adam in Theorem \ref{thm:rate} uses diminishing-learning-rate, we aim to establish a lower bound of SGD with diminishing learning rate.
\end{itemize}

Unfortunately, there is no existing lower bound that satisfies the above two properties. 
In the following theorem, we provide a refined lower bound of SGD in the setup that we desired.

\begin{theorem}
\label{thm: sgd_diminishing}
For any $L_0,L_1, T>0$, there exists an objective function $f$ obeying Assumption \ref{assum:regular}, and an initialized parameter $\bw_0$ satisfying $M=\sup\{\Vert\nabla f (\bw)\Vert: f(\bw)\le f(\bw_0) \}$, such that $\forall \eta_1>0$, the iterations of SGD $\{\bw_t\}_{t=0}^{\infty}$ satsifies $\min_{t\in T}\Vert \nabla f(\bw_t)\Vert^2 =\Omega(M(f(\bw_0)-\min_{\bw\in \mathbb{R}^d}f(\bw))/\sqrt{T})$.
\end{theorem}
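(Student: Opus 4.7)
The plan is to exhibit an explicit one-dimensional $(L_0,L_1)$-smooth objective $f$ for which no choice of $\eta_1$ allows SGD to drive $\min_t \|\nabla f(\bw_t)\|^2$ below the claimed threshold. Concretely, I would construct $f$ with a long linear descent region of slope $-M$ and length $R = \Theta(\sqrt{T}/L_1)$, terminating in a smooth quadratic basin of width $\Theta(1/L_1)$ and curvature $\Theta(L_1 M)$, with a symmetric mirror-linear region of slope $+M$ on the far side. The basin curvature saturates the $(L_0,L_1)$-smooth bound at the edge of the basin and matches $L_0$ at its bottom, so the construction obeys Assumption~\ref{assum:regular}. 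Placing $\bw_0$ at the top of the descent region ensures $M = \sup_{f(\bw)\le f(\bw_0)} \|\nabla f(\bw)\|$ and $f(\bw_0) - f^* = \Theta(MR) = \Theta(M\sqrt{T}/L_1)$; the target lower bound therefore reduces to showing $\min_t \|\nabla f(\bw_t)\|^2 = \Omega(M^2/L_1)$.

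The argument splits into two regimes of $\eta_1$. In the \emph{small-step} regime $\eta_1 \le c/(L_1 M)$, the total displacement is bounded by $|\bw_T - \bw_0| \le M\sum_{t=1}^T \eta_t \le 2\eta_1 M\sqrt{T} \le 2c\sqrt{T}/L_1 < R$, so iterates remain inside the linear descent region, where $|\nabla f| = M$ identically, and $\min_t \|\nabla f(\bw_t)\|^2 = M^2$ trivially. In the \emph{large-step} regime $\eta_1 > c/(L_1 M)$, I will show that whenever an iterate first enters the basin, the effective quantity $\eta_t L_1 M$ still exceeds one, so SGD overshoots and lands in the mirror-linear region with $|\nabla f| = M$ on the other side; continued oscillation keeps $|\nabla f(\bw_t)| = M$ at every iterate, again yielding $\min_t \|\nabla f(\bw_t)\|^2 = \Omega(M^2)$.

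The main obstacle is the intermediate regime, where the diminishing schedule $\eta_t = \eta_1/\sqrt{t}$ could in principle shrink the step size below the stability threshold $1/(L_1 M)$ by the time the iterate first enters the basin, which would permit stable, exponentially fast convergence inside the basin and ruin the lower bound. Ruling this out is precisely what forces the choice $R = \Theta(\sqrt{T}/L_1)$: solving $R = \Theta(\eta_1 M \sqrt{t^*})$ for the entry step $t^*$ shows that whenever $t^* \le T$, one necessarily has $\eta_{t^*} \ge c/(L_1 M)$, so overshoot is unavoidable. A secondary subtlety is verifying that the oscillating iterates in the large-step case never accidentally land in the basin's interior; here I would use a pigeonhole-style argument, together with the strict monotonicity of $\eta_t$ and the symmetry of the construction, to show that any such landing forces the very next iterate back into a linear region with gradient magnitude $M$. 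Combining both regimes with $f(\bw_0) - f^* = \Theta(M\sqrt{T}/L_1)$ gives $\min_t \|\nabla f(\bw_t)\|^2 = \Omega(M^2/L_1) = \Omega\bigl(M(f(\bw_0)-f^*)/\sqrt{T}\bigr)$, as claimed.
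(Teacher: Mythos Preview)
Your overshooting argument in the large-step regime does not close. Concretely, the claim ``whenever $t^* \le T$, one necessarily has $\eta_{t^*} \ge c/(L_1 M)$'' is false for your choice $R = \Theta(\sqrt{T}/L_1)$. From $R = \Theta(\eta_1 M \sqrt{t^*})$ one gets $\eta_{t^*} = \eta_1/\sqrt{t^*} = \Theta(\eta_1^2 M / R) = \Theta(L_1 M \eta_1^2/\sqrt{T})$. The condition $t^* \le T$ is equivalent to $\eta_1 \gtrsim 1/(L_1 M)$, but $\eta_{t^*} \gtrsim 1/(L_1 M)$ requires the much stronger $\eta_1 \gtrsim T^{1/4}/(L_1 M)$. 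So for any $\eta_1$ in the intermediate window $\bigl[c/(L_1M),\,c' T^{1/4}/(L_1M)\bigr]$, the iterate first enters the basin at some $t^* = \Theta(T)$ with $\eta_{t^*} = O\bigl(1/(L_1 M \sqrt{T})\bigr)$, well below the stability threshold. With $\Theta(T)$ remaining steps at this scale, GD on the quadratic basin contracts the gradient by a factor $\prod_{t\ge t^*}(1-\Theta(1/\sqrt{T})) = \exp(-\Theta(\sqrt{T}))$, driving $\min_t |\nabla f(\bw_t)|$ exponentially below your target $\Omega(M/\sqrt{L_1})$. A bounded-curvature basin simply cannot sustain instability under a $1/\sqrt{t}$ schedule; your ``pigeonhole'' backup addresses a different failure mode (landing inside the basin mid-oscillation) and does nothing for this gentle-entry scenario. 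A secondary issue: a genuinely quadratic basin of curvature $\Theta(L_1 M)$ violates $(L_0,L_1)$-smoothness at its minimizer unless $M \le L_0/L_1$, which would trivialize the bound.

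The paper avoids exactly this gap by working in two dimensions with $f(x,y)=f_1(x)+f_2(y)$, decoupling the two regimes. For large $\eta_1$ they use an \emph{exponential} component $f_1(x) \propto e^{L_1|x|}$ in the first coordinate and prove by induction that $|x_{k+1}| \ge \sqrt{2}\,|x_k|$ for all $k$: because the gradient (and hence the effective curvature) grows exponentially in $|x_k|$, the overshoot threshold $\sim |x_k| e^{-L_1|x_k|}$ shrinks \emph{geometrically}, outpacing the merely polynomial $1/\sqrt{k}$ decay of $\eta_k$. This sustained divergence is precisely what your bounded-curvature basin cannot deliver. The small-$\eta_1$ regime is then handled by a separate Huber-type coordinate $f_2(y)$ with constant slope $\varepsilon$, exactly as in your linear-region argument. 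The two-dimensional split is not cosmetic; it is what lets the construction be fixed before $\eta_1$ is chosen.
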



The proof can be in Appendix \ref{appen: diverging}. The proof idea is mainly motivated by \cite{zhang2019adam}. We highlight some differences when we try to reach the two properties mentioned previously. 

\begin{itemize}
      \item To reverse the ordering of ``picking learning rate and functions \& initialization", we simply augment the worst-case example in \cite{zhang2019adam} into 2 dimensional space. It turns out this simple trick is effective in the proof.
    \item To change constant learning rate into diminishing learning rate, we show that:  when the initial learning rate $\eta_0$ is larger than a certain threshold, the decay rate of the learning rate cannot offset the curvature explosion along the iteration, causing divergence; on the other hand, when initial $\eta_0$  is small, it would lead to slow convergence. This is a new finding in $(L_0,L_1)$ setting. We prove this result by mathematical induction. This part of the discussion is not required in the lower bound of   \cite{zhang2019adam} with  constant learning rate.
\end{itemize}


\paragraph{Comparison between Adam and SGD.}
Finally, we discuss the implication the lower bound of SGD (Theorem \ref{thm: sgd_diminishing}) and the upper bound of Adam (Theorem \ref{thm:rate}). 
In the lower bound of  SGD, there is an extra constant $M$  which does not appear in the upper bound of Adam. 
This allows us to compare the convergence rates of these two algorithms. 

We summarize our findings as follows. We emphasize that Theorem \ref{thm:rate}  and Theorem \ref{thm: sgd_diminishing} share exactly the same setting: both consider function class under the same assumptions; both SGD and Adam use diminishing learning rate. Therefore, the following comparison is rigorous.  

\textbf{Finding 1: } When $D_0 = 0$. 
Adam converges to stationary point with rate $\mathcal{O}\left(\frac{1}{T}\right)$ while GD converges with rate $\mathcal{O}\left(\frac{1}{\sqrt{T}}\right)$. So Adam converges (to stationary points) faster.

\textbf{Finding 2: } When $D_0 > 0$. There exists a set of $\bw$ with infinite Lebesgue measure, such that, when starting at any $\bw$ in this set, Adam converges (to the neighborhood of stationary points) faster than SGD.

    Note that the above statement “algorithm 1 converges faster than algorithm 2” does not mean that algorithm 1  always converges faster than algorithm 2. For sure, rarely can anyone make such a strong statement. The above statement actually means that “the worst-case complexity of algorithm 1 is faster than that of algorithm 2, and both complexity bounds can be simultaneously achieved when working on the  same function and starting at the same initialization”. This definition is adopted from  \citep{sun2021worst}, and it is a   widely accepted definition in the optimization field.

    \begin{proof}
     \textbf{Finding 1} can be directly proved by plugging $D_0= 0$ into Theorem \ref{thm:rate} and squaring the inequality. 
    We now prove \textbf{Finding 2}. First, we  state an important fact from the proof of Theorem \ref{thm: sgd_diminishing}.

\paragraph{Fact 1:} For the counter-example we constructed in Theorem \ref{thm: sgd_diminishing}.  $M=\sup\{\Vert\nabla f (\bw)\Vert: f(\bw)\le f(\bw_0) \}$ goes to infinity as $\|\bw\|$ goes to infinity.  Further, for any $C >0$, the set $\{\bw: M >C \}$ is of infinite Lebesgue measure.

    Based on \textbf{Fact 1}, for the worst-case example in Theorem \ref{thm: sgd_diminishing}, there must exist a region in $\mathbb{R}^d$ where  $M$ is larger than all the constant terms in the upper bound of Adam in Theorem \ref{thm:rate}. Further, 
    Such region is of infinite Lebesgue measure. When running Adam and SGD simultaneously on this worst-case example starting from any $\bw$ in this region, the constants in the upper bound of Adam is smaller than the constants in the lower bound of SGD. Since the upper and lower bounds share the same rate,  so Adam is faster.  Note that there is an additional constant term in the upper bound of Adam \eqref{eq: convergent_rate}, so we conclude that Adam converges to the neighborhood of stationary points faster than SGD.
    \end{proof}
    
    Note that when  $D_0 > 0$, Adam is still guaranteed to converge faster, but only to the neighborhood in lieu of the exact stationary points. We emphasize that this ``neighborhood" \textit{cannot} be eliminated since there is a counter-example showing that \textbf{Adam cannot reach 0 gradient when $D_0 > 0$} (see Figure \ref{fig: D0}). So this is an intrinsic property of Adam, rather than the limitation of the theory. 
    Nevertheless, we believe the effect of ``not converging to exact stationary points" is  minor in practice. This is because: 1) As shown in Theorem \ref{thm:rate} and Figure \ref{fig: D0}, the size of the ``ambiguity zone" is inversely proportional to $\beta_2$. Since $\beta_2$ is often chosen to be close to 1, the ambiguity zone shrinks and becomes negligible.  2) Machine learning tasks do not pursue high-precision solutions (as much as other fields like PDE). Practitioners usually aim to efficiently find approximate solutions, rather than exact solutions that over-fit the training data.

To our knowledge, the discussion above is the first time that Adam and SGD are rigorously compared in the same setting where the advantage of Adam can be revealed. 
We believe these results shed new light on understanding the benefit of  Adam.

Finally, we briefly explain why the upper bound of Adam is independent of $M$. Intuitively, this is because: (1) it uses different learning rates for different components of $\bw$. (2) For each component  of $\bw$, the effective learning rate adjusts according to the gradient norm (thus according to the local smoothness). Even though the initial effective learning rate is small, it gets larger when moving in a flat landscape. Combining together, the initial learning rate of Adam can be independent of $M$, and so is its convergence rate.

\section{Discussion}  
  \label{sec: discuss}

\textbf{Adam's Advantage over Gradient Descent with Gradient Clipping.}  
\citet{zhang2019gradient} established that gradient descent (GD) and stochastic gradient descent (SGD) with gradient clipping are convergent under the $(L_0,L_1)$ smooth condition. A natural inquiry arises concerning the benefits of Adam over GD/SGD when gradient clipping is employed. While we lack robust theoretical backing to fully answer this question, one discernible advantage of Adam, as inferred from our results, is its capability to manage more intricate noise profiles that adhere to the affine variance noise assumption. In contrast, the current analyses of GD/SGD with gradient clipping within the $(L_0,L_1)$-smooth framework presuppose that the deviation between the stochastic gradient and the true gradient is uniformly bounded with certainty—an assumption more stringent than the one we consider. Indeed, a recent work \cite{faw2022power} demonstrates that there exists a counterexample satisfying Assumption \ref{assum:GC} over which SGD with gradient clipping fails to converge. Together with Theorem \ref{thm:rate}, their result demonstrate that a wide range of application scenario of Adam than SGD with gradient clipping.
  
\textbf{Insights for Practitioners.}  Here we discuss the insights our Theorem \ref{thm:rate} can provide to practitioners.
Firstly, the widespread adoption of Adam among practitioners, evidenced by its extensive citation record, underscores the importance of a theoretical understanding of the algorithm.  
  
Secondly, our findings offer theoretical support for a prevalent practice among practitioners: for tasks involving architectures like Transformers and LSTMs, Adam is often favored over SGD.  
  
Lastly, in light of the convergence criteria delineated in Theorem \ref{thm:rate}, we propose practical guidance for hyperparameter selection when employing Adam. Specifically, we recommend increasing $\beta_2$ and experimenting with various $\beta_1$ values that satisfy $\beta_1 < \sqrt{\beta_2}$. This heuristic could potentially reduce the computational burden associated with exhaustive hyperparameter exploration across the $(\beta_1,\beta_2)$ space.  

\textbf{Limitations of Theorem \ref{thm:rate}.} While Theorem \ref{thm:rate} represents a significant advancement in establishing the convergence of RR Adam under non-uniform smoothness conditions, it is not without its limitations. Specifically, Theorem \ref{thm:rate} applies to cases where $\bone = 0$, implying the absence of momentum. The theorem does not distinguish between the convergence rates when $\bone = 0$ and when $\bone > 0$, thus not demonstrating the benefits of momentum. The theoretical elucidation of momentum's advantage within Adam's convergence analysis remains a complex question. The role of momentum is not fully understood even in momentum SGD for non-convex optimization \cite{liu2020improved}, much less so for Adam. We acknowledge the importance of this question but consider it beyond the scope of this paper, leaving it for future research.

\section{Conclusions and  Future directions}
\label{sec: conclusion}

In this paper, we have taken a pioneering step towards a theoretical understanding of the adaptivity inherent in the Adam optimization algorithm. We present the first convergence results for RR Adam under the ($L_0$, $L_1$)-smooth condition, which is both realistic and closely aligned with practical scenarios. In contrast to existing analyses of RR Adam under the stronger $L$-smooth condition, our results further demonstrate a more robust form of convergence, specifically trajectory-wise convergence, and indicate a reduced distance to the stationary point.

\textbf{Future Directions.} An intriguing avenue for future research lies in delineating the advantages of incorporating momentum in Adam. Our Theorem \ref{thm:rate} indicates an identical convergence rate for both $\bone=0$ (RMSProp) and $\bone>0$ (Adam), implying that the current analysis does not differentiate between the iteration complexities of Adam and RMSProp. Consequently, the specific benefits of \textit{momentum} in Adam remain elusive. This presents a substantial challenge, given that the impact of momentum is not yet fully understood even in the context of SGD with momentum. A possible strategy could be to first establish a theoretical foundation for the advantages of momentum in SGD, followed by extending these insights to the analysis of Adam. Moreover, it would be compelling to explore whether Adam can effectively manage more severe smoothness conditions, such as those bounded by a higher-order polynomial of the gradient norm.


\paragraph{Supplementary materials} Supplementary materials including proofs can be found at \url{https://arxiv.org/abs/2208.09900}.

\paragraph{Acknowledgement}
This work is founded by the Strategic Priority Research Program of the Chinese Academy of Sciences under Grant No. XDB0680101, CAS Project for Young Scientists in Basic Research under Grant No. YSBR-034, Innovation Project of ICT CAS under Grants No. E261090, NSFC under Grant No. 12326608, Hetao Shenzhen-Hong Kong Science and Technology Innovation Cooperation Zone Project under Grant No.HZQSWS-KCCYB-2024016.

\bibliographystyle{ACM-Reference-Format}
\balance
\bibliography{related}

\appendix

\appendix
\onecolumn

\section{Proof of Theorem \ref{thm: sgd_diminishing}}
\label{appen: diverging}
In this section, we prove  Theorem \ref{thm: sgd_diminishing}. We consider the following function with variable $\bw = (x,y) \in \mathbb{R}^2$: $f(\bw)=f((x,y))=f_1(x)+ f_2(y)$, where

\begin{equation}
		f_1(x)=\left\{
		\begin{aligned}
		&\frac{L_0\exp^{L_1x -1 }}{L_1^2}&,  x\in [\frac{1}{L_1},\infty), \\
		&\frac{L_0x^2}{2} + \frac{L_0}{2L_1^2}&,  x\in [-\frac{1}{L_1},\frac{1}{L_1}], \\
		  &\frac{L_0\exp^{-L_1 x-1}}{L_1^2}&,  x\in (-\infty,-\frac{1}{L_1}].
		\end{aligned}
		\right.
   \label{lowerbound_f1}
\end{equation}
\begin{equation}
		f_2(y)=\left\{
		\begin{aligned}
		&\varepsilon(y-1)+\frac{\varepsilon}{2}&,  y\in [1,\infty), \\
		&\frac{\varepsilon }{2} y^2&,  y\in [-1,1], \\
		  &-\varepsilon(y+1)+\frac{\varepsilon}{2}&,  y\in (-\infty,-1].
		\end{aligned}
		\right.
  \label{lowerbound_f2}
\end{equation}

The construction of both functions  \eqref{lowerbound_f1} and \eqref{lowerbound_f2} are motivated by \citet{zhang2019gradient}. One improvement here is that we introduce a single function  with variable $\bw \in \mathbb{R}^2$ $f(\bw)=f((x,y))=f_1(x)+ f_2(y)$, which  helps us to derive a stronger conclusion, i.e., the constructed $f$ is independent of $\eta_1$. 
It is easy to see that this $f(\bw)$ satisfies $(L_0,L_1)$ condition with $L_0 = L_0$ and $L_1 = L_1$. We now restate Theorem \ref{thm: sgd_diminishing} as follows with constants specified.

\begin{theorem}[Theorem \ref{thm: sgd_diminishing}, restated]
\label{theorem_decay_lr_lower_bound_appendix}
    Consider function  $f(\bw)=f((x,y))=f_1(x)+ f_2(y)$ with  $f_1(x)$ and $f_2(y)$ defined in \eqref{lowerbound_f1} and \eqref{lowerbound_f2}. Consider gradient descent with diminishing learning rates: $\bw_{k+1} =\bw_{k} - \eta_k \nabla f(\bw_{k})  $, where $ \eta_k = \frac{\eta_1}{\sqrt{k}} $. Then for any $T, M, \bar{f}>0$, denote $\varepsilon = \sqrt{\frac{\frac{L_1M}{2}+\frac{L_0}{4}}{2(1+\sqrt{2}) (\log (\frac{L_1M}{2L_0}+\frac{1}{4})+1)}\frac{\bar{f}}{4\sqrt{T}}}$. As long as $M>\{\frac{2(e^{\frac{\log 2}{\sqrt{2}-1}-1}-\frac{1}{4})L_0}{L_1 },\varepsilon\}$ and $\frac{\bar{f}}{\varepsilon} > 6$, there exists an initialization $\bw_0= (x_0,y_0)$ such that, $M = \sup\{ \|\nabla f(\bw) \| :  \bw \text{ such that } f(\bw) \leq f(\bw_0)\} $ and $f(\bw_0)-f(\bw) =\bar{f}$, and for any $\eta_1>0$,  $\|\nabla f(\bw_k)\|\geq \epsilon$ whenever $k < T$.
   
    
\end{theorem}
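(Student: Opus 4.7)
The plan is to exploit the separable structure $f(\bw) = f_1(x) + f_2(y)$ so that each coordinate obstructs a different range of step-sizes: the $y$-coordinate drifts too slowly in the linear regime if $\eta_1$ is small, while the $x$-coordinate blows up exponentially if $\eta_1$ is large. I first set up the initialization $\bw_0 = (x_0, y_0)$ with $x_0 > 1/L_1$ inside the exponential regime of $f_1$ and $|y_0| > 1$ inside the linear regime of $f_2$. On the sublevel set $\{f(\bw) \le f(\bw_0)\}$, the gradient norm $\sqrt{|f_1'(x)|^2 + |f_2'(y)|^2}$ is maximized at $(x^*, 0)$ with $f_1(x^*) = f(\bw_0)$, which gives $M \approx L_1 f(\bw_0)$; conversely $|y_0|$ is pinned down by the energy relation $\bar f = f_1(x_0) - \min f_1 + \varepsilon(|y_0| - 1/2)$. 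The hypotheses $M > \varepsilon$ and $\bar f/\varepsilon > 6$ make the construction self-consistent and make $|y_0|$ large enough for the subsequent case analysis.

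I then split on whether $\eta_1$ is above or below the threshold $\eta^\star := 2 x_0/|f_1'(x_0)|$, which is exactly the step at which one GD update at $x_0$ reflects across zero to $-x_0$. In the small regime $\eta_1 \le \eta^\star$: since $|f_2'(y)| \le \varepsilon$ and $\sum_{k < T} \eta_k \le 2\eta_1\sqrt{T}$ (from $\sum_{k=1}^T 1/\sqrt{k} \le 2\sqrt{T}$), the triangle inequality yields $|y_k| \ge |y_0| - 2\eta_1 \varepsilon \sqrt{T} > 1$ for all $k < T$. Thus $y_k$ stays in the linear regime and $\|\nabla f(\bw_k)\| \ge |f_2'(y_k)| = \varepsilon$ throughout.

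The large regime $\eta_1 > \eta^\star$ is the main obstacle. I will prove by induction that $|x_k|$ is non-decreasing and remains in $(1/L_1, \infty)$, so that $\|\nabla f(\bw_k)\| \ge |f_1'(x_k)| \ge |f_1'(x_0)| \ge \varepsilon$. The base step follows from $\eta_1 |f_1'(x_0)| > 2 x_0$, which forces $x_1 = x_0 - \eta_1 f_1'(x_0)$ to satisfy $|x_1| > x_0$ with opposite sign. For the inductive step, the overshoot condition $\eta_k |f_1'(x_k)| > 2 |x_k|$ must propagate to step $k+1$: the ratio $|f_1'(x_{k+1})|/|f_1'(x_k)| = \exp(L_1(|x_{k+1}| - |x_k|))$ is exponential in the magnitude gain from the previous step, whereas $\eta_{k+1}/\eta_k = \sqrt{k/(k+1)} \ge 1/\sqrt{2}$ decays only mildly. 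The delicate piece is that the first amplification $|x_1| - x_0$ must itself be large enough to give an exponential factor $\exp(L_1(|x_1| - x_0))$ that beats the worst-case decay $1/\sqrt{2}$; the quantitative hypothesis $M > 2(e^{\log 2/(\sqrt{2} - 1) - 1} - 1/4)L_0/L_1$ is calibrated precisely so this first amplification seeds the induction, after which subsequent amplifications grow geometrically and trivially dominate the $1/\sqrt{k}$ decay.

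The final compatibility check is that every $\eta_1 > 0$ belongs to one of the two regimes: the upper endpoint of the small regime must meet the lower endpoint $\eta^\star$ of the large regime, i.e., $\eta^\star \le (|y_0| - 1)/(2\varepsilon \sqrt{T})$. Substituting $x_0 = (\log(M L_1/L_0) + 1)/L_1$ and the expression for $|y_0|$ in terms of $\bar f$ reduces this to a quadratic inequality in $\varepsilon$, whose largest admissible root is exactly the value $\varepsilon = \sqrt{(L_1 M/2 + L_0/4)\bar f/(8(1+\sqrt{2})(\log(L_1 M/(2L_0) + 1/4) + 1)\sqrt{T})}$ declared in the theorem. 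Combining both regimes then yields $\min_{k < T} \|\nabla f(\bw_k)\|^2 \ge \varepsilon^2 = \Omega(M\bar f/\sqrt{T})$, completing the proof plan. I expect the trickiest step to be tracking the bootstrapping of the overshoot induction through the first few iterations, where the constants from the hypotheses must be used sharply.
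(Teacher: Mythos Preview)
Your two-regime plan and the small-$\eta_1$ argument are exactly the paper's approach. The gap is in the large-$\eta_1$ regime: the threshold $\eta^\star = 2x_0/|f_1'(x_0)|$ is too low to seed the induction. At $\eta_1$ just above $\eta^\star$ you get $x_1 = x_0 - \eta_1 f_1'(x_0)$ with $|x_1|$ only barely above $x_0$, so $e^{L_1(|x_1|-x_0)}$ is barely above $1$, while $\eta_2/\eta_1 = 1/\sqrt 2$. Concretely, writing $r_k = \eta_k|f_1'(x_k)|/|x_k|$, one step gives $|x_{k+1}| = (r_k-1)|x_k|$ and
\[
r_{k+1} \;=\; \sqrt{\tfrac{k}{k+1}}\cdot e^{L_1(r_k-2)|x_k|}\cdot \tfrac{r_k}{r_k-1},
\]
which for $r_1$ near $2$ and $k=1$ is strictly below $2$. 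So the overshoot condition does \emph{not} propagate; ``non-decreasing'' is all you get, and that is not enough to beat $1/\sqrt{k}$. Your claim that the hypothesis on $M$ ``calibrates the first amplification'' is the misstep: that hypothesis fixes $x_0$ (it forces $L_1 x_0 > \log 2/(\sqrt 2 -1)$), but it says nothing about $|x_1|-x_0$, which is governed by $\eta_1$.

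The fix is to raise the threshold to $\eta^\star = (1+\sqrt 2)\,x_0/|f_1'(x_0)|$. Then $|x_1|\ge \sqrt 2\,x_0$ automatically, and the role of the $M$-hypothesis becomes clear: it guarantees $(\sqrt 2-1)L_1 x_0 > \log 2$, hence $e^{L_1(|x_1|-x_0)} > 2$, which exactly offsets the worst learning-rate decay $\eta_2/\eta_1 = 1/\sqrt 2$ together with the growth $|x_1|/|x_0|=\sqrt 2$. With this, the invariant $r_k\ge 1+\sqrt 2$ propagates for all $k\ge 1$ (the paper isolates the monotonicity statement $|y|/e^{L_1|y|}\le \tfrac{1}{\sqrt 2}\,|x|/e^{L_1|x|}$ whenever $|y|\ge\sqrt 2|x|$ as a separate lemma). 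Note also that the theorem's formula for $\varepsilon$ already carries the factor $1+\sqrt 2$, which is consistent with this corrected threshold and not with your $\eta^\star = 2x_0/|f_1'(x_0)|$; your compatibility check and the declared $\varepsilon$ are therefore inconsistent as written. Once you move the threshold to $1+\sqrt 2$, both regimes and the matching condition go through exactly as you outlined.
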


\textbf{One can immediately see from the above theorem that $\varepsilon^2 = \tilde{\Theta}(\frac{M (f(\bw_0)-f^*)}{\sqrt{T}})$, which gives Theorem \ref{thm: sgd_diminishing}}. Before giving the proof of Theorem \ref{theorem_decay_lr_lower_bound_appendix}, we briefly discuss the difference between ours and   \citep[Theorem 4]{zhang2019gradient}.  Generally speaking, our result is stronger than \citep[Theorem 4]{zhang2019gradient}. This is because we pick the function before the learning rate: we prove that there exists a function $f$ and an initialization, such that with any learning rate GD takes a long time to reach the stationary point, while \citep[Theorem 4]{zhang2019gradient} picks the learning rate before the function: they prove that with any learning rate, there exists a function $f$ and an initialization, such that  GD takes a long time to reach the stationary point. 

We next present the proof of Theorem \ref{theorem_decay_lr_lower_bound_appendix} in {\bf Part I} and {\bf Part II} as follows. For simplicity, we let $\Vert \cdot \Vert$ to be the $\ell_{\infty}$ norm, and the proof can be easily extended to other norms given the equivalence between norms in $\mathbb{R}^2$. We pick $x_0= \frac{\log (\frac{L_1M}{2L_0}+\frac{1}{4})+1}{L_1}$ and $y_0 = \frac{f_1(x_0)-\frac{L_0}{2L_1^2}}{\varepsilon}-\frac{1}{2}$. We have $f_1(x_0)-\min_x f_1(x)=f_2(y_0)-\min_y f_2(y)$, and thus $f((x_0,y_0))-\min_{x,y} f((x,y))=2\left(f_1(x_0)-\min_x f_1(x)\right)$. As $M>\varepsilon$, $\sup\{ \|\nabla f(\bw) \| :  \bw \text{ such that } f(\bw) \leq f(\bw_0)\}$ is achieved at $(x_0',0)$ where $x_0'$ satisfies $f_1(x_0')-\min_x f_1(x) =2\left(f_1(x_0)-\min_x f_1(x)\right)$. By simple calculation, we have $\sup\{ \|\nabla f(\bw) \| :  \bw \text{ such that } f(\bw) \leq f(\bw_0)\}=M$.

In the proof, we use $x_k$ to denote the value of $x$ (i.e., the first component of $\bw \in \mathbb{R}$) in the $k$-th iteration of gradient descent. Similarly for $y_k$.

\paragraph{Part I: Large $\eta_1$ can cause divergence.} In this part, we prove that: when using the large initial learning rate $\eta_1 \geq   \frac{L_1 (1+\sqrt{2}) |x_{0}|}{L_0 \exp^{L_1 |x_{0}|-1}}$, decay-learning-rate gradient descent  will never reach stationary points.

We prove this claim by induction. When $k =1$, we claim the following two statements are true:

(1-I): $|x_{1}| \geq \sqrt{2}|x_{0}|$.

(1-II): $\eta_2= \frac{\eta_1}{\sqrt{2}} \geq \frac{L_1(1+\sqrt{2})|x_1|}{L_0 \exp^{L_1 |x_1|-1}}$.


We first prove (1-I): without loss of generality, we assume $x_1>0$.  By the update rule of gradient descent, we have

\begin{eqnarray*}
    x_1 & = & x_0 - \eta_1 \frac{\partial f(x_0 ) }{\partial x }\\
    &\overset{\eqref{lowerbound_f1}}{=}& x_0 - \eta_1 \frac{L_0 \exp^{L_1x_0-1}}{L_1} \\
    &\leq & x_0 - \frac{L_1 (1+\sqrt{2}) |x_0|}{L_0 \exp^{L_1 |x_0|-1}} \frac{L_0 \exp^{L_1x_0-1}}{L_1} = -\sqrt{2}x_0. 
\end{eqnarray*}

So  $|x_1| \geq \sqrt{2}|x_0|$ and (1-1) is proved. 
We now prove (1-II). Before that, we introduce the following lemma.

\begin{lemma}
\label{lemma_lower_bound_decay_lr}
    Consider any $x,y \in  \{ z: |z| \geq \frac{\log 2 }{(\sqrt{2}-1)L_1}, z \in \mathbb{R}\}$. When $|y|> \sqrt{2}|x|$, then we have $\frac{|y|}{\exp^{L_1 |y|}} \leq \frac{1}{\sqrt{2}} \frac{|x|}{\exp^{L_1 |x|}}$.
\end{lemma}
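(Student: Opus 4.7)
The plan is to reduce the inequality to a one-variable calculus exercise by normalizing $|y|$ relative to $|x|$. Setting $h(t) := t\, e^{-L_1 t}$ for $t>0$, the claim becomes $h(|y|) \le \tfrac{1}{\sqrt{2}}\, h(|x|)$. Writing $|y| = s|x|$ with $s > \sqrt{2}$, a direct computation gives
\[
    \frac{h(|y|)}{h(|x|)} \;=\; s\, e^{-L_1 |x|(s-1)},
\]
so the desired inequality is equivalent, after taking logarithms, to
\[
    \log(\sqrt{2}\, s) \;\le\; L_1 |x|(s-1).
\]

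Next, since $s - 1 > 0$ and $L_1 |x| \ge \tfrac{\log 2}{\sqrt{2}-1}$ by hypothesis, it suffices to establish the cleaner inequality
\[
    \phi(s) \;:=\; \tfrac{\log 2}{\sqrt{2}-1}(s-1) \;-\; \log(\sqrt{2}\, s) \;\ge\; 0, \qquad \text{for all } s \ge \sqrt{2}.
\]
The base case is immediate: $\phi(\sqrt{2}) = \log 2 - \log 2 = 0$. Differentiating, $\phi'(s) = \tfrac{\log 2}{\sqrt{2}-1} - \tfrac{1}{s}$, and since $\tfrac{\log 2}{\sqrt{2}-1} \approx 1.67 > \tfrac{1}{\sqrt{2}}$, the derivative is strictly positive on $[\sqrt{2},\infty)$. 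Thus $\phi$ is increasing there and nonnegative throughout, closing the chain of implications.

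The main delicacy is that the threshold $\tfrac{\log 2}{(\sqrt{2}-1)L_1}$ appearing in the hypothesis is chosen to be precisely tight: it is exactly the value of $L_1|x|$ that forces equality at the base case $s = \sqrt{2}$ in the reduced inequality. Any smaller lower bound on $|x|$ would fail immediately at $s=\sqrt{2}$, so the heart of the argument is recognizing that the constant in the lemma is calibrated against this boundary. Once this tightness is observed, monotonicity of $\phi$ propagates the base case to all $s \ge \sqrt{2}$, and unravelling the substitution $s = |y|/|x|$ recovers the original claim; nothing else in the proof requires more than routine calculus.
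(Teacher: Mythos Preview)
Your proof is correct and takes essentially the same approach as the paper: both reduce to an elementary calculus check that the threshold $\frac{\log 2}{(\sqrt{2}-1)L_1}$ is calibrated so that equality holds at the boundary case $|y|=\sqrt{2}|x|$. The paper organizes this as monotonicity of $t\mapsto t e^{-L_1 t}$ (to reduce $|y|$ to $\sqrt{2}|x|$) followed by a direct equivalence at that boundary, while you merge both steps into the parametrization $s=|y|/|x|$ and the auxiliary function $\phi$; this is only a cosmetic rearrangement of the same idea.
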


\begin{proof}
    Let $g(z)  = \frac{z}{\exp^{L_1 z}}$. It is easy to see that $\nabla g(z) <0$ when $z>0$. Therefore, when $z_1 \geq \sqrt{2}z_2$, we have $g(z_1) \leq g(\sqrt{2}z_2)$. When $z_2 \geq  \frac{\log 2 }{(\sqrt{2}-1)L_1}$, we have
\begin{eqnarray*}
    &&z_2 \geq  \frac{\log 2 }{(\sqrt{2}-1)L_1}\\
     &\Leftrightarrow& \sqrt{2}L_1 z_2 > \log 2 + L_1 z_2 \\
    &\Leftrightarrow& \exp^{\sqrt{2}L_1z_2} \geq 2\exp^{L_1 z_2} \\
   &\Leftrightarrow& \frac{1}{\exp^{\sqrt{2}L_1z_2}} \geq \frac{1}{2 \exp^{L_1 z_2}} \\
    &\Leftrightarrow& \frac{\sqrt{2}z_2}{\exp^{\sqrt{2}L_1z_2}} \geq \frac{z_2}{\sqrt{2} \exp^{L_1 z_2}}. \\
\end{eqnarray*}
    
Therefore, we have $\frac{z_1}{\exp^{z_1 L_1}} \leq \frac{\sqrt{2}z_2}{\exp^{\sqrt{2}L_1z_2}} \geq \frac{z_2}{\sqrt{2} \exp^{L_1 z_2}}. $ Proof of Lemma \ref{lemma_lower_bound_decay_lr} is completed.

\end{proof}

Now we prove (1-II): 



\begin{eqnarray*}
    \eta_2 =   \frac{\eta_1}{\sqrt{2}}  &\geq& \frac{L_1 (1+\sqrt{2})|x_0|}{L_0 \exp^{L_1 |x_0| -1}} \frac{1}{\sqrt{2}} \\
    &\overset{\text{(1-I) and Lemma \ref{lemma_lower_bound_decay_lr}}}{\geq}& \frac{L_1 (1+\sqrt{2})|x_1|}{L_0 \exp^{L_1 |x_1| -1}} \sqrt{2} \frac{1}{\sqrt{2}}  \\
    & = & \frac{L_1 (1+\sqrt{2})|x_1|}{L_0 \exp^{L_1 |x_1| -1}} 
\end{eqnarray*}

So (1-II) is proved.  Now we suppose the following two claims hold for $k = 2m$ where $m \in \mathbb{N}^{+}$.

(2m-I): $|x_{2m+1}| \geq \sqrt{2} |x_{2m}|$. 

(2m-II): $\eta_{2m+2}= \frac{\eta_1}{\sqrt{2m+2}} \geq \frac{L_1(1+\sqrt{2})|x_{2m+1}|}{L_0 \exp^{L_1 |x_{2m+1}|-1}}$

Then for  $k = 2m+1$, we prove the following claims hold for $k = 2m+1$.

((2m+1)-I): $|x_{2m+2}| \geq \sqrt{2} |x_{2m+1}|$. 

((2m+1)-II): $\eta_{2m+3}= \frac{\eta_1}{\sqrt{2m+3}} \geq \frac{L_1(1+\sqrt{2})|x_{2m+2}|}{L_0 \exp^{L_1 |x_{2m+2}|-1}}$.

We first prove ((2m+1)-I): 

\begin{eqnarray*}
    x_{2m+2}  & = & x_{2m+1} - \eta_{2m+2} \frac{\partial f(x_{2m+1} ) }{\partial x }\\
    &\overset{\eqref{lowerbound_f1}}{=}& x_{2m+1} - \eta_{2m+2} \frac{L_0}{L_1} \exp^{L_1 x_{2m+1} -1 } \\
    &\overset{\text{(2m-II)}}{\leq} & x_{2m+1} - \frac{L_1 (1+\sqrt{2}) |x_{2m+1}|}{L_0\exp^{L_1 |x_{2m+1}|-1}} \frac{L_0 \exp^{L_1 x_{2m+1} -1}}{L_1} \\
    &\leq & -\sqrt{2} x_{2m+1}.
\end{eqnarray*}

So $|x_{2m+2}| \geq \sqrt{2} |x_{2m+1}|$ and ((2m+1)-I) is proved. We now prove ((2m+1)-II).

\begin{eqnarray*}
    \eta_{2m+3} &=& \eta_{2m+2} \sqrt{\frac{2m+2}{2m+3}} \\
    &\overset{\text{(2m-II)}}{\geq}&  \frac{L_1(1+\sqrt{2})|x_{2m+1}|}{L_0 \exp^{L_1 |x_{2m+1}|-1}}  \sqrt{\frac{2m+2}{2m+3}} \\
    &\overset{\text{(2m-I) and Lemma \ref{lemma_lower_bound_decay_lr}}}{\geq}& \frac{L_1(1+\sqrt{2})|x_{2m+2}|}{L_0 \exp^{L_1 |x_{2m+2}|-1}}   \sqrt{2} \sqrt{\frac{2m+2}{2m+3}} \\
    &\geq& \frac{L_1(1+\sqrt{2})|x_{2m+2}|}{L_0 \exp^{L_1 |x_{2m+2}|-1}} .
\end{eqnarray*}

So ((2m+1)-II) is proved. We can derive a similar claim when $k$ is odd. By the principle of induction, we know that $|x_{k+1}| \geq \sqrt{2} |x_{k}|$ for any $k \geq 1$. Since $f_1(x)$ grows exponentially, gradient descent in {\bf Part I} will never reach stationary points.

\paragraph{Part II: Small $\eta_1$ can cause slow convergence.}  In this part, we prove that:  when using initialization $\bw \in \Omega$, decay-learning-rate gradient descent with {\it small} initial learning rate $\eta_1 <   \frac{L_1 (1+\sqrt{2}) |x_{0}|}{L_0 \exp^{L_1 |x_{0}|-1}}=\frac{(1+\sqrt{2}) (\log (\frac{L_1M}{2L_0}+\frac{1}{4})+1)}{\frac{L_1M}{2}+\frac{L_0}{4}}$ will cause slow convergence. For any $k \geq 1$, we have

\begin{eqnarray*}
   y_{k} - y_{k+1} &= &   \eta_k \frac{\partial f(\bw_k)}{\partial y} \\
    & \overset{\eqref{lowerbound_f2}}{=} &   \varepsilon \frac{\eta_1}{\sqrt{k}} \\
    &< &  \varepsilon \frac{L_1 (1+\sqrt{2}) |x_0|}{L_0 \exp^{L_1 |x_0|-1}} \frac{1}{\sqrt{k}}
\end{eqnarray*}

Therefore, we have 

\begin{eqnarray*}
    \sum_{k =1}^K (y_{k+1} - y_{k}) = \sum_{k =1}^K  \varepsilon  \frac{\eta_1}{\sqrt{k}}  < 2\sqrt{k} \varepsilon \eta_1 < 2\sqrt{k} \varepsilon \frac{L_1 (1+\sqrt{2}) |x_0|}{L_0 \exp^{L_1 |x_0|-1}}
\end{eqnarray*}

When using initialization $y_0$, it is easy to have the following conclusion:  when $2\frac{(1+\sqrt{2}) (\log (\frac{L_1M}{2L_0}+\frac{1}{4})+1)}{\frac{L_1M}{2}+\frac{L_0}{4}}\sqrt{k}=2 \varepsilon \sqrt{k} \frac{L_1 (1+\sqrt{2}) |x_1|}{L_0 \exp^{L_1 |x_1|-1}} < y_{0} -1 = \frac{\left(f_1(x_0) - \min_{x}f_1(x)\right)}{\varepsilon}-\frac{3}{2}$, we have $\frac{\partial f(\bw_k)}{\partial y} = \varepsilon$. In other words, we have: $\|\nabla f(\bw_k)\| \geq \varepsilon$ for all $k < (\frac{\frac{L_1M}{2}+\frac{L_0}{4}}{2(1+\sqrt{2}) (\log (\frac{L_1M}{2L_0}+\frac{1}{4})+1)})^2\frac{(\frac{f_2(y_0) - \min_{y}f_2(y)}{\varepsilon}-\frac{3}{2})^2}{\varepsilon^2}   $.  Recall that $f(\bw_1)-\min_{\bw} f(\bw)=2(f_2(y_0)-\min_x f_2(x))$ and $(\frac{\frac{L_1M}{2}+\frac{L_0}{4}}{2(1+\sqrt{2}) (\log (\frac{L_1M}{2L_0}+\frac{1}{4})+1)})^2\frac{(\frac{f_2(y_0) - \min_{y}f_2(y)}{\varepsilon}-\frac{3}{2})^2}{\varepsilon^2} < T$ by the definition of $\varepsilon$, the proof is completed.

\section{Proof of Theorem \ref{thm:rate}}
This appendix provides the formal proof of Theorem \ref{thm:rate}, which is organized as follows. In Section \ref{appen: notations}, we  first introduce notations that are used in the proof.  In Section \ref{appen: restate_thm}, We  restate Theorem \ref{thm:rate} with constants specified. In Section \ref{appen: aux_lemma}, we then make preparations by proving auxiliary lemmas. Finally, in Section \ref{appen: convergence}, we prove Theorem \ref{thm:rate}. 

\subsection{Notations}
\label{appen: notations}

Here we provide a complete list of notations used in the appendix for a clear reference.
\begin{itemize}

\item We use $(k_1,i_1)\le(<) (k_2,i_2)$ for $\forall k_1,k_2\in \mathbb{N}^{+}$ and $i_1,i_2\in\{0,\cdots,n-1\}$, if either $k_1<k_2$ or $k_1=k_2$ and $i_1\le(<) i_2$

\item We define function $g(x):[0,1] \rightarrow \mathbb{R}^{/-}$ as
\begin{small}
\begin{align*}
    g(\btwo) \triangleq \max\left\{\frac{1}{\sqrt{\btwo^{n-1}}}-1,1-\frac{1}{\sqrt{\btwo^{n-1}+8n\frac{1-\btwo^{n-1}}{\btwo^n}}},1-\sqrt{\btwo},\sqrt{\frac{\btwo}{\left(1-(1-\btwo)\frac{2n}{\btwo^n}\right)}}-1\right\}.
\end{align*}
\end{small}
\item We define constants $\{C_i\}_{i=1}^{10}$ as follows:
\begin{scriptsize}
\begin{equation}
\label{eq: define_constants}
\begin{aligned}
     C_1&\triangleq\frac{(1-\bone)^2}{1-\btwo}\frac{1}{1-\frac{\bone^2}{\btwo}}+1,
     \\
     C_2 &\triangleq nC_1+\frac{\bone}{1-\bone}C_1\left(1+\sqrt{2}\right),
     \\
     C_3&\triangleq C_1\left(n(L_0+L_1\sqrt{D_0})+2\sqrt{2}  (L_0+L_1\sqrt{D_0})\frac{\sqrt{1-\btwo}}{1-\sqrt{\btwo}}\frac{\sqrt{\btwo}}{1-\sqrt{\btwo}} + 8\sqrt{2n} L_0 \frac{1}{1-\btwo^n}\right),
     \\
     C_4&\triangleq 4L_1C_1\sqrt{D_1}  \frac{\sqrt{1-\btwo}}{1-\sqrt{\btwo}}
     \\
     C_5&\triangleq n^2(1+n
        \sqrt{d} C_1\eta_1L_1\sqrt{n}\sqrt{D_1})\left(C_4+\frac{dC_4\sqrt{D_1}}{1-
     \sqrt{\btwo^n}}\right),
     \\
     C_6 &\triangleq  \left(dC_3+\frac{C_4n \sqrt{D_1}}{1-
     \sqrt{\btwo^n}}\right)\eta^2_1,
     \\
     C_7 &\triangleq 3n\left(C_4+\frac{dC_4}{1-
     \sqrt{\btwo^n}}\right)\left(nL_0+L_1\sqrt{n}\sqrt{D_0}\right)n^2
        \sqrt{d} C_1 \eta_1^3+ \left(dC_3+\frac{C_2C_4n \sqrt{D_1}}{1-
     \sqrt{\btwo^n}}\right)\eta^2_1,
     \\
     C_8&\triangleq \sqrt{\frac{2n^2}{\btwo^n}}L_1\sqrt{D_1}n\sqrt{n}+dg(\btwo)\left(n-1+\frac{1+\bone}{1-\bone}\right)\frac{\sqrt{2}n}{\btwo^{\frac{n}{2}}} L_1C_1\sqrt{D_1}\left(1+\frac{1}{1-\btwo^n}\right)(n+n^{\frac{5}{2}}
        \sqrt{d} C_1\eta_1L_1\sqrt{D_1})
    +2\frac{\beta_1}{(1-\beta_1)\eta_1}\sqrt{d}C_1,
    \\
    C_9&\triangleq\sqrt{\frac{2n^2}{\btwo^n}}d(n^2L_0+n\sqrt{n}L_1\sqrt{D_0}) C_1\eta_1^2+g(\btwo)\left(n-1+\frac{1+\bone}{1-\bone}\right)\frac{\sqrt{2}n}{\btwo^{\frac{n}{2}}} \left(n+\frac{2\sqrt{2}\bone}{1-\bone}\right)C_1(L_0+L_1\sqrt{D_0})d\sqrt{d}\eta^2_1,
    \\
   C_{10}&\triangleq 3dg(\btwo)\left(n-1+\frac{1+\bone}{1-\bone}\right)\frac{\sqrt{2}n}{\btwo^{\frac{n}{2}}} L_1C_1\sqrt{D_1}\left(1+\frac{1}{1-\btwo^n}\right)n\left(nL_0+L_1\sqrt{n}\sqrt{D_0}\right)n
        \sqrt{d} C_1\eta_1^3+C_9,
        \\
    C_{11}&\triangleq (\frac{1}{2}+C_2)C_5+C_8+\frac{3L_1\sqrt{n}\sqrt{D_1}C_2^2d}{2},
    \\
    C_{12}&\triangleq (\frac{1}{2}+C_2)C_6+C_9+\frac{nL_0+L_1\sqrt{n}\sqrt{D_0}}{2}3C_2^2d\eta_1^2,
    \\
    C_{13}&\triangleq (\frac{1}{2}+C_2)C_7+C_{10}+\frac{nL_0+L_1\sqrt{n}\sqrt{D_0}}{2}3C_2^2d\eta_1^2.
\end{aligned}
\end{equation}
\end{scriptsize}
\end{itemize}

 \subsection{Restate Theorem \ref{thm:rate}}
 \label{appen: restate_thm}
Here we restate Theorem \ref{thm:rate} with constants specified.

\begin{theorem}[Theorem \ref{thm:rate}, restated]
Consider Adam defined as Alg. (\ref{alg:def_adam}) with diminishing learning rate $\eta_k\equiv\frac{\eta_1}{\sqrt{k}} $. Let Assumptions \ref{assum:regular} and \ref{assum:GC} hold. Suppose the hyperparamters satisfy: $ \gamma <\btwo<1$ and $0\le \bone^2<\btwo$, where $\gamma$ is defined as the solution of $\sqrt{d}g(x)\frac{n}{x^{\frac{n}{2}}}=\frac{1}{2(4+\sqrt{2})\sqrt{D_1}\left(n-1+\frac{1+\bone}{1-\bone}\right)}$ with respect to $x$. Then, either
\begin{equation*}
\min_{k\in [1,T]}\Vert \nabla f(\bw_{k,0})\Vert
    \le
  2 \sqrt{d}(2\sqrt{2}+1)\sqrt{D_0} g(\btwo)\left(n-1+\frac{1+\bone}{1-\bone}\right) \sqrt{\frac{2n}{\btwo^n}},
\end{equation*}
or
\small
\begin{align*}
    &\min_{k\in [1,T]}\left\{\frac{\Vert\nabla f(\bw_{k,0})\Vert }{\sqrt{D_1 } },\frac{\Vert\nabla f(\bw_{k,0})\Vert^2 }{\sqrt{D_0 } + \xi}\right\}\le (4(2\sqrt{2}+1))\frac{f(\bw_{1,0})-\min_{\bw}f(\bw)}{
\eta_1\sqrt{T}}
   \\
    &~~~
   +4(2\sqrt{2}+1)\left(C_{12}+\frac{\sqrt{D_0}
   +\xi}{4\sqrt{D_1}}C_{11}\eta_1^2\right)\frac{\ln T}{
\eta_1\sqrt{T}} + 4(2\sqrt{2}+1)\frac{C_{13}+\frac{\sqrt{D_0}
   +\xi}{4\sqrt{D_1}}C_{11}}{\eta_1\sqrt{T}}.
\end{align*}
\normalsize
    
\end{theorem}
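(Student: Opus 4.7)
The plan is to build a Lyapunov-type argument based on a carefully chosen potential function and sum the per-epoch drops over $k=1,\ldots,T$. Since the update with momentum does not align cleanly with $-\nabla f(\bw_{k,0})$, I will follow the auxiliary-iterate trick and track the shifted sequence $\bu_{k,i}\triangleq\frac{\bw_{k,i}-\bone\bw_{k,i-1}}{1-\bone}$, for which an epoch looks approximately like preconditioned SGD with (nearly) frozen preconditioner. Applying a $(L_0,L_1)$-descent lemma at $\bu_{k,0}$ gives
\begin{equation*}
f(\bu_{k+1,0})-f(\bu_{k,0})\le \underbrace{\langle \bu_{k+1,0}-\bu_{k,0},\nabla f(\bu_{k,0})\rangle}_{\text{first order}}+\underbrace{\tfrac{L_0+L_1\|\nabla f(\bw_{k,0})\|}{2}\|\bu_{k+1,0}-\bu_{k,0}\|^2}_{\text{second order}}.
\end{equation*}
Since $\|\bu_{k,i}-\bw_{k,i}\|$ is of the order of a single Adam step, I will replace $\nabla f(\bu_{k,0})$ by $\nabla f(\bw_{k,0})$ and $\bw_{k,i}$ by $\bw_{k,0}$ up to controllable errors that I will track using the $(L_0,L_1)$-smoothness.

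\textbf{Controlling the adaptive preconditioner.} The crux is to show $\bnu_{k,i}\approx \bnu_{k,0}$ throughout one epoch whenever the relevant coordinate of $\nabla f(\bw_{k,0})$ is not too small. Split coordinates into $\mathbb{L}_{\text{large}}^{k}$ (those for which $\max_p |\partial_l f_p(\bw_{k,0})|$ dominates the residual $\sum_{r<k}\btwo^{(k-1-r)/2}\eta_r\|\nabla f(\bw_{r,0})\|+\eta_k$) and $\mathbb{L}_{\text{small}}^{k}$. For $l\in\mathbb{L}_{\text{large}}^{k}$, I will expand $\bnu_{l,k,i}$ as a weighted sum of past squared gradients and use $(L_0,L_1)$-smoothness coordinate-wise to bound each $(\partial_l f_{\tau_{k,j}}(\bw_{k,j}))^2$ in terms of $\bnu_{l,k,0}$, proving Lemma \ref{lem: main_difference_nu} in the form $|\bnu_{l,k,i}-\bnu_{l,k,0}|=\mathcal{O}((1-\btwo)\bnu_{l,k,0})$. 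This lets me approximate the $\mathbb{L}_{\text{large}}^{k}$ part of the first-order term by $-\eta_k\sum_{l\in\mathbb{L}_{\text{large}}^{k}}\frac{(\partial_l f(\bw_{k,0}))^2}{\sqrt{\bnu_{l,k,0}}+\xi}$, which after applying Assumption \ref{assum:GC} to bound $\sqrt{\bnu_{l,k,0}}\lesssim\sqrt{D_1}\|\nabla f(\bw_{k,0})\|+\sqrt{D_0}$ yields the desired dissipation
\begin{equation*}
-\Omega\!\left(\eta_k\min\!\left\{\tfrac{\|\nabla f(\bw_{k,0})\|}{\sqrt{D_1}},\tfrac{\|\nabla f(\bw_{k,0})\|^{2}}{\sqrt{D_0}+\xi}\right\}\right)+\mathcal{O}(\eta_k(1-\btwo)^{2}\sqrt{D_0}).
\end{equation*}
For $l\in\mathbb{L}_{\text{small}}^{k}$, I will treat the first-order contribution as an error of size $\mathcal{O}(\eta_k(\sum_{r<k}\btwo^{(k-1-r)/2}\eta_r\|\nabla f(\bw_{r,0})\|+\eta_k))$, using the a-priori bound on each step $\|\bom_{k,i}/(\sqrt{\bnu_{k,i}}+\xi)\|$ analogous to Lemma \ref{lem: bounded_update}.

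\textbf{Summation and telescoping.} Telescoping $f(\bu_{k,0})$ from $k=1$ to $T$ and invoking $f\ge 0$ bounds the sum of the first-order terms by $f(\bw_{1,0})-\min f$. After swapping the order of summation, the small-gradient errors collapse to $\mathcal{O}(\sum_{k=1}^{T}\eta_k^{2}\|\nabla f(\bw_{k,0})\|)+\mathcal{O}(\ln T)$. Using the AM--GM inequality $\eta_k^{2}\|\nabla f(\bw_{k,0})\|\le \mathcal{O}(\eta_k^{2})+\mathcal{O}(\eta_k^{2}\sqrt{D_1/D_0}\|\nabla f(\bw_{k,0})\|^{2})$, these errors are absorbed into the leading dissipation up to a $\log T$ remainder, since $\sum_k \eta_k^{2}=\mathcal{O}(\log T)$. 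The second-order sum is handled similarly: the factor $L_0+L_1\|\nabla f(\bw_{k,0})\|$ in the descent inequality is again split, with the $L_1\|\nabla f(\bw_{k,0})\|$ piece paired with one factor of $\|\bu_{k+1,0}-\bu_{k,0}\|$ and absorbed into the first-order dissipation via AM--GM. Combining and using $\eta_k=\eta_1/\sqrt{k}$ produces $\min_{k\le T}\{\cdot\}\le \tilde{\mathcal{O}}(1/\sqrt{T})+\mathcal{O}((1-\btwo)^{2}\sqrt{D_0})$, which is the asserted bound.

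\textbf{Anticipated main obstacle.} The most delicate step is to establish Lemma \ref{lem: main_difference_nu} under the $(L_0,L_1)$-smooth regime. Unlike the $L$-smooth setting, the per-step gradient growth inside an epoch is proportional to the gradient norm itself, so a naive expansion of $\bnu_{k,i}-\bnu_{k,0}$ does not produce a uniformly-in-$\btwo$ bounded ratio to $\bnu_{l,k,0}$; one must simultaneously argue that consecutive stochastic gradients within an epoch stay within a multiplicative factor via a short-range Gr\"onwall-style inequality exploiting $\|\bw_{k,i}-\bw_{k,0}\|\le \mathcal{O}(\eta_k/\sqrt{1-\btwo})$ and $\eta_k L_1=o(1)$. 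A secondary obstacle is controlling the gap $\|\bu_{k,i}-\bw_{k,i}\|$ and the mismatch $\nabla f(\bu_{k,0})-\nabla f(\bw_{k,0})$ through $(L_0,L_1)$-smoothness so that replacing $\bu$ by $\bw$ in the first-order term does not contaminate the sign of the dissipation; this requires $\bone^{2}<\btwo$ to keep the momentum mass from being too large relative to the preconditioner averaging. These two estimates, together with the trajectory-wise (rather than in-expectation) handling enabled by working with $f(\bu_{k,0})$ as a deterministic potential, are what distinguish the argument from earlier $L$-smooth analyses and yield the stronger conclusion.
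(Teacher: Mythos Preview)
Your proposal is correct and follows essentially the same route as the paper: the auxiliary iterate $\bu_k=\frac{\bw_{k,0}-\bone\bw_{k,-1}}{1-\bone}$, the $(L_0,L_1)$-descent lemma at $\bu_k$, the coordinate split $\mathbb{L}_{\text{large}}^k/\mathbb{L}_{\text{small}}^k$, the near-constancy of $\bnu_{l,k,i}$ on the large set (Corollary \ref{coro: large_derivative}), the sum-order swap for the small-set errors, and the AM--GM absorption are exactly the paper's ingredients. The only cosmetic difference is that the paper keeps the neighborhood term as a dichotomy (via Lemma \ref{lem: omit_proof}) rather than carrying it as an additive $\mathcal{O}(\eta_k(1-\btwo)^2\sqrt{D_0})$ error, and the within-epoch step bound is $\mathcal{O}(\eta_k)$ with a constant $C_1$ depending on $\bone,\btwo$ (Lemma \ref{lem: bounded_update}) rather than $\mathcal{O}(\eta_k/\sqrt{1-\btwo})$.
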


\subsection{Auxiliary Lemmas}
\label{appen: aux_lemma}

In this section, we introduce auxiliary lemmas that will be latter used.  In the remaining proof of this paper, we assume \textbf{without the loss of generality that $\eta_1$ is small enough}, such that the following requirements are fulfilled: ($C_1$ and $C_2$ are defined in Eq. (\ref{eq: define_constants})).
\begin{itemize}
    \item $2C_2\sqrt{d} \eta_1\le \frac{1}{L_1}$. This will latter ensure that we can directly apply the definition of $(L_0, L_1)$-smooth condition (Assumption \ref{assum:regular}) to parameter sequence $\{\bw_{k,i}\}_{k,i}$;
    
    \item  $\frac{1}{4(2\sqrt{2}+1)}\ge \sqrt{D_1}C_{11}\eta_1$. This will latter ensure the second-order term is smaller than the first-order term at the end of the proof.
\end{itemize}

The proof can be easily extended to general cases by selecting large enough $K$ and using the epoch $K$ as a new start point and derive the results after epoch $K$, because the epochs before epoch $K$ can be uniformly bounded due to $\eta_k$ decaying  and   $K$ finite, and we then derive the desired result for all epochs.

Without the loss of generality, we also take the following initialization: $\bw_{1,0}=\bw_0$, $\bom_{1,-1}=\nabla f_{\tau_{1,-1}}(\bw_0)$  where $\tau_{1,-1}$ can be any integer in $[0,n-1]$, and  $\bnu_{l,1,-1}=\max_j\{\partial_l f_j(\bw_0)^2\} ~\forall l$ where the maximum is taken component-wisely. We take the initialization to have a more concise proof, while the proof can be easily extended to all the initialization as the information of the initialization in the exponentially decayed average of Adam (both in $\bom_{k,i}$ and $\bnu_{k,i}$) decays rapidly with $k$ increasing.

The following lemma shows that $f$ is also $(L_0,L_1)$-smooth under Assumptions \ref{assum:regular} and \ref{assum:GC} (while the $L_0$ and $L_1$ are different from those of $f_i$).
\begin{lemma}
\label{lem: f_smooth}
With Assumptions \ref{assum:regular} and \ref{assum:GC}, $f$ satisfies $(nL_0+L_1\sqrt{n} \sqrt{D_0},L_1\sqrt{n}\sqrt{D_1})$-smooth condition.
\end{lemma}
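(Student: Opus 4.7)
\textbf{Proof plan for Lemma~\ref{lem: f_smooth}.} The plan is to reduce the claim to a triangle inequality over the components and then a single Cauchy--Schwarz step against Assumption~\ref{assum:GC}. Fix $\bw_1,\bw_2\in\mathbb{R}^d$ with $\Vert\bw_1-\bw_2\Vert\le 1/(L_1\sqrt{n}\sqrt{D_1})$; note that since we only gain smallness on the step, we are automatically inside the validity range $\Vert\bw_1-\bw_2\Vert\le 1/L_1$ required by Assumption~\ref{assum:regular} applied to each $f_i$.

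First, I would apply the triangle inequality to write
\[
\Vert\nabla f(\bw_1)-\nabla f(\bw_2)\Vert \le \sum_{i=0}^{n-1}\Vert \nabla f_i(\bw_1)-\nabla f_i(\bw_2)\Vert,
\]
and then invoke Assumption~\ref{assum:regular} on each summand, which gives
\[
\Vert\nabla f(\bw_1)-\nabla f(\bw_2)\Vert \le \left(nL_0+L_1\sum_{i=0}^{n-1}\Vert\nabla f_i(\bw_1)\Vert\right)\Vert\bw_1-\bw_2\Vert.
\]
The key step is then to control $\sum_i\Vert\nabla f_i(\bw_1)\Vert$ by the quantity $\Vert\nabla f(\bw_1)\Vert$ allowed on the right-hand side of the $(L_0,L_1)$-smooth condition. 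For this I would apply Cauchy--Schwarz,
\[
\sum_{i=0}^{n-1}\Vert\nabla f_i(\bw_1)\Vert\le \sqrt{n}\,\sqrt{\sum_{i=0}^{n-1}\Vert\nabla f_i(\bw_1)\Vert^2},
\]
and then plug in Assumption~\ref{assum:GC} (used with the convention matching the constants in the lemma, so that $\sum_i\Vert\nabla f_i\Vert^2\le n D_1\Vert\nabla f\Vert^2+nD_0$, equivalently the averaged form multiplied through by $n$).

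Finally, I would split the square root using the elementary inequality $\sqrt{a+b}\le\sqrt{a}+\sqrt{b}$ to isolate the $L_0$- and $L_1$-type contributions, obtaining
\[
\sum_{i=0}^{n-1}\Vert\nabla f_i(\bw_1)\Vert \le \sqrt{n}\sqrt{D_0}+\sqrt{n}\sqrt{D_1}\,\Vert\nabla f(\bw_1)\Vert,
\]
and substitute back to conclude
\[
\Vert\nabla f(\bw_1)-\nabla f(\bw_2)\Vert \le \bigl(nL_0+L_1\sqrt{n}\sqrt{D_0}+L_1\sqrt{n}\sqrt{D_1}\Vert\nabla f(\bw_1)\Vert\bigr)\Vert\bw_1-\bw_2\Vert,
\]
which is exactly the $(nL_0+L_1\sqrt{n}\sqrt{D_0},\,L_1\sqrt{n}\sqrt{D_1})$-smooth condition for $f$.

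There is no real obstacle here; the argument is a direct bookkeeping exercise. The only subtlety worth mentioning is making sure the neighborhood restriction is compatible: one must verify that the smaller radius $1/(L_1\sqrt{n}\sqrt{D_1})$ demanded by the conclusion is dominated by the radius $1/L_1$ under which the hypothesis was originally stated, which is immediate in any non-degenerate case $\sqrt{n}\sqrt{D_1}\ge 1$; otherwise one simply works on the smaller of the two radii.
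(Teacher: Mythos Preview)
Your proposal is correct and follows essentially the same route as the paper's own proof: triangle inequality over the $f_i$, componentwise application of Assumption~\ref{assum:regular}, Cauchy--Schwarz on $\sum_i\Vert\nabla f_i\Vert$, Assumption~\ref{assum:GC}, and then $\sqrt{a+b}\le\sqrt a+\sqrt b$. Your explicit remark on the compatibility of the neighborhood radii is a nice extra that the paper leaves implicit.
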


\begin{proof}
$\forall \bw_1,\bw_2\in \mathbb{R}^d$ satisfying $\Vert \bw_1-\bw_2\Vert \le \frac{1}{L_1}$,
 \begin{small}
\begin{align*}
   & \Vert \nabla f(\bw_1)-\nabla f(\bw_2)\Vert\le  \sum_{i=0}^{n-1}\Vert \nabla f_i(\bw_1)-\nabla f_i(\bw_2)\Vert\le \sum_{i=0}^{n-1} (L_0+L_1 \Vert \nabla f_i(\bw_1)\Vert )\Vert \bw_1-\bw_2\Vert
   \\
   \le & \left(nL_0+L_1\sqrt{n} \sqrt{\sum_{i=0}^{n-1}\Vert \nabla f_i(\bw_1)\Vert^2} \right)\Vert \bw_1-\bw_2\Vert
   \le (nL_0+L_1\sqrt{n} \sqrt{D_0+D_1\Vert \nabla f(\bw_1)\Vert^2} )\Vert \bw_1-\bw_2\Vert
   \\
   \le& (nL_0+L_1\sqrt{n} (\sqrt{D_0}+\sqrt{D_1}\Vert \nabla f(\bw_1)\Vert) )\Vert \bw_1-\bw_2\Vert
   \le (nL_0+L_1\sqrt{n} \sqrt{D_0}+L_1\sqrt{n}\sqrt{D_1}\Vert \nabla f(\bw_1)\Vert) \Vert \bw_1-\bw_2\Vert.
\end{align*}
\end{small}

The proof is completed.
\end{proof}

The following lemma bounds the update norm of Adam.

\begin{lemma}[Bounded Update] 
\label{lem: bounded_update}
If $\bone< \sqrt{\btwo}$, we have $\forall k\in \mathbb{N}^{+}$, $i\in\{0,\cdots,n-1\}$, 
\begin{equation*}
    \frac{\vert\bom_{l,k,i}\vert}{\sqrt{\bnu_{l,k,i}}+\xi}\le C_1,
\end{equation*}
where $C_1$ is defined in Eq. (\ref{eq: define_constants}).

Furthermore, we have
    $\vert \bw_{l,k,i+1}-\bw_{l,k,i} \vert \le C_1\eta_k$, and thus $\Vert \bw_{k,i+1}-\bw_{k,i} \Vert \le C_1\eta_k\sqrt{d}$.
\end{lemma}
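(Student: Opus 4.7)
The plan is to establish the scalar bound $|\bom_{l,k,i}|/(\sqrt{\bnu_{l,k,i}}+\xi)\le C_1$ coordinate by coordinate; once this is in hand the two displayed inequalities on the parameter update follow immediately from the Adam step $\bw_{k,i+1}=\bw_{k,i}-\eta_k \bom_{k,i}/(\sqrt{\bnu_{k,i}}+\xi)$, with the $\ell^2$ bound obtained by summing the coordinate-wise bound over the $d$ components and taking square roots.

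The first step is to unroll both recursions. Fixing a coordinate $l$ and concatenating the epoch/step indices into a single time index $t$ with $g_t:=\partial_l f_{\tau_t}(\bw_t)$, one obtains
\begin{equation*}
\bom_{l,t}=\bone^{t+1}\bom_{l,-1}+(1-\bone)\sum_{s=0}^{t}\bone^{t-s}g_s,\qquad \bnu_{l,t}=\btwo^{t+1}\bnu_{l,-1}+(1-\btwo)\sum_{s=0}^{t}\btwo^{t-s}g_s^2.
\end{equation*}
I will bound the two pieces of $\bom_{l,t}$ separately. For the gradient sum, the key manoeuvre is the rewriting $\bone^{t-s}=(\bone/\sqrt{\btwo})^{t-s}\cdot \btwo^{(t-s)/2}$ followed by Cauchy--Schwarz: $\bigl(\sum_s \bone^{t-s} g_s\bigr)^{2}\le \bigl(\sum_s (\bone^2/\btwo)^{t-s}\bigr)\bigl(\sum_s \btwo^{t-s}g_s^2\bigr)$. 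The first factor is a geometric series bounded by $1/(1-\bone^2/\btwo)$, which converges precisely because of the hypothesis $\bone^2<\btwo$; the second factor is at most $\bnu_{l,t}/(1-\btwo)$ by the unrolled formula for $\bnu_{l,t}$. This yields $\bigl((1-\bone)\sum_s\bone^{t-s}g_s\bigr)^2\le \frac{(1-\bone)^2}{(1-\btwo)(1-\bone^2/\btwo)}\bnu_{l,t}$, matching the first summand of $C_1$.

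For the initialization contribution, the chosen initialization gives $|\bom_{l,-1}|^2\le\bnu_{l,-1}$ because $\bnu_{l,1,-1}=\max_j(\partial_l f_j(\bw_0))^2$ dominates $(\partial_l f_{\tau_{1,-1}}(\bw_0))^2$. Combined with $\bnu_{l,t}\ge \btwo^{t+1}\bnu_{l,-1}$, one gets $\bone^{t+1}|\bom_{l,-1}|\le (\bone/\sqrt{\btwo})^{t+1}\sqrt{\btwo^{t+1}\bnu_{l,-1}}\le\sqrt{\bnu_{l,t}}$, where the hypothesis $\bone<\sqrt{\btwo}$ is used again. Adding the two bounds via the triangle inequality, dividing by $\sqrt{\bnu_{l,t}}+\xi\ge\sqrt{\bnu_{l,t}}$, and absorbing the resulting $\sqrt{A}$ term into $A+1$ via the elementary inequality $\sqrt{A}\le A+1$ (valid for all $A\ge 0$ as seen by squaring) delivers the stated constant $C_1$.

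The main obstacle is organisational rather than deep: matching the precise form of $C_1$ requires choosing the Cauchy--Schwarz decomposition with care so that the hypothesis $\bone^2<\btwo$ is used exactly once (making the geometric sum converge), and keeping the initialization term quarantined so that it contributes the trailing ``$+1$'' rather than inflating the leading constant. Once the scalar bound is established, the coordinate-wise update bound is immediate, and the $\ell^2$ bound $\|\bw_{k,i+1}-\bw_{k,i}\|\le C_1\eta_k\sqrt{d}$ follows from $\|\cdot\|\le\sqrt{d}\|\cdot\|_\infty$.
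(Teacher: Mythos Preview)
Your approach is essentially the paper's: unroll both recursions and apply Cauchy--Schwarz with the splitting $\bone^{t-s}=(\bone/\sqrt{\btwo})^{t-s}\,\btwo^{(t-s)/2}$, using the specific initialization $\bnu_{l,1,-1}=\max_j(\partial_l f_j(\bw_0))^2$ to control the residual term. There is one small slip in your last step. After the triangle inequality you arrive at the bound $\sqrt{A}+1$ with $A:=\frac{(1-\bone)^2}{(1-\btwo)(1-\bone^2/\btwo)}=C_1-1$, and your stated inequality $\sqrt{A}\le A+1$ only yields $\sqrt{A}+1\le A+2=C_1+1$, not $C_1$. What you actually need is $\sqrt{A}\le A$, i.e.\ $A\ge 1$; this does hold for all admissible $\bone,\btwo$, since a direct expansion gives $(1-\bone)^2\btwo-(1-\btwo)(\btwo-\bone^2)=(\bone-\btwo)^2\ge 0$. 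The paper sidesteps this bookkeeping entirely by folding the initialization term into the Cauchy--Schwarz application itself (rather than peeling it off first via the triangle inequality): with the chosen initialization the ``$\sum b_s^2$'' factor becomes exactly $\bnu_{l,k,i}$ and the ``$\sum a_s^2$'' factor is bounded by $C_1$, giving $\bom_{l,k,i}^2\le C_1\,\bnu_{l,k,i}$ in one stroke, hence the ratio is even $\le\sqrt{C_1}\le C_1$.
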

\begin{proof}
By the definition of $\bom_{k,i}$, we have
\begin{align*}
    &(\bom_{l,k,i})^2
    \\
    =&\left((1-\bone)\sum_{j=0}^i \bone^{(k-1)n+i-((k-1)n+j)}\partial_{l}f_{\tau_{k,j}}(\bw_{k,j})\right.
    \\
    &+\left. (1-\bone)\sum_{m=1}^{k-1}\sum_{j=0}^{n-1} \bone^{(k-1)n+i-((m-1)n+j)}\partial_{l}f_{\tau_{m,j}}(\bw_{m,j})+\bone^{(k-1)n+i+1}\partial_{l}f_{\tau_{1,-1}}(\bw_{1,0})\right)^2
    \\
    \le &\left((1-\bone)\sum_{j=0}^i \bone^{(k-1)n+i-((k-1)n+j)}\vert\partial_{l}f_{\tau_{k,j}}(\bw_{k,j})\vert\right.
    \\
    &+\left. (1-\bone)\sum_{m=1}^{k-1}\sum_{j=0}^{n-1} \bone^{(k-1)n+i-((m-1)n+j)}\vert\partial_{l}f_{\tau_{m,j}}(\bw_{m,j})\vert+\bone^{(k-1)n+i+1}\max_{s\in[n]}\vert\partial_{l}f_s(\bw_{1,0})\vert\right)^2
    \end{align*}
    \begin{align*}
    \overset{(\star)}{\le} &\left((1-\btwo)\sum_{j=0}^i \btwo^{(k-1)n+i-((k-1)n+j)}\vert\partial_{l}f_{\tau_{k,j}}(\bw_{k,j})\vert^2\right.
    \\
    &+\left. (1-\btwo)\sum_{m=1}^{k-1}\sum_{j=0}^{n-1} \btwo^{(k-1)n+i-((m-1)n+j)}\vert\partial_{l}f_{\tau_{m,j}}(\bw_{m,j})\vert^2+\btwo^{(k-1)n+i+1}\max_{s\in[n]}\vert\partial_{l}f_s(\bw_{1,0})\vert^2\right)
    \\
    &\cdot \left(\frac{(1-\bone)^2}{1-\btwo}\sum_{j=0}^{(k-1)n+i}\left(\frac{\bone^2}{\btwo}\right)^j+\left(\frac{\bone^2}{\btwo}\right)^{(k-1)n+i+1}\right)
    \\
    \overset{(\ast)}{=}&  \left(\frac{(1-\bone)^2}{1-\btwo}\sum_{j=0}^{(k-1)n+i}\left(\frac{\bone^2}{\btwo}\right)^j+\left(\frac{\bone^2}{\btwo}\right)^{(k-1)n+i+1}\right)\bnu_{l,k,i}
    \\
    \le &  \left(\frac{(1-\bone)^2}{1-\btwo}\frac{1}{1-\frac{\bone^2}{\btwo}}+1\right)\bnu_{l,k,i}= C_1\bnu_{l,k,i},
\end{align*}
where Eq. ($\star$) is due to the Cauchy-Schwartz's Inequality, and {\color{blue} Eq. ($\ast$) is due to the definition of $\bnu_{l,1,-1}$}. We complete the proof of the first claim. The second claim then follows directly from the update rule
\begin{equation*}
    \bw_{l,k,i+1}-\bw_{l,k,i}=\eta_k  \frac{\bom_{l,k,i}}{\sqrt{\bnu_{l,k,i}}+\xi}.
\end{equation*}

The proof is completed.
\end{proof}

Define $\bu_k\triangleq \frac{\bw_{k,0}-\bone\bw_{k,-1}}{1-\bone}$ (with $\bw_{1,-1}\triangleq \bw_{1,0}$), and let $\bu_{l,k}$ be the $i$-th component of $\bu_{k}$, $\forall k\in \mathbb{N}^{+}$, $l\in [d]$. The following lemma bounds the distance between $\bu_{l,k}$ and $\bw_{l,k,0}$ and the distance between $\bu_{l,k+1}$ and $\bu_{l,k}$.

\begin{lemma}
\label{lem: u_gap}
    $\forall k \ge 1$, 
    \begin{gather}
\label{eq: gap_u_w}
    \vert\bu_{l,k}-\bw_{l,k,0}\vert\le C_2\eta_k,
    \\
     \label{eq: gap_u_u}
    \vert\bu_{l,k+1}-\bu_{l,k}\vert
    \le C_2\eta_k,
\end{gather}
where $C_2$ is defined in Eq. (\ref{eq: define_constants}).
\end{lemma}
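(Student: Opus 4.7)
The proposal is to prove both bounds by a direct algebraic manipulation that rewrites each quantity as a short combination of single-step parameter increments $\Delta_{k,i}\triangleq \bw_{k,i+1}-\bw_{k,i}$, and then to invoke Lemma~\ref{lem: bounded_update} coordinate-wise to get $|\Delta_{l,k,i}|\le C_1\eta_k$. The only non-algebraic input beyond that lemma is the comparison $\eta_{k-1}\le \sqrt{2}\,\eta_k$ for $k\ge 2$, which follows from $\eta_k=\eta_1/\sqrt{k}$ since $\sqrt{k/(k-1)}\le\sqrt{2}$ in that range.

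For \eqref{eq: gap_u_w}, I would simply compute
\[
\bu_k-\bw_{k,0}\;=\;\frac{\bw_{k,0}-\bone\bw_{k,-1}}{1-\bone}-\bw_{k,0}\;=\;\frac{\bone}{1-\bone}\bigl(\bw_{k,0}-\bw_{k,-1}\bigr).
\]
For $k=1$, the convention $\bw_{1,-1}=\bw_{1,0}$ immediately gives $\bu_1=\bw_{1,0}$ and the bound is trivial. For $k\ge 2$, by definition $\bw_{k,0}-\bw_{k,-1}=\bw_{k-1,n}-\bw_{k-1,n-1}=\Delta_{k-1,n-1}$, which is a single Adam step in epoch $k-1$. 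Taking absolute values coordinate-wise and applying Lemma~\ref{lem: bounded_update} and $\eta_{k-1}\le\sqrt{2}\eta_k$ yields $|\bu_{l,k}-\bw_{l,k,0}|\le \frac{\sqrt{2}\bone}{1-\bone}C_1\eta_k$, which is bounded by $C_2\eta_k$ by inspection of the definition of $C_2$ in \eqref{eq: define_constants}.

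For \eqref{eq: gap_u_u}, I would use the identifications $\bw_{k+1,0}=\bw_{k,n}$ and $\bw_{k+1,-1}=\bw_{k,n-1}$ to write
\[
\bu_{k+1}-\bu_k\;=\;\frac{(\bw_{k,n}-\bw_{k,0})-\bone(\bw_{k,n-1}-\bw_{k,-1})}{1-\bone}.
\]
Expanding $\bw_{k,n}-\bw_{k,0}=\sum_{i=0}^{n-1}\Delta_{k,i}$ and $\bw_{k,n-1}-\bw_{k,-1}=\Delta_{k-1,n-1}+\sum_{i=0}^{n-2}\Delta_{k,i}$ (with the analogous simplification for $k=1$ coming from $\bw_{1,-1}=\bw_{1,0}$), the $\sum_{i=0}^{n-2}\Delta_{k,i}$ pieces combine with coefficient $(1-\bone)/(1-\bone)=1$, leaving
\[
\bu_{k+1}-\bu_k\;=\;\sum_{i=0}^{n-2}\Delta_{k,i}\;+\;\frac{\Delta_{k,n-1}-\bone\Delta_{k-1,n-1}}{1-\bone}.
\]
Coordinate-wise bounds from Lemma~\ref{lem: bounded_update} give $(n-1)C_1\eta_k+\frac{C_1\eta_k+\bone C_1\eta_{k-1}}{1-\bone}$, and the inequality $\eta_{k-1}\le\sqrt{2}\eta_k$ converts this to $\bigl((n-1)C_1+\frac{C_1(1+\sqrt{2}\bone)}{1-\bone}\bigr)\eta_k$. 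A short rewriting $1+\sqrt{2}\bone=(1-\bone)+\bone(1+\sqrt{2})$ shows this equals exactly $C_2\eta_k$. The proof is almost purely bookkeeping; the only point that needs care is keeping the two regimes $k=1$ and $k\ge 2$ consistent under the convention $\bw_{1,-1}=\bw_{1,0}$, and tracking the $\sqrt{2}$ factor arising from the diminishing step size so that it gets absorbed into the $(1+\sqrt{2})$ appearing inside $C_2$.
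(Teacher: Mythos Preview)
Your proposal is correct and follows essentially the same approach as the paper: both proofs rewrite $\bu_{l,k}-\bw_{l,k,0}$ and $\bu_{l,k+1}-\bu_{l,k}$ as short linear combinations of single-step increments, invoke Lemma~\ref{lem: bounded_update} coordinate-wise, and absorb the $\eta_{k-1}\le\sqrt{2}\,\eta_k$ factor into the $(1+\sqrt{2})$ inside $C_2$. Your regrouping for \eqref{eq: gap_u_u} differs slightly from the paper's (you separate out the first $n-1$ increments and treat the last step in epoch $k$ together with the last step in epoch $k-1$, whereas the paper writes it as the full epoch increment plus two boundary single-step terms), but the two decompositions are algebraically equivalent and yield the identical bound $C_2\eta_k$.
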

\begin{proof}
By Lemma \ref{lem: bounded_update}, we immediately have $\forall l\in [d]$, $ \vert\bu_{l,k}-\bw_{l,k,0}\vert$ is bounded as
\begin{align}
\nonumber
    &\vert\bu_{l,k}-\bw_{l,k,0}\vert=\left\vert \frac{\bw_{l,k,0}-\bone\bw_{l,k,-1}}{1-\bone}-\bw_{l,k,0}\right\vert
    \\
    \nonumber
    =&\frac{\bone}{1-\bone}\left\vert \bw_{l,k,0}-\bw_{l,k,-1}\right\vert\le\frac{\bone}{1-\bone} C_1\eta_1\frac{1}{\sqrt{k-1}}
    \le\frac{\sqrt{2}\bone}{1-\bone} C_1\eta_1\frac{1}{\sqrt{k}}\le \frac{\sqrt{2}\bone}{1-\bone} C_1\eta_k \le C_2\eta_k,
\end{align}
and
\begin{align}
\nonumber
    &\vert\bu_{l,k+1}-\bu_{l,k}\vert
    \\
\nonumber
    =&\left\vert \frac{\bw_{l,k+1,0}-\bone\bw_{l,k+1,-1}}{1-\bone}-\frac{\bw_{l,k,0}-\bone\bw_{l,k,-1}}{1-\bone}\right\vert
    \\
\nonumber
    =&\left\vert \left(\bw_{l,k+1,0}-\bw_{l,k,0}\right)+\frac{\bone}{1-\bone}\left(\bw_{l,k+1,0}-\bw_{l,k+1,-1}\right)-\frac{\bone}{1-\bone}\left(\bw_{l,k,0}-\bw_{l,k,-1}\right)\right\vert
    \\
\nonumber
    \le&\left\vert \left(\bw_{l,k+1,0}-\bw_{l,k,0}\right)+\frac{\bone}{1-\bone}\left(\bw_{l,k+1,0}-\bw_{l,k+1,-1}\right)-\frac{\bone}{1-\bone}\left(\bw_{l,k,0}-\bw_{l,k,-1}\right)\right\vert
    \\
\nonumber
    \le& nC_1\eta_1\frac{1}{\sqrt{k}}+\frac{\bone}{1-\bone}C_1\eta_1\left(\frac{1}{\sqrt{k}}+\frac{\sqrt{2}}{\sqrt{k}}\right)=C_2\eta_1\frac{1}{\sqrt{k}}=C_2\eta_k.
\end{align}
\end{proof}

In the following lemma, we bound the change of the gradient within one epoch.
\begin{lemma}
\label{lem: relationship_across_iteration}
$\forall k \in \mathbb{N}^+, i \in \{0,\cdots,n-1\}$,
\begin{equation*}
    \Vert \nabla f(\bw_{k,i}) \Vert \le  (1+n
        \sqrt{d} C_1\eta_1L_1\sqrt{n}\sqrt{D_1})\Vert \nabla f(\bw_{k,0}) \Vert+\left(nL_0+L_1\sqrt{n}\sqrt{D_0}\right)n
        \sqrt{d} C_1\eta_k,
\end{equation*}
where $C_1$ is defined in Eq. (\ref{eq: define_constants}).
\end{lemma}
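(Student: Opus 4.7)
The plan is to apply the aggregated smoothness of $f$ from Lemma \ref{lem: f_smooth} directly to the pair of points $\bw_{k,0}$ and $\bw_{k,i}$, and then to convert the resulting bound on the gradient difference into a bound on $\|\nabla f(\bw_{k,i})\|$ via the triangle inequality. Two ingredients are needed: (i) a uniform bound on the within-epoch displacement $\|\bw_{k,i} - \bw_{k,0}\|$, and (ii) a verification that this displacement is small enough to legitimately invoke the $(L_0,L_1)$-smooth condition underlying Lemma \ref{lem: f_smooth}.

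First I would telescope over the inner iterations and apply the per-step displacement control from Lemma \ref{lem: bounded_update}, which gives
\begin{equation*}
\|\bw_{k,i} - \bw_{k,0}\| \le \sum_{j=0}^{i-1} \|\bw_{k,j+1} - \bw_{k,j}\| \le i\, C_1 \sqrt{d}\, \eta_k \le n\, C_1 \sqrt{d}\, \eta_k.
\end{equation*}
Since $\eta_k \le \eta_1$ and $C_2 \ge n C_1$ directly from the definition of $C_2$ in Eq.~\eqref{eq: define_constants}, the standing assumption $2 C_2 \sqrt{d}\, \eta_1 \le 1/L_1$ forces $n C_1 \sqrt{d}\, \eta_k \le 1/(2L_1) \le 1/L_1$. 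This is precisely the threshold required by Assumption \ref{assum:regular} (and hence by Lemma \ref{lem: f_smooth}), so the smoothness inequality for $f$ can be applied on the pair $(\bw_{k,0}, \bw_{k,i})$.

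Next I would apply Lemma \ref{lem: f_smooth} with smoothness constants $(nL_0 + L_1\sqrt{n}\sqrt{D_0},\; L_1\sqrt{n}\sqrt{D_1})$ to obtain
\begin{equation*}
\|\nabla f(\bw_{k,i}) - \nabla f(\bw_{k,0})\| \le \bigl(nL_0 + L_1\sqrt{n}\sqrt{D_0} + L_1\sqrt{n}\sqrt{D_1}\,\|\nabla f(\bw_{k,0})\|\bigr)\,\|\bw_{k,i} - \bw_{k,0}\|,
\end{equation*}
and then close the argument by the triangle inequality
\begin{equation*}
\|\nabla f(\bw_{k,i})\| \le \|\nabla f(\bw_{k,0})\|\bigl(1 + L_1\sqrt{n}\sqrt{D_1}\,\|\bw_{k,i} - \bw_{k,0}\|\bigr) + (nL_0 + L_1\sqrt{n}\sqrt{D_0})\,\|\bw_{k,i} - \bw_{k,0}\|.
\end{equation*}
Substituting the displacement bound and relaxing $\eta_k \le \eta_1$ only inside the coefficient multiplying $\|\nabla f(\bw_{k,0})\|$, while keeping $\eta_k$ in the additive term, delivers exactly the claimed inequality.

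There is essentially no hard step here; the lemma is a routine consequence of single-step displacement control and the relaxed smoothness of $f$. The only mild subtlety is bookkeeping: one must be careful to use $\eta_k \le \eta_1$ only in the multiplicative factor of $\|\nabla f(\bw_{k,0})\|$ (producing the $\eta_1$ that appears in the first term of the statement), while leaving the additive $\eta_k$ factor untouched, so that the sharper $\eta_k$-dependence is preserved where it is needed for the subsequent convergence argument.
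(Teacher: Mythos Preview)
Your proposal is correct and follows essentially the same route as the paper: apply Lemma~\ref{lem: f_smooth} to the pair $(\bw_{k,0},\bw_{k,i})$, bound the displacement via Lemma~\ref{lem: bounded_update}, and use the triangle inequality, relaxing $\eta_k\le\eta_1$ only in the multiplicative coefficient. If anything, your write-up is slightly more careful than the paper's, since you explicitly verify the $\|\bw_{k,i}-\bw_{k,0}\|\le 1/L_1$ prerequisite using $nC_1\le C_2$ and the standing assumption $2C_2\sqrt{d}\,\eta_1\le 1/L_1$, which the paper leaves implicit.
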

\begin{proof}
    By Assumption \ref{assum:regular}  and Lemma \ref{lem: f_smooth}, we have
    \begin{align*}
    \Vert \nabla f(\bw_{k,i}) \Vert
        \le &\Vert \nabla f(\bw_{k,0}) \Vert+\left(nL_0+L_1\sqrt{n}\sqrt{D_0}+L_1\sqrt{n}\sqrt{D_1} \Vert \nabla f (\bw_{k,0})\Vert\right)\Vert \bw_{k,i}-\bw_{k,0} \Vert 
        \\
       \le &\Vert \nabla f(\bw_{k,0}) \Vert+\left(nL_0+L_1\sqrt{n}\sqrt{D_0}+L_1\sqrt{n}\sqrt{D_1} \Vert \nabla f (\bw_{k,0})\Vert\right) i
        \sqrt{d} C_1\eta_k
        \\
        \le & (1+n
        \sqrt{d} C_1\eta_1L_1\sqrt{n}\sqrt{D_1})\Vert \nabla f(\bw_{k,0}) \Vert+\left(nL_0+L_1\sqrt{n}\sqrt{D_0}\right)n
        \sqrt{d} C_1\eta_k.
    \end{align*}
    The proof is completed.
\end{proof}


We further need a descent lemma assuming $(L_0, L_1)$-smooth condition similar to the case assuming $L$ smoothness. Specifically, for a function $h$ satisfying $L$-smooth condition and two points $\bw$ and $\bv$, by Taylor's expansion, we have
\begin{equation*}
    h(\bw)\le h(\bv)+\langle \nabla h(\bv), \bw-\bv \rangle +\frac{L}{2} \Vert \bw-\bv\Vert^2.
\end{equation*}
This is called "Descent Lemma" by existing literature \cite{sra2014slides}, as it guarantees that the loss decreases with proper parameter update. Paralleling to the above inequality, we establish the following descent lemma under the $(L_0,L_1)$-smooth condition.

\begin{lemma}
\label{lem: descent}
Assume that function $h:\mathcal{X}\rightarrow \mathbb{R}$ satisfies the $(L_0, L_1)$-smooth condition, i.e., $\forall \bw,\bv\in \mathcal{X}$ satisfying $\Vert \bw-\bv\Vert \le \frac{1}{L_1}$,
\begin{equation*}
    \Vert \nabla h(\bw)-\nabla h(\bv)\Vert \le (L_0+L_1 \Vert \nabla h(\bv)\Vert )\Vert \bw-\bv\Vert.
\end{equation*}
Then, for any three points $\bu, \bw,\bv\in \mathcal{X}$  satisfying $\Vert \bw-\bu\Vert \le \frac{1}{L_1}$ and $\Vert \bv-\bu\Vert \le \frac{1}{L_1}$, we have
\begin{equation*}
    h(\bw)\le  h(\bv)+\langle \nabla h(\bu), \bw-\bv\rangle + \frac{1}{2}(L_0+L_1 \Vert \nabla h(\bu)\Vert)(\Vert \bv -\bu\Vert+  \Vert \bw-\bu\Vert) \Vert\bw-\bv\Vert. 
\end{equation*}
\end{lemma}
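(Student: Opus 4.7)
The natural route is the standard integral-representation argument, adapted to accommodate the gradient-dependent Lipschitz bound. The plan is to start from the identity
\begin{equation*}
h(\bw) - h(\bv) \;=\; \int_0^1 \langle \nabla h(\bv + t(\bw-\bv)),\, \bw - \bv\rangle\, dt,
\end{equation*}
which is valid because $\nabla h$ is continuous on the segment (the $(L_0,L_1)$-smooth condition makes $\nabla h$ locally Lipschitz in the region we care about). Then I add and subtract $\nabla h(\bu)$ inside the inner product to pull out the desired first-order term $\langle \nabla h(\bu), \bw - \bv\rangle$, leaving the residual
\begin{equation*}
\int_0^1 \langle \nabla h(\bv + t(\bw-\bv)) - \nabla h(\bu),\, \bw - \bv\rangle\, dt,
\end{equation*}
to be bounded.

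First I would apply Cauchy--Schwarz to this residual, reducing the task to upper bounding $\Vert \nabla h(\bv + t(\bw-\bv)) - \nabla h(\bu)\Vert$. To invoke the $(L_0,L_1)$-smooth hypothesis, I need the point $\bv + t(\bw-\bv)$ to lie within distance $1/L_1$ of $\bu$. This is where the triangle inequality does all the work: writing
\begin{equation*}
\bv + t(\bw-\bv) - \bu \;=\; (1-t)(\bv - \bu) + t(\bw - \bu),
\end{equation*}
we get $\Vert \bv + t(\bw-\bv) - \bu\Vert \le (1-t)\Vert \bv - \bu\Vert + t\Vert \bw - \bu\Vert \le \max(\Vert \bv-\bu\Vert, \Vert \bw-\bu\Vert) \le 1/L_1$ by hypothesis. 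So the $(L_0,L_1)$-smooth inequality applied with anchor point $\bu$ yields
\begin{equation*}
\Vert \nabla h(\bv + t(\bw-\bv)) - \nabla h(\bu)\Vert \;\le\; (L_0 + L_1\Vert \nabla h(\bu)\Vert)\,\bigl((1-t)\Vert \bv-\bu\Vert + t\Vert \bw-\bu\Vert\bigr).
\end{equation*}

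Finally I integrate in $t$: the factor $(1-t)\Vert \bv-\bu\Vert + t\Vert \bw-\bu\Vert$ integrates to $\tfrac{1}{2}(\Vert \bv-\bu\Vert + \Vert \bw-\bu\Vert)$, which gives exactly the claimed second-order coefficient. Combining the pieces delivers the lemma. No real obstacle is anticipated; the only subtlety is checking that the segment used in the fundamental theorem of calculus remains in the region where the $(L_0,L_1)$-smooth condition is applicable, but this is handled by the triangle inequality above and the hypotheses on $\Vert \bv - \bu\Vert$ and $\Vert \bw - \bu\Vert$. A symmetric version with $\bw$ and $\bv$ swapped is also available, but the stated form is what we need.
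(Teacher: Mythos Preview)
Your proposal is correct and follows essentially the same approach as the paper: integral representation via the Fundamental Theorem of Calculus, add and subtract $\nabla h(\bu)$, Cauchy--Schwarz on the residual, the convex-combination triangle inequality to stay within $1/L_1$ of $\bu$, then integrate $(1-t)\Vert \bv-\bu\Vert + t\Vert \bw-\bu\Vert$ to obtain the $\tfrac{1}{2}$ factor. There is no meaningful difference between the two arguments.
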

\begin{proof}
By the Fundamental Theorem of Calculus, we have
\begin{align*}
    h(\bw)=& h(\bv)+\int_{0}^1 \langle \nabla h(\bv+a(\bw-\bv)), \bw-\bv\rangle
    \mathrm{d}a
    \\
    =& h(\bv)+\langle \nabla h(\bu), \bw-\bv\rangle +\int_{0}^1 \langle \nabla h(\bv+a(\bw-\bv))-\nabla h(\bu), \bw-\bv\rangle
    \mathrm{d}a
    \\
    \le & h(\bv)+\langle \nabla h(\bu), \bw-\bv\rangle +\int_{0}^1 \Vert \nabla h(\bv+a(\bw-\bv))-\nabla h(\bu)\Vert \Vert\bw-\bv\Vert
    \mathrm{d}a
    \\
    \overset{(\star)}{\le} &  h(\bv)+\langle \nabla h(\bu), \bw-\bv\rangle +\int_{0}^1 (L_0+L_1 \Vert \nabla h(\bu)\Vert )\Vert\bv+a(\bw-\bv)-\bu\Vert \Vert\bw-\bv\Vert
    \mathrm{d}a
    \\
    \le&  h(\bv)+\langle \nabla h(\bu), \bw-\bv\rangle +\int_{0}^1 (L_0+L_1 \Vert \nabla h(\bu)\Vert)((1-a)\Vert \bv -\bu\Vert+ a \Vert \bw-\bu\Vert) \Vert\bw-\bv\Vert
    \mathrm{d}a
    \\
    \le & h(\bv)+\langle \nabla h(\bu), \bw-\bv\rangle + \frac{1}{2}(L_0+L_1 \Vert \nabla h(\bu)\Vert)(\Vert \bv -\bu\Vert+  \Vert \bw-\bu\Vert) \Vert\bw-\bv\Vert
   ,
\end{align*}
where Inequality $(\star)$ is due to 
\begin{equation*}
    \Vert \bv +a (\bw-\bv) -\bu\Vert =\Vert (1-a)(\bv -\bu)+a (\bw-\bu)\Vert\le (1-a)\Vert \bv -\bu\Vert+ a \Vert \bw-\bu\Vert\le \frac{1}{L_1}.
\end{equation*}
Thus the definition of $(L_0, L_1)$-smooth condition can be applied and the proof is completed.
\end{proof}

Based on Lemma \ref{lem: bounded_update}, we bound the momentum using the gradient of the current step plus some error terms.

\begin{lemma}[Estimation of the norm of the momentum]
\label{lem: estimation_momentum}
We have for all $l\in [d], k\in \mathbb{Z}^+, i \in [n]$,
\begin{align*}
    \vert \bom_{l,k,i}\vert \le& \max_{i'\in [n]} \vert \partial_l f_{i'}(\bw_{k,0})\vert+
    \left(n+\frac{2\sqrt{2}\bone}{1-\bone}\right)C_1(L_0+L_1\sqrt{D_0})\sqrt{d}\eta_k+L_1C_1\sqrt{D_1}\eta_k\sum_{j=0}^{i-1}\Vert \nabla f(\bw_{k,j}) \Vert
    \\
    &+L_1C_1\sqrt{D_1}\sum_{t=1}^{k-1}\eta_{k-t}\sum_{j=0}^{n-1} \bone^{tn+i-j}\Vert \nabla f(\bw_{k-t,j}) \Vert,
\end{align*}
where $C_1$ is defined in Eq. (\ref{eq: define_constants}).
Similarly, $l\in [d], k\in \mathbb{Z}^+/\{1\}$,
\small
\begin{equation*}
    \vert \bom_{l,k-1,n-1}\vert \le \max_{i'\in [n]} \vert \partial_l f_{i'}(\bw_{k,0})\vert+\sum_{t=1}^{k-1}\sum_{j=0}^{n-1}\bone^{tn-1-j} C_1\eta_{k-t}\sqrt{d} L_1\sqrt{D_1}\Vert \nabla f(\bw_{k-t,j})\Vert 
    +\frac{2\sqrt{2}(L_0+L_1\sqrt{D_0})C_1\sqrt{d}\eta_{k}}{1-\bone}.
\end{equation*}
\normalsize
\end{lemma}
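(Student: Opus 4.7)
The core idea is to unroll the exponential moving average defining $\bom_{k,i}$, then bound each constituent stochastic gradient relative to a well-chosen reference point using $(L_0,L_1)$-smoothness (Assumption \ref{assum:regular}), the bounded-update bound (Lemma \ref{lem: bounded_update}), and the affine noise bound (Assumption \ref{assum:GC}). The geometric weights produced by the unrolling, together with the $\bone$-decay across epochs, will perform the bookkeeping that yields the two claimed sums.

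First I would unroll the recursion $\bom_{k,i}=\bone\bom_{k,i-1}+(1-\bone)\nabla f_{\tau_{k,i}}(\bw_{k,i})$ using the identification $\bom_{k,-1}=\bom_{k-1,n-1}$, obtaining an expression of the form
\begin{equation*}
\bom_{k,i}=(1-\bone)\sum_{j=0}^{i}\bone^{i-j}\nabla f_{\tau_{k,j}}(\bw_{k,j})+(1-\bone)\sum_{t=1}^{k-1}\sum_{j=0}^{n-1}\bone^{tn+i-j}\nabla f_{\tau_{k-t,j}}(\bw_{k-t,j})+\bone^{(k-1)n+i+1}\bom_{1,-1},
\end{equation*}
where the last boundary term is controlled by the initialization and is absorbed into a geometric tail. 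Projecting onto coordinate $l$, I would then split each summand into a ``reference'' piece and a ``drift'' piece: for the current epoch write $\partial_l f_{\tau_{k,j}}(\bw_{k,j})=\partial_l f_{\tau_{k,j}}(\bw_{k,0})+[\partial_l f_{\tau_{k,j}}(\bw_{k,j})-\partial_l f_{\tau_{k,j}}(\bw_{k,0})]$. The first piece is dominated by $\max_{i'\in[n]}|\partial_l f_{i'}(\bw_{k,0})|$ and the weights $(1-\bone)\sum_{j=0}^{i}\bone^{i-j}=1-\bone^{i+1}\le 1$, which exactly reproduces the leading term of the lemma with coefficient at most one.

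For the ``drift'' piece, Assumption \ref{assum:regular} gives $|\partial_l f_{\tau_{k,j}}(\bw_{k,j})-\partial_l f_{\tau_{k,j}}(\bw_{k,0})|\le (L_0+L_1\|\nabla f_{\tau_{k,j}}(\bw_{k,j})\|)\,\|\bw_{k,j}-\bw_{k,0}\|$, into which I would plug Lemma \ref{lem: bounded_update} to bound $\|\bw_{k,j}-\bw_{k,0}\|\le jC_1\eta_k\sqrt{d}$, and Assumption \ref{assum:GC} to upper bound $\|\nabla f_{\tau_{k,j}}(\bw_{k,j})\|$ by an affine function of $\|\nabla f(\bw_{k,j})\|$ (with $\sqrt{D_0}$ and $\sqrt{D_1}\|\nabla f\|$ components, after passing from single-sample to averaged norm). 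The $L_0$ and $\sqrt{D_0}$ contributions then accumulate into the constant additive term $(n+\tfrac{2\sqrt{2}\bone}{1-\bone})C_1(L_0+L_1\sqrt{D_0})\sqrt{d}\,\eta_k$: the leading ``$n$'' comes from summing $j\le n-1$ over the current epoch, while the ``$\tfrac{2\sqrt{2}\bone}{1-\bone}$'' absorbs the geometric tail of past-epoch drift terms, using $\eta_{k-t}\le\sqrt{2}\,\eta_k$ for the early epochs and $\bone^{tn}$ decay for the later ones. The $L_1$-part of the smoothness estimate, combined with the $\sqrt{D_1}\|\nabla f(\bw_{k,j})\|$ half of Assumption \ref{assum:GC}, produces the current-epoch sum $L_1C_1\sqrt{D_1}\,\eta_k\sum_{j=0}^{i-1}\|\nabla f(\bw_{k,j})\|$ and, after pulling the weight $(1-\bone)\bone^{tn+i-j}$ through and repeating the same decomposition on past epochs around their own base points $\bw_{k-t,0}$, the past-epoch sum $L_1C_1\sqrt{D_1}\sum_{t,j}\bone^{tn+i-j}\eta_{k-t}\|\nabla f(\bw_{k-t,j})\|$. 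The second claim for $\bom_{l,k-1,n-1}$ is then obtained by re-indexing: the ``current'' step becomes $j=n-1$ of epoch $k-1$, the initial max-gradient bound is read off at $\bw_{k,0}$ rather than $\bw_{k-1,0}$ by again invoking smoothness, and the leading constant ``$n$'' in front of the $(L_0+L_1\sqrt{D_0})$ term disappears because we no longer accumulate a full current epoch of drift.

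The main obstacle is not any single inequality but the careful tracking of the weights $(1-\bone)\bone^{tn+i-j}$ against the decaying learning rates $\eta_{k-t}$: a naive bound would either lose the clean geometric decay (yielding a divergent tail) or couple $\sqrt{d}$ factors with the $\|\nabla f(\bw_{k-t,j})\|$ terms that should have a $\sqrt{D_1}$-only prefactor. The cleanest route is to separate the current-epoch drift (which is $O(\eta_k)$ and safely merged into the additive constant) from the past-epoch contributions (where the reference point has to be chosen epoch-by-epoch so that smoothness can be invoked over a small ball of radius $\le \tfrac{1}{L_1}$), which is precisely why the statement splits into one additive constant and two separate $\|\nabla f\|$-sums.
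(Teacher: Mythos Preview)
Your overall strategy---unroll the exponential average, split each constituent gradient into a reference piece and a drift piece, then bound the drift via $(L_0,L_1)$-smoothness, Lemma~\ref{lem: bounded_update}, and Assumption~\ref{assum:GC}---is exactly the paper's. The gap is in your choice of reference point for past epochs.

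You propose referencing epoch $k-t$ at its own base point $\bw_{k-t,0}$. But then the ``reference'' contribution from that epoch is $\max_{i'}|\partial_l f_{i'}(\bw_{k-t,0})|$, weighted by $(1-\bone)\sum_{j}\bone^{tn+i-j}$. These are \emph{not} $O(\eta_k)$ terms and cannot be absorbed into the additive constant as your accounting implicitly assumes; you never say how they collapse to the single leading term $\max_{i'}|\partial_l f_{i'}(\bw_{k,0})|$ in the lemma. The paper sidesteps this by telescoping every past gradient \emph{all the way} to $\bw_{k,0}$, one update step at a time (so each invocation of Assumption~\ref{assum:regular} is over a one-step increment, safely inside the $1/L_1$ ball). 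This makes the reference piece in every summand equal to $|\partial_l f_{\tau}(\bw_{k,0})|\le\max_{i'}|\partial_l f_{i'}(\bw_{k,0})|$, and since the total convex weight is at most $1$, the leading term drops out immediately. The drift from $\bw_{k-t,j}$ to $\bw_{k,0}$ then contributes $L_1\sqrt{D_1}\,C_1\eta_{k-r}\sqrt{d}\,\|\nabla f(\bw_{k-r,p})\|$ for every intermediate step $(k-r,p)$; after multiplying by $(1-\bone)\bone^{tn+i-j}$ and exchanging the order of summation over $(t,j)$ and $(r,p)$, the weight on each fixed $(r,p)$ is the full geometric tail $\sum_{s\ge rn+i-p}(1-\bone)\bone^{s}=\bone^{rn+i-p}$, which is precisely the past-epoch sum in the statement.

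A secondary issue: your claim ``$\eta_{k-t}\le\sqrt{2}\,\eta_k$'' holds only for $t=1$; for general $t$ the ratio $\sqrt{k/(k-t)}$ is unbounded, so it cannot be used termwise across all past epochs. The paper instead bounds the cumulative $(L_0+L_1\sqrt{D_0})$ drift along the telescoping path by an integral-type estimate on $\sum_r\eta_{k-r}$, producing a factor $(tn-j)\cdot 2\sqrt{2}\,\eta_k$; only at that point does $\eta_{k-1}\le\sqrt{2}\,\eta_k$ enter. After reweighting by $(1-\bone)\bone^{tn+i-j}$ and summing, the identity $\sum_{s\ge 0}(1-\bone)\bone^{s}s=\bone/(1-\bone)$ gives the constant $\tfrac{2\sqrt{2}\bone}{1-\bone}$.
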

\begin{proof}
    To begin with, for any $t\in [k-1]$ and any $j\in [0,n-1]$, we have the following estimation for $\partial_l f_i(\bw_{k-t,j})$:
    \begin{align*}
    \\
        &\vert \partial_l f_i(\bw_{k-t,j}) \vert 
    \\
    \le  & \vert \partial_l f_i(\bw_{k,0}) \vert+\sum_{p=j}^{n-1} \vert \partial_l f_i(\bw_{k-t,p})-\partial_l f_i(\bw_{k-t,p+1}) \vert+\sum_{r=1}^{t-1}\sum_{p=0}^{n-1} \vert \partial_l f_i(\bw_{k-r,p})-\partial_l f_i(\bw_{k-r,p+1}) \vert
    \\
    \overset{(\star)}{\le } & \vert \partial_l f_i(\bw_{k,0}) \vert+\sum_{p=j}^{n-1} (L_0+L_1\Vert \nabla f_i(\bw_{k-t,p})\Vert)\Vert \bw_{k-t,p} -\bw_{k-t,p+1} \Vert 
    \\
    &+
    \sum_{r=1}^{t-1}\sum_{p=0}^{n-1}  (L_0+L_1\Vert \nabla f_i(\bw_{k-r,p})\Vert)\Vert \bw_{k-r,p} -\bw_{k-r,p+1} \Vert 
    \\
    \le &
     \vert \partial_l f_i(\bw_{k,0}) \vert+\sum_{p=j}^{n-1} (L_0+L_1\Vert \nabla f_i(\bw_{k-t,p})\Vert)C_1\eta_{k-t}\sqrt{d}
    +
    \sum_{r=1}^{t-1}\sum_{p=0}^{n-1}  (L_0+L_1\Vert \nabla f_i(\bw_{k-r,p})\Vert)C_1\eta_{k-r}\sqrt{d}
    \\
     \le &
     \vert \partial_l f_i(\bw_{k,0}) \vert+\sum_{p=j}^{n-1} \left(L_0+L_1\sqrt{\sum_{i'\in [n]}\Vert \nabla f_{i'}(\bw_{k-t,p})\Vert^2}\right)C_1\eta_{k-t}\sqrt{d}
    \\
    &+
    \sum_{r=1}^{t-1}\sum_{p=0}^{n-1}  \left(L_0+L_1\sqrt{\sum_{i'\in [n]}\Vert \nabla f_{i'}(\bw_{k-r,p})\Vert^2}\right)C_1\eta_{k-r}\sqrt{d},
    \end{align*}
   where Inequality $(\star)$ is due to $(L_0,L_1)$-smooth condition.
    By Assumption \ref{assum:GC}, the RHS of the above inequality can be bounded as 
 \begin{align*}
 &
     \vert \partial_l f_i(\bw_{k,0}) \vert+\sum_{p=j}^{n-1} \left(L_0+L_1\sqrt{D_1}\Vert \nabla f(\bw_{k-t,p})\Vert+L_1\sqrt{D_0}\right)C_1\eta_{k-t}\sqrt{d}
    \\
    &+
    \sum_{r=1}^{t-1}\sum_{p=0}^{n-1}  \left(L_0+L_1\sqrt{D_1}\Vert \nabla f(\bw_{k-r,p})\Vert+L_1\sqrt{D_0}\right)C_1\eta_{k-r}\sqrt{d}
    \\
    \overset{(*)}{\le} & \vert \partial_l f_i(\bw_{k,0}) \vert+\sum_{p=j}^{n-1} L_1\sqrt{D_1}\Vert \nabla f(\bw_{k-t,p})C_1\eta_{k-t}\sqrt{d}+
    \sum_{r=1}^{t-1}\sum_{p=0}^{n-1}  L_1\sqrt{D_1}\Vert \nabla f(\bw_{k-r,p})\Vert C_1\eta_{k-r}\sqrt{d}
    \\
    &+2(L_0+L_1\sqrt{D_0})C_1\sqrt{d}\eta_{k-1} (tn-j)
    \\
    \le &\vert \partial_l f_i(\bw_{k,0}) \vert+\sum_{p=j}^{n-1} L_1\sqrt{D_1}\Vert \nabla f(\bw_{k-t,p})C_1\eta_{k-t}\sqrt{d}+
    \sum_{r=1}^{t-1}\sum_{p=0}^{n-1}  L_1\sqrt{D_1}\Vert \nabla f(\bw_{k-r,p})\Vert C_1\eta_{k-r}\sqrt{d}
    \\
    &+2\sqrt{2}(L_0+L_1\sqrt{D_0})C_1\sqrt{d}\eta_{k} (tn-j).
    \end{align*}
where  Inequality $(*)$ is due to $\forall a, b \in \mathbb{N}^{+}, a>b$, $\sum_{i=0}^b\frac{1}{\sqrt{a-i}}\le 2\frac{b+1}{a}$.
Similarly, we have that for any $j\in [0,n-1]$,
\begin{align*}
    &\vert \partial_l f_i(\bw_{k,j}) \vert 
    \le \vert \partial_l f_i(\bw_{k,0}) \vert+\sum_{p=0}^{j-1}\vert \partial_l f_i(\bw_{k,p+1}) - \partial_l f_i(\bw_{k,p})\vert
    \\
    \le &\vert \partial_l f_i(\bw_{k,0}) \vert+\sum_{p=0}^{j-1} \left(L_0+L_1\sqrt{D_1}\Vert \nabla f(\bw_{k,p})\Vert+L_1\sqrt{D_0}\right)C_1\eta_{k}\sqrt{d}
    \\
    =&\vert \partial_l f_i(\bw_{k,0}) \vert+\sum_{p=0}^{j-1} L_1\sqrt{D_1}\Vert \nabla f(\bw_{k,p})\Vert C_1\eta_{k}\sqrt{d}+j(L_0+L_1\sqrt{D}_0)C_1\sqrt{d}\eta_k.
\end{align*}

Therefore, the norm of $\bom_{l,k,i}$ can be bounded as 
\small
\begin{align*}
    &\vert \bom_{l,k,i} \vert
    \\
    \le & (1-\bone) \sum_{j=0}^i\bone^{(k-1)n+i-((k-1)n+j)} \vert \partial_l f_{\tau_{k,j}}(\bw_{k,j}) \vert +(1-\bone)\sum_{t=1}^{k-1}\sum_{j=0}^{n-1} \bone^{tn+i-j}\vert \partial_l f_{\tau_{k-t,j}} (\bw_{k-t,j}) \vert 
    \\
    &+\bone^{(k-1)n+i+1} \vert \partial_l f_{\tau_{1,0}}(\bw_{1,0}) \vert
    \\
    \le &(1-\bone)\sum_{j=0}^i\bone^{(k-1)n+i-((k-1)n+j)} \vert \partial_l f_{\tau_{k,j}}(\bw_{k,0}) \vert
    +(1-\bone)\sum_{t=1}^{k-1}\sum_{j=0}^{n-1} \bone^{tn+i-j}\vert \partial_l f_{\tau_{k-t,j}} (\bw_{k,0}) \vert
    \\
    &+\bone^{(k-1)n+i+1} \vert \partial_l f_{\tau_{1,0}}(\bw_{k,0}) \vert
    \\
    &+(1-\bone)\sum_{j=0}^i\bone^{(k-1)n+i-((k-1)n+j)} \left(\sum_{p=0}^{j-1} C_1\eta_{k}\sqrt{d}L_1\sqrt{D_1}\Vert \nabla f(\bw_{k,p})\Vert+(L_0+L_1\sqrt{D_0})C_1\eta_{k}\sqrt{d}j\right)
    \\
    &+(1-\bone)\sum_{t=1}^{k-1}\sum_{j=0}^{n-1} \bone^{tn+i-j}\left(\sum_{p=j}^{n-1} C_1\eta_{k-t}\sqrt{d}L_1\sqrt{D_1}\Vert \nabla f(\bw_{k-t,p})\Vert\right.
    \\
    &\left.+
    \sum_{r=1}^{t-1}\sum_{p=0}^{n-1} C_1\eta_{k-r}\sqrt{d} L_1\sqrt{D_1}\Vert \nabla f(\bw_{k-r,p})\Vert+2\sqrt{2}(L_0+L_1\sqrt{D_0})C_1\sqrt{d}\eta_{k} (tn-j)\right)
    \\
    &+\bone^{(k-1)n+i+1} \left(\sum_{t=1}^{k-1}\sum_{p=0}^{n-1} L_1\sqrt{D_1}\Vert \nabla f(\bw_{k-r,p})\Vert C_1\eta_{k-r}\sqrt{d}+2\sqrt{2}(L_0+L_1\sqrt{D_0})C_1\sqrt{d}\eta_{k} (k-1)n\right)
    \\
    \overset{(\star)}{\le} &  \max_{i\in [n]}\left\vert \partial_l f_i(\bw_{k,0}) \right\vert +\left(n+\frac{2\sqrt{2}\bone}{1-\bone}\right)\sqrt{d}C_1(L_0+L_1\sqrt{D_0})\eta_k+L_1C_1\sqrt{D_1}\eta_k\sum_{j=0}^{i-1}\Vert \nabla f(\bw_{k,j}) \Vert
    \\
    &+L_1C_1\sqrt{D_1}\sum_{t=1}^{k-1}\eta_{k-t}\sum_{j=0}^{n-1} \bone^{tn+i-j}\Vert \nabla f(\bw_{k-t,j}) \Vert,
\end{align*}
\normalsize
where {\color{blue}Inequality $(\star)$ is due to an exchange in the sum order}.

Following the same routine, we have 
\begin{align*}
    &\vert \bom_{l,k,-1} \vert
    \\
    \le &(1-\bone)\sum_{t=1}^{k-1}\sum_{j=0}^{n-1} \bone^{tn-1-j}\vert \partial_l f_{\tau_{k-t,j}} (\bw_{k-t,j}) \vert +\bone^{(k-1)n} \vert \partial_l f_{\tau_{1,0}}(\bw_{1,0}) \vert
    \\
    \le & (1-\bone)\sum_{t=1}^{k-1}\sum_{j=0}^{n-1} \bone^{tn-1-j}\vert \partial_l f_{\tau_{k-t,j}} (\bw_{k,0}) \vert +\bone^{(k-1)n} \vert \partial_l f_{\tau_{1,0}}(\bw_{k,0}) \vert
    \\
     &+(1-\bone)\sum_{t=1}^{k-1}\sum_{j=0}^{n-1} \bone^{tn-1-j} C_1\sqrt{d}\left(\sum_{p=j}^{n-1} L_1\sqrt{D_1}\Vert \nabla f(\bw_{k-t,p})\Vert\eta_{k-t}+  \sum_{r=1}^{t-1}\sum_{p=0}^{n-1}  L_1\sqrt{D_1}\Vert \nabla f(\bw_{k-r,p})\Vert\eta_{k-r}\right.
    \\
    &+\left.  2\sqrt{2}(L_0+L_1\sqrt{D_0})C_1\sqrt{d}\eta_{k} (tn-j)
  \right)
    \\
    &+ \bone^{(k-1)n} \left(\sum_{t=1}^{k-1}\sum_{p=0}^{n-1} L_1\sqrt{D_1}\Vert \nabla f(\bw_{k-r,p})\Vert C_1\eta_{k-r}\sqrt{d}+2\sqrt{2}(L_0+L_1\sqrt{D_0})C_1\sqrt{d}\eta_{k} (k-1)n\right)
    \end{align*}
    \begin{align*}
    \le & \max_{i\in [n]}\left\vert \partial_l f_i(\bw_{k,0}) \right\vert+\sum_{t=1}^{k-1}\sum_{j=0}^{n-1}\bone^{tn-1-j} C_1\eta_{k-t}\sqrt{d} L_1\sqrt{D_1}\Vert \nabla f(\bw_{k-t,j})\Vert~~~~~~~~~~~~~~~~~~~~~~~~~~~~~~~~~~~~~~~~~~~~~~~~~~~~
    \\
    &+\frac{2\sqrt{2}(L_0+L_1\sqrt{D_0})C_1\sqrt{d}\eta_{k}}{1-\bone}.
\end{align*}

The proof is completed.
\end{proof}

Similarly, we can  upper and lower bound the adaptor $\bnu_{k,0}$ by the gradient plus some error terms.

\begin{lemma}[Estimation of the norm of the adaptor]
\label{lem: estimation_adaptor}
We have for all $l\in [d], k\in \mathbb{Z}^+$,
\begin{align*}
    \vert \bnu_{l,k,0} \vert \ge& \btwo^n\frac{1-\btwo}{1-\btwo^n} \sum_{i\in [n]} \partial_l f_i(\bw_{k,0})^2
        -\sqrt{\sum_{i\in [n]} \vert \partial_l f_i(\bw_{k,0})^2\vert}\left( 8\sqrt{2n}\eta_k C_1L_0 \frac{1-\btwo}{(1-\btwo^n)^2}\btwo^n\right.
        \\
        &+\left.4L_1C_1\frac{1-\btwo}{1-\btwo^n}\frac{\sqrt{1-\btwo}}{1-\sqrt{\btwo}}   \left(\sum_{t=1}^{k-1}\btwo^{n} \sqrt{\btwo}^{(r-1)n}\eta_{k-t}\sum_{j=0}^{n-1} (\sqrt{D_1}\Vert \nabla f (\bw_{k-t,j}) \Vert+\sqrt{D_0})\right)\right),
\end{align*}
and 
\begin{align*}
    \vert \bnu_{l,k,0} \vert\le& 2\max_{i\in [n]}\partial_l f_i(\bw_{k,0})^2+2 \left( 2\sqrt{2} \eta_k C_1(L_0+L_1\sqrt{D_0})\frac{\sqrt{1-\btwo}}{1-\sqrt{\btwo}}\frac{\sqrt{\btwo}}{1-\sqrt{\btwo}} \right.
    \\
    &+\left. L_1C_1\sqrt{D_1}\sum_{t=1}^{k-1}\eta_{k-t}\frac{\sqrt{1-\btwo}}{1-\sqrt{\btwo}}\sum_{j=0}^{n-1}\sqrt{\btwo}^{(t-1)n} \Vert \nabla f(\bw_{k-t,j}) \Vert \right)^2,
\end{align*}
where $C_1$ is defined in Eq. (\ref{eq: define_constants}).
\end{lemma}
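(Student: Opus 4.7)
The plan is to expand $\bnu_{l,k,0}$ as an explicit EMA of past squared coordinate-gradients and then rewrite each $\partial_l f_{\tau_{r,j}}(\bw_{r,j})$ as $\partial_l f_{\tau_{r,j}}(\bw_{k,0})$ plus a trajectory-drift correction $\Delta_{r,j}\triangleq \partial_l f_{\tau_{r,j}}(\bw_{r,j})-\partial_l f_{\tau_{r,j}}(\bw_{k,0})$. Unrolling the recursion $\bnu_{k,i}=\btwo\bnu_{k,i-1}+(1-\btwo)g_{k,i}^{\odot 2}$ across the $(k-1)n+1$ iterates from $(1,-1)$ to $(k,0)$ yields the closed form
\begin{equation*}
\bnu_{l,k,0}=\btwo^{(k-1)n+1}\bnu_{l,1,-1}+(1-\btwo)\sum_{r=1}^{k-1}\sum_{j=0}^{n-1}\btwo^{(k-r)n-j}\,\partial_l f_{\tau_{r,j}}(\bw_{r,j})^2+(1-\btwo)\,\partial_l f_{\tau_{k,0}}(\bw_{k,0})^2,
\end{equation*}
and both bounds follow by combining this representation with the decomposition above, Assumption~\ref{assum:regular}, and Assumption~\ref{assum:GC}.

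\textbf{Upper bound.} I would apply $(a+b)^2\le 2a^2+2b^2$ summand-wise. For the $a^2$ piece, bounding every $\partial_l f_{\tau_{r,j}}(\bw_{k,0})^2$ by $\max_{i}\partial_l f_i(\bw_{k,0})^2$ and using that the total EMA weight $(1-\btwo)\sum_{r,j}\btwo^{(k-r)n-j}+(1-\btwo)+\btwo^{(k-1)n+1}$ equals $1$ yields the leading $2\max_i\partial_l f_i(\bw_{k,0})^2$ (the initialization $\bnu_{l,1,-1}$ is itself a max of squared $\partial_l f_i(\bw_{0})$'s, which I convert into a max at $\bw_{k,0}$ at the cost of another drift term absorbed into the second piece). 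For the $b^2$ (drift) piece, I would rewrite $(1-\btwo)\btwo^{(k-r)n-j}\Delta_{r,j}^2=\bigl(\sqrt{1-\btwo}\,\btwo^{((k-r)n-j)/2}|\Delta_{r,j}|\bigr)^2$ and use $\|\cdot\|_2\le\|\cdot\|_1$ to pull the square outside the sum, producing exactly the $2(\cdot)^2$ structure in the statement. Inside the resulting $\ell^1$ sum I would bound $|\Delta_{r,j}|\le (L_0+L_1\|\nabla f_{\tau_{r,j}}(\bw_{r,j})\|)\|\bw_{r,j}-\bw_{k,0}\|$ via Assumption~\ref{assum:regular}, control $\|\bw_{r,j}-\bw_{k,0}\|$ by a telescoping sum of $\le\sqrt{d}C_1\eta_{\cdot}$ per-step updates from Lemma~\ref{lem: bounded_update}, and split $\|\nabla f_{\tau_{r,j}}(\bw_{r,j})\|\le\sqrt{D_0}+\sqrt{D_1}\|\nabla f(\bw_{r,j})\|$ using Assumption~\ref{assum:GC}. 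The $L_0+L_1\sqrt{D_0}$ piece then collapses into the first error term after using $\eta_{k-t}\le\sqrt{2}\eta_k$ for the diminishing rate, while the $L_1\sqrt{D_1}\|\nabla f\|$ piece becomes the second error term; the base-$\sqrt{\btwo}$ geometric sums over $j$ and $t=k-r$ generate the factors $\frac{\sqrt{1-\btwo}}{1-\sqrt{\btwo}}$ and $\frac{\sqrt{\btwo}}{1-\sqrt{\btwo}}$.

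\textbf{Lower bound.} Here I would apply $(a+b)^2\ge a^2-2|a||b|$ to the same decomposition, discarding the nonnegative initialization and current-step contributions. For the $a^2$ piece, the key observation is that within each past epoch $r$ the indices $\{\tau_{r,j}\}_{j=0}^{n-1}$ form a permutation of $[0,n-1]$, so $\sum_{j=0}^{n-1}\partial_l f_{\tau_{r,j}}(\bw_{k,0})^2=\sum_{i\in[n]}\partial_l f_i(\bw_{k,0})^2$; lower-bounding each in-epoch EMA weight $\btwo^{(k-r)n-j}$ by its smallest value $\btwo^{(k-r)n}$ and summing the resulting geometric series $\sum_{t=1}^{k-1}\btwo^{tn}$ yields the stated coefficient $\btwo^n\frac{1-\btwo}{1-\btwo^n}$ (the finite-$k$ tail factor $1-\btwo^{(k-1)n}$ is absorbed in the slack guaranteed by the standing convention in Section~\ref{appen: aux_lemma} that lets us restart from a sufficiently large epoch). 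The cross term $-2|a||b|$ is handled by factoring $|\partial_l f_{\tau_{r,j}}(\bw_{k,0})|\le \sqrt{\sum_i\partial_l f_i(\bw_{k,0})^2}$ out of the sum and then bounding $|\Delta_{r,j}|$ exactly as in the upper-bound analysis; the $L_0$ and $L_1\sqrt{D_1}\|\nabla f\|$ pieces of $|\Delta_{r,j}|$ produce the two advertised error terms, with the $\frac{1-\btwo}{(1-\btwo^n)^2}$ factor arising because the cross term inherits one $\frac{1-\btwo}{1-\btwo^n}$ from the $a^2$-weight structure and an additional $\frac{1}{1-\btwo^n}$ from the geometric sum over epochs in the drift bound.

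\textbf{Main obstacle.} The most delicate step is the drift bookkeeping in both directions: one must apply $(L_0,L_1)$-smoothness along long chains of iterates spanning many epochs, estimate $\|\bw_{r,j}-\bw_{k,0}\|$ as a telescoping sum of $\mathcal{O}(\sqrt{d}C_1\eta_{\cdot})$ per-step updates via Lemma~\ref{lem: bounded_update}, and collapse the resulting double sum into a single $\ell^1$ expression with the correct geometric weights $\sqrt{\btwo}^{(t-1)n}$ and time-dependent step sizes $\eta_{k-t}$. One must also verify that every invocation of $(L_0,L_1)$-smoothness respects the distance constraint $\|\bw_{r,j}-\bw_{k,0}\|\le 1/L_1$, which is guaranteed by the standing assumption $2C_2\sqrt{d}\eta_1\le 1/L_1$ from Section~\ref{appen: aux_lemma}. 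Managing the simultaneous base-$\btwo$ (original EMA) and base-$\sqrt{\btwo}$ (post-Jensen) geometric decays and reconciling them with the diminishing step size $\eta_k=\eta_1/\sqrt{k}$ is the technical heart of the argument.
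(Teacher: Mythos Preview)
Your high-level structure---expanding the EMA, decomposing $\partial_l f_{\tau_{r,j}}(\bw_{r,j})=\partial_l f_{\tau_{r,j}}(\bw_{k,0})+\Delta_{r,j}$, using $(a+b)^2\le 2a^2+2b^2$ for the upper bound and $(a+b)^2\ge a^2-2|a||b|$ for the lower, and exploiting the permutation property of random reshuffling to sum $\partial_l f_{\tau_{r,j}}(\bw_{k,0})^2$ over a full epoch---matches the paper exactly. The problem is in how you control the drift $\Delta_{r,j}$.

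You propose to bound $|\Delta_{r,j}|\le (L_0+L_1\|\nabla f_{\tau_{r,j}}(\bw_{r,j})\|)\|\bw_{r,j}-\bw_{k,0}\|$ by a single invocation of Assumption~\ref{assum:regular}, and you claim the distance constraint $\|\bw_{r,j}-\bw_{k,0}\|\le 1/L_1$ is ensured by the standing assumption $2C_2\sqrt{d}\eta_1\le 1/L_1$. That claim is false: the standing assumption controls only a \emph{single-step} distance (e.g.\ $\|\bu_{k+1}-\bu_k\|$ or $\|\bw_{k,i+1}-\bw_{k,i}\|$). The cumulative distance from epoch $r$ to epoch $k$ is of order $n\sqrt{d}C_1\sum_{r'=r}^{k-1}\eta_{r'}$, which grows like $\sqrt{k}-\sqrt{r}$ and is unbounded. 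So your one-shot application of $(L_0,L_1)$-smoothness across distant iterates is not licensed by the assumption.

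The paper's fix---which your ``along long chains of iterates'' remark gestures at but does not implement---is to telescope the \emph{gradient difference} itself, not the distance: write
\[
|\Delta_{r,j}|\le \sum_{p}\bigl|\partial_l f_{\tau_{r,j}}(\bw_{\cdot,p+1})-\partial_l f_{\tau_{r,j}}(\bw_{\cdot,p})\bigr|
\]
over all intermediate iterates between $(r,j)$ and $(k,0)$, and apply Assumption~\ref{assum:regular} to each \emph{adjacent} pair, where the distance is $\le\sqrt{d}C_1\eta_{\cdot}\le 1/L_1$. This is exactly the computation done in the proof of Lemma~\ref{lem: estimation_momentum}, which the paper reuses here. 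Crucially, this changes the shape of the error bound: each intermediate step $(k-r',p)$ contributes its own $L_1\sqrt{D_1}\|\nabla f(\bw_{k-r',p})\|\,\eta_{k-r'}$ term, and after exchanging the order of summation you obtain the advertised $\sum_{t=1}^{k-1}\eta_{k-t}\sum_{j}\sqrt{\btwo}^{(t-1)n}\|\nabla f(\bw_{k-t,j})\|$ structure. Your one-shot bound would instead produce $\|\nabla f(\bw_{r,j})\|$ only at the endpoint times the total path length, which does not match the statement and is not correctable without redoing the telescoping at the gradient level.
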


\begin{proof}
    By the definition of $\bnu_{l,k,0}$, we have 
    \begin{small}
    \begin{align*}
        &\bnu_{l,k,0}
        \\
        =&(1-\btwo) \partial_l f_{\tau_{k,0}} (\bw_{k,0})^2 + \sum_{t=1}^{k-1}\sum_{j=0}^{n-1} (1-\btwo)\btwo^{tn-j}  \partial_l f_{\tau_{k-t,j}} (\bw_{k-t,j})^2 +\btwo^{(k-1)n+1} \max_{i\in [n]}\partial_l f_i(\bw_{1,0})^2
        \\
        \ge & (1-\btwo) \partial_l f_{\tau_{k,0}} (\bw_{k,0})^2 + \sum_{t=1}^{k-1}\sum_{j=0}^{n-1} (1-\btwo)\btwo^{tn}  \partial_l f_{\tau_{k-t,j}} (\bw_{k-t,j})^2
        +\btwo^{(k-1)n+1} \frac{1}{n}\sum_{i=1}^n\partial_l f_i(\bw_{1,0})^2
        \\
        =&(1-\btwo) \partial_l f_{\tau_{k,0}} (\bw_{k,0})^2 + \sum_{t=1}^{k-1}\sum_{j=0}^{n-1} (1-\btwo)\btwo^{tn}  (\partial_l f_{\tau_{k-t,j}} (\bw_{k,0})+\partial_l f_{\tau_{k-t,j}} (\bw_{k-t,j})-\partial_l f_{\tau_{k-t,j}} (\bw_{k,0}))^2
        \\
        &+\btwo^{(k-1)n+1} \frac{1}{n}\sum_{i=1}^n(\partial_l f_i(\bw_{k,0})+\partial_l f_i(\bw_{1,0})-\partial_l f_i(\bw_{k,0}))^2
        \\
        \ge &(1-\btwo) \partial_l f_{\tau_{k,0}} (\bw_{k,0})^2 + \sum_{t=1}^{k-1}\sum_{j=0}^{n-1} (1-\btwo)\btwo^{tn}  \partial_l f_{\tau_{k-t,j}} (\bw_{k,0})^2
        +\btwo^{(k-1)n+1} \frac{1}{n}\sum_{i=1}^n\partial_l f_i(\bw_{k,0})^2
        \\
        &-\sum_{t=1}^{k-1}\sum_{j=0}^{n-1} (1-\btwo)\btwo^{tn}  \vert\partial_l f_{\tau_{k-t,j}} (\bw_{k,0})\vert \vert \partial_l f_{\tau_{k-t,j}} (\bw_{k,0})-\partial_l f_{\tau_{k-t,j}} (\bw_{k-t,j}) \vert
        \\
        &-\btwo^{(k-1)n+1} \frac{1}{n}\sum_{i=1}^n \vert \partial_l f_i(\bw_{k,0})\vert \vert \partial_l f_i(\bw_{k,0})-\partial_l f_i(\bw_{1,0})\vert 
        \end{align*}
        \end{small}
        Since $f_i$ is $(L_0,L_1)$-smooth, the RHS of the above inequality can be further lower bounded as follows:
\begin{small}
        \begin{align*}
        & \left(\btwo^n\frac{1-\btwo^{(k-1)n}}{1-\btwo^n}(1-\btwo)+\frac{\btwo^{(k-1)n+1}}{n}\right) \sum_{i\in [n]} \partial_l f_i(\bw_{k,0})^2
        \\
        &-\sum_{t=1}^{k-1}\sum_{j=0}^{n-1} (1-\btwo)\btwo^{tn}  \vert\partial_l f_{\tau_{k-t,j}} (\bw_{k,0})\vert\left(
    \sum_{r=1}^{t}\sum_{p=0}^{n-1}  L_1\sqrt{D_1}\Vert \nabla f(\bw_{k-r,p})\Vert C_1\eta_{k-r}\sqrt{d}+
2\sqrt{2}(L_0+L_1\sqrt{D_0})C_1\sqrt{d}\eta_{k} tn\right)
        \\
        &-\btwo^{(k-1)n+1} \frac{1}{n}\sum_{i=1}^n \vert \partial_l f_i(\bw_{k,0})\vert \left(
    \sum_{r=1}^{k-1}\sum_{p=0}^{n-1}  L_1\sqrt{D_1}\Vert \nabla f(\bw_{k-r,p})\Vert C_1\eta_{k-r}\sqrt{d}+
2\sqrt{2}(L_0+L_1\sqrt{D_0})C_1\sqrt{d}\eta_{k} (k-1)n\right)
\\
\ge & \btwo^n\frac{1-\btwo}{1-\btwo^n} \sum_{i\in [n]} \partial_l f_i(\bw_{k,0})^2
        \\
        &-\sum_{t=1}^{k-1}\sum_{j=0}^{n-1} (1-\btwo)\btwo^{tn}  \vert\partial_l f_{\tau_{k-t,j}} (\bw_{k,0})\vert\left(
    \sum_{r=1}^{t}\sum_{p=0}^{n-1}  L_1\sqrt{D_1}\Vert \nabla f(\bw_{k-r,p})\Vert C_1\eta_{k-r}\sqrt{d}+
2\sqrt{2}(L_0+L_1\sqrt{D_0})C_1\sqrt{d}\eta_{k} tn\right)
        \\
        &-\btwo^{(k-1)n+1} \frac{1}{n}\sum_{i=1}^n \vert \partial_l f_i(\bw_{k,0})\vert \left(
    \sum_{r=1}^{k-1}\sum_{p=0}^{n-1}  L_1\sqrt{D_1}\Vert \nabla f(\bw_{k-r,p})\Vert C_1\eta_{k-r}\sqrt{d}+
2\sqrt{2}(L_0+L_1\sqrt{D_0})C_1\sqrt{d}\eta_{k} (k-1)n\right),
\end{align*}
    \end{small}
where the last inequality we use $\btwo^n\frac{1-\btwo^{(k-1)n}}{1-\btwo^n}(1-\btwo)+\frac{\btwo^{(k-1)n+1}}{n}\ge \btwo^n \frac{1-\btwo}{1-\btwo^n}$.
\begin{small}
\begin{align*}
    \ge & \btwo^n\frac{1-\btwo}{1-\btwo^n} \sum_{i\in [n]} \partial_l f_i(\bw_{k,0})^2
        -8\sqrt{2}\eta_k C_1L_0 \frac{1-\btwo}{(1-\btwo^n)^2}\btwo^n\sum_{i\in [n]} \vert \partial_l f_i(\bw_{k,0})\vert
        \\
        &-4L_1C_1\frac{1-\btwo}{1-\btwo^n} \sum_{i\in [n]} \vert \partial_l f_i(\bw_{k,0}) \vert \left(\sum_{r=1}^{k-1}\btwo^{rn}\eta_{k-r}\sum_{j=0}^{n-1} \Vert \nabla f_i (\bw_{k-r,j}) \Vert\right)
        \\
        \ge & \btwo^n\frac{1-\btwo}{1-\btwo^n} \sum_{i\in [n]} \partial_l f_i(\bw_{k,0})^2
        -8\sqrt{2}\eta_k C_1L_0 \frac{1-\btwo}{(1-\btwo^n)^2}\btwo^n\sum_{i\in [n]} \vert \partial_l f_i(\bw_{k,0})\vert
        \\
        &-4L_1C_1\frac{1-\btwo}{1-\btwo^n}  \Vert \nabla f_i(\bw_{k,0}) \Vert \left(\sum_{r=1}^{k-1}\btwo^{rn}\eta_{k-r}\sum_{j=0}^{n-1} (\sqrt{D_1}\Vert \nabla f (\bw_{k-r,j}) \Vert+\sqrt{D_0})\right)
        \\
        \ge &\btwo^n\frac{1-\btwo}{1-\btwo^n} \sum_{i\in [n]} \partial_l f_i(\bw_{k,0})^2
        -8\sqrt{2n}\eta_k C_1L_0 \frac{1-\btwo}{(1-\btwo^n)^2}\btwo^n\sqrt{\sum_{i\in [n]} \vert \partial_l f_i(\bw_{k,0})^2\vert}
        \\
        &-4L_1C_1\frac{1-\btwo}{1-\btwo^n}  \sqrt{\sum_{i\in [n]} \vert \partial_l f_i(\bw_{k,0})^2\vert} \left(\sum_{r=1}^{k-1}\btwo^{rn}\eta_{k-r}\sum_{j=0}^{n-1} (\sqrt{D_1}\Vert \nabla f (\bw_{k-r,j}) \Vert+\sqrt{D_0})\right)
        \\
        \ge &\btwo^n\frac{1-\btwo}{1-\btwo^n} \sum_{i\in [n]} \partial_l f_i(\bw_{k,0})^2
        -8\sqrt{2n}\eta_k C_1L_0 \frac{1-\btwo}{(1-\btwo^n)^2}\btwo^n\sqrt{\sum_{i\in [n]} \vert \partial_l f_i(\bw_{k,0})^2\vert}
        \\
        &-4L_1C_1\frac{1-\btwo}{1-\btwo^n} \frac{\sqrt{1-\btwo}}{1-\sqrt{\btwo}}  \sqrt{\sum_{i\in [n]} \vert \partial_l f_i(\bw_{k,0})^2\vert} \left(\sum_{r=1}^{k-1}\btwo^{n}\sqrt{\btwo}^{(r-1)n}\eta_{k-r}\sum_{j=0}^{n-1} (\sqrt{D_1}\Vert \nabla f (\bw_{k-r,j}) \Vert+\sqrt{D_0})\right).
\end{align*}
\end{small}

The first claim is proved.

As for the upper bound, we have
\begin{small}
\begin{align*}
    &\bnu_{l,k,0}
    \\
    =&(1-\btwo) \partial_l f_{\tau_{k,0}} (\bw_{k,0})^2 + \sum_{t=1}^{k-1}\sum_{j=0}^{n-1} (1-\btwo)\btwo^{tn-j}  \partial_l f_{\tau_{k-t,j}} (\bw_{k-t,j})^2
    +\btwo^{(k-1)n+1} \max_{i\in [n]}\partial_l f_i(\bw_{1,0})^2
    \\
    \le & 2(1-\btwo) \partial_l f_{\tau_{k,0}} (\bw_{k,0})^2 + 2\sum_{t=1}^{k-1}\sum_{j=0}^{n-1} (1-\btwo)\btwo^{tn-j}  \partial_l f_{\tau_{k-t,j}} (\bw_{k,0})^2
    +2\btwo^{(k-1)n+1} \max_{i\in [n]}\partial_l f_i(\bw_{k,0})^2
    \\
    &+2\sum_{t=1}^{k-1}\sum_{j=0}^{n-1} (1-\btwo)\btwo^{tn-j} \left(\sum_{p=j}^{n-1} L_1\sqrt{D_1}\Vert \nabla f(\bw_{k-t,p})C_1\eta_{k-t}\sqrt{d}\right.
    \\
    &\left.+
    \sum_{r=1}^{t-1}\sum_{p=0}^{n-1}  L_1\sqrt{D_1}\Vert \nabla f(\bw_{k-r,p})\Vert C_1\eta_{k-r}\sqrt{d}
    +2\sqrt{2}(L_0+L_1\sqrt{D_0})C_1\sqrt{d}\eta_{k} (tn-j)\right)^2
    \\
    &+2\btwo^{(k-1)n+1} \left(
    \sum_{r=1}^{k-1}\sum_{p=0}^{n-1}  L_1\sqrt{D_1}\Vert \nabla f(\bw_{k-r,p})\Vert C_1\eta_{k-r}\sqrt{d}
    +2\sqrt{2}(L_0+L_1\sqrt{D_0})C_1\sqrt{d}\eta_{k} (k-1)n\right)^2
    \\
    \le & 2\max_{i\in [n]}\partial_l f_i(\bw_{k,0})^2 + 2\left(\sum_{t=1}^{k-1}\sum_{j=0}^{n-1} \sqrt{1-\btwo}\sqrt{\btwo}^{tn-j} \left(\sum_{p=j}^{n-1} L_1\sqrt{D_1}\Vert \nabla f(\bw_{k-t,p})C_1\eta_{k-t}\sqrt{d}\right.\right.
    \\
    &\left.+
    \sum_{r=1}^{t-1}\sum_{p=0}^{n-1}  L_1\sqrt{D_1}\Vert \nabla f(\bw_{k-r,p})\Vert C_1\eta_{k-r}\sqrt{d}
    +2\sqrt{2}(L_0+L_1\sqrt{D_0})C_1\sqrt{d}\eta_{k} (tn-j)\right)
    \\
    &+\left.\sqrt{\btwo}^{(k-1)n+1} \left(
    \sum_{r=1}^{k-1}\sum_{p=0}^{n-1}  L_1\sqrt{D_1}\Vert \nabla f(\bw_{k-r,p})\Vert C_1\eta_{k-r}\sqrt{d}
    +2\sqrt{2}(L_0+L_1\sqrt{D_0})C_1\sqrt{d}\eta_{k} (k-1)n\right)\right)^2
    \\
    \le & 2\max_{i\in [n]}\partial_l f_i(\bw_{k,0})^2+2 \left( 2\sqrt{2} \eta_k C_1(L_0+L_1\sqrt{D_0})\frac{\sqrt{1-\btwo}}{1-\sqrt{\btwo}}\frac{\sqrt{\btwo}}{1-\sqrt{\btwo}} \right.
    \\
    &+\left. L_1C_1\sqrt{D_1}\sum_{t=1}^{k-1}\eta_{k-t}\frac{\sqrt{1-\btwo}}{1-\sqrt{\btwo}}\sum_{j=0}^{n-1}\sqrt{\btwo}^{tn-j} \Vert \nabla f(\bw_{k-t,j}) \Vert \right)^2
    \\
    \le & 2\max_{i\in [n]}\partial_l f_i(\bw_{k,0})^2+2 \left( 2\sqrt{2} \eta_k C_1(L_0+L_1\sqrt{D_0})\frac{\sqrt{1-\btwo}}{1-\sqrt{\btwo}}\frac{\sqrt{\btwo}}{1-\sqrt{\btwo}} \right.
    \\
    &+\left. L_1C_1\sqrt{D_1}\sum_{t=1}^{k-1}\eta_{k-t}\frac{\sqrt{1-\btwo}}{1-\sqrt{\btwo}}\sum_{j=0}^{n-1}\sqrt{\btwo}^{(t-1)n} \Vert \nabla f(\bw_{k-t,j}) \Vert \right)^2.
\end{align*}
\end{small}
The proof is completed.
\end{proof}

We then immediately have the following corollary when $\max_{i\in [n]} \vert \partial_l f_i (\bw_{k,0}) \vert$ is large enough compared to the error term.

\begin{corollary}[Lemma \ref{lem: main_difference_nu}, formal]
\label{coro: large_derivative}
If 
\begin{align}
\nonumber
    \max_{i\in [n]} \vert \partial_l f_i (\bw_{k,0}) \vert\ge&
   4L_1C_1  \frac{\sqrt{1-\btwo}}{1-\sqrt{\btwo}} \left(\sum_{r=1}^{k-1}\sqrt{\btwo}^{(r-1)n}\eta_{k-r}\sum_{j=0}^{n-1} (\sqrt{D_1}\Vert \nabla f (\bw_{k-r,j}) \Vert+\sqrt{D_0})\right)
    \\
\nonumber
    &+2\sqrt{2} \eta_k C_1(L_0+L_1\sqrt{D_0})\frac{\sqrt{1-\btwo}}{1-\sqrt{\btwo}}\frac{\sqrt{\btwo}}{1-\sqrt{\btwo}} + 8\sqrt{2n}\eta_k C_1L_0 \frac{1}{1-\btwo^n}
    \\
\label{eq: large_small_crit}
    &+ \eta_k C_1\left(n(L_0+L_1\sqrt{D_0})+L_1\sqrt{D_1}\left(\sum_{p=0}^{n-1} \Vert \nabla f(\bw_{k,p}) \Vert \right)\right),
    \end{align}
    then 
    \begin{equation*}
        \frac{\btwo^n}{2} \frac{1}{n}\sum_{i\in [n]}  \partial_l f_i (\bw_{k,0})^2\le \bnu_{l,k,0} \le 4\max_{i\in [n]}  \partial_l f_i (\bw_{k,0})^2,
    \end{equation*}
    where $C_1$ is defined in Eq. (\ref{eq: define_constants}).
Furthermore, if Eq. (\ref{eq: large_small_crit}) holds, we have $\forall i \in \{0,\cdots,n-1\}$,
\begin{equation*}
    \btwo^{n-1} \bnu_{l,k,0}\le \bnu_{l,k,i}\le  \left(\btwo^{n-1}+8n\frac{1-\btwo^{n-1}}{\btwo^n}\right) \bnu_{l,k,0},
\end{equation*}
and 
\begin{equation*}
\frac{1}{\btwo}\left(1-(1-\btwo)\frac{2n}{\btwo^n}\right)\bnu_{l,k,0}\le \bnu_{l,k,-1}\le  \frac{1}{\btwo} \bnu_{l,k,0},
\end{equation*}
\end{corollary}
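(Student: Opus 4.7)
The plan is to handle the three claimed inequalities sequentially. First I prove the two-sided bound on $\bnu_{l,k,0}$ by directly plugging Lemma \ref{lem: estimation_adaptor} into hypothesis \eqref{eq: large_small_crit}. Next I extend this to $\bnu_{l,k,i}$ for $i\in\{0,\dots,n-1\}$ by unrolling the intra-epoch recursion $\bnu_{l,k,i}=\btwo\bnu_{l,k,i-1}+(1-\btwo)\partial_l f_{\tau_{k,i}}(\bw_{k,i})^2$. Finally I derive the bound on $\bnu_{l,k,-1}$ by inverting this recursion at $i=0$. Throughout, I denote the four summands on the right-hand side of \eqref{eq: large_small_crit} as $T_1,T_2,T_3,T_4$ in order, and set $\alpha \triangleq \btwo^n(1-\btwo)/(1-\btwo^n)$.

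For Step~1, a direct algebraic manipulation identifies the two error coefficients in the lower bound of Lemma \ref{lem: estimation_adaptor} as $\alpha T_3$ and $\alpha T_1$, via the identities $(1-\btwo)\btwo^n/(1-\btwo^n)^2=\alpha/(1-\btwo^n)$ and $(1-\btwo)\btwo^n/(1-\btwo^n)=\alpha$. Similarly, the two errors appearing in the upper bound of the lemma reduce to $T_2$ and a fraction of $T_1$. Hypothesis \eqref{eq: large_small_crit} then forces $T_1+T_3\leq\max_i|\partial_l f_i(\bw_{k,0})|$ and $T_2+T_1/4\leq\max_i|\partial_l f_i(\bw_{k,0})|$; combined with the elementary inequalities $\max_i\partial_l f_i(\bw_{k,0})^2\leq\sum_i\partial_l f_i(\bw_{k,0})^2\leq n\max_i\partial_l f_i(\bw_{k,0})^2$ and $\alpha\geq\btwo^n/n$, this yields the stated $\frac{\btwo^n}{2n}\sum_i\partial_l f_i(\bw_{k,0})^2\leq\bnu_{l,k,0}\leq 4\max_i\partial_l f_i(\bw_{k,0})^2$.

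For Step~2, unrolling the intra-epoch recursion gives $\bnu_{l,k,i}=\btwo^i\bnu_{l,k,0}+(1-\btwo)\sum_{j=1}^i\btwo^{i-j}\partial_l f_{\tau_{k,j}}(\bw_{k,j})^2$. The lower bound $\btwo^{n-1}\bnu_{l,k,0}\leq\bnu_{l,k,i}$ follows by discarding the nonnegative tail and using $i\leq n-1$. For the upper bound, I would use $(L_0,L_1)$-smoothness together with Lemma \ref{lem: bounded_update} to obtain $\partial_l f_{\tau_{k,j}}(\bw_{k,j})^2\leq 2\max_i\partial_l f_i(\bw_{k,0})^2+2\,\mathrm{err}_{k,j}^2$, where the in-epoch drift $\mathrm{err}_{k,j}$ is precisely controlled by the hypothesis's fourth term $T_4$, and then turn $\max^2$ into $(2n/\btwo^n)\bnu_{l,k,0}$ via the Step~1 lower bound. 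Summing the geometric weights $\sum_j\btwo^{i-j}=(1-\btwo^i)/(1-\btwo)$ with $i\leq n-1$ produces the target coefficient $8n(1-\btwo^{n-1})/\btwo^n$. Step~3 is then immediate: from $\bnu_{l,k,0}=\btwo\bnu_{l,k,-1}+(1-\btwo)\partial_l f_{\tau_{k,0}}(\bw_{k,0})^2$, the upper bound $\bnu_{l,k,-1}\leq\bnu_{l,k,0}/\btwo$ follows by dropping the nonnegative last term, and the lower bound by bounding $\partial_l f_{\tau_{k,0}}(\bw_{k,0})^2\leq (2n/\btwo^n)\bnu_{l,k,0}$ and rearranging.

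The main obstacle is the constant tracking in Step~1: verifying that the four terms of \eqref{eq: large_small_crit} line up precisely with the errors in both directions of Lemma \ref{lem: estimation_adaptor} (so that $T_1,T_3$ appear in the lower bound, $T_2$ with $T_1/4$ in the upper, and $T_4$ is reserved for Step~2), and closing the residual factor-of-two gap by exploiting that each individual $T_j$ is only a proper fraction of $\max_i|\partial_l f_i(\bw_{k,0})|$ rather than the whole max itself. This is precisely why \eqref{eq: large_small_crit} is stated as a four-term sum rather than a single pointwise bound on any one error term.
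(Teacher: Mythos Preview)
Your approach is essentially the same as the paper's: plug hypothesis \eqref{eq: large_small_crit} into Lemma~\ref{lem: estimation_adaptor} for the first claim (your identification of the lower-bound errors as $\alpha T_1,\alpha T_3$ and of the upper-bound errors as $T_2$ and $\le T_1/4$ is exactly right), then unroll the intra-epoch recursion, bound $|\partial_l f_{\tau_{k,j}}(\bw_{k,j})|$ via the $(L_0,L_1)$-smooth drift controlled by $T_4$, convert $\max_i\partial_l f_i(\bw_{k,0})^2\le \tfrac{2n}{\btwo^n}\bnu_{l,k,0}$ using the Step~1 lower bound, and finally invert the recursion at $i=0$ for $\bnu_{l,k,-1}$. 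The paper handles the constant-tracking you flag in Step~1 by simply asserting the first claim follows ``by directly applying the range of $\max_i|\partial_l f_i(\bw_{k,0})|$ into Lemma~\ref{lem: estimation_adaptor}'', so your proposal is at the same level of detail as the paper's own proof.
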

\begin{proof}
    The first claim is derived by directly applying the range of $  \max_{i\in [n]} \vert \partial_l f_i (\bw_{k,0}) \vert$ into Lemma \ref{lem: estimation_adaptor}.
    
    As for the second claim, we have 
    \begin{equation*}
        \bnu_{l,k,i} = \btwo^i \bnu_{l,k,0} + (1-\btwo) (\partial_l f_{\tau_{k,i}}(\bw_{k,i})^2+\cdots+\btwo^{i-1} \partial_l f_{\tau_{k,i}}(\bw_{k,1})^2).
    \end{equation*}
    On the other hand, since $\forall j \in \{0,\cdots,n-1\}$
    \begin{align*}
        \vert \partial_l f_i(\bw_{k,j}) \vert \le& \max_{p\in [n]} \vert \partial_l f_p (\bw_{k,0}) \vert +\eta_k C_1 \left(j(L_0+L_1\sqrt{D_0})+L_1\sqrt{D_1}\left(\sum_{p=0}^{j-1} \Vert \nabla f(\bw_{k,p}) \Vert \right)\right)
        \\
        \le &\max_{p\in [n]} \vert \partial_l f_p (\bw_{k,0}) \vert +\eta_k C_1 \left(n(L_0+L_1\sqrt{D_0})+L_1\sqrt{D_1}\left(\sum_{p=0}^{n-1} \Vert \nabla f(\bw_{k,p}) \Vert \right)\right),
    \end{align*}
    we have 
    \begin{small}
    \begin{align*}
        &\btwo^{n-1} \bnu_{l,k,0}\le \bnu_{l,k,i}
        \\
        \le &\btwo^i \bnu_{l,k,0} + 2(1-\btwo) \max_{p\in [n]}\partial_l f_p (\bw_{k,0})^2 (1+\cdots+\btwo^{i-1})
        \\
        &+2(1-\btwo)(1+\cdots+\btwo^{i-1})\eta_k^2 C_1^2\left(n(L_0+L_1\sqrt{D_0})+L_1\sqrt{D_1}\left(\sum_{p=0}^{n-1} \Vert \nabla f(\bw_{k,p}) \Vert \right)\right)^2
        \\
        =& \btwo^i \bnu_{l,k,0} + 2(1-\btwo^i) \max_{p\in [n]}\partial_l f_p (\bw_{k,0})^2 
       +2(1-\btwo^i)\eta_k^2 C_1^2\left(n(L_0+L_1\sqrt{D_0})+L_1\sqrt{D_1}\left(\sum_{p=0}^{n-1} \Vert \nabla f(\bw_{k,p}) \Vert \right)\right)^2.
    \end{align*}
        \end{small}
        
    Therefore, if Eq. (\ref{eq: large_small_crit}) holds,
    we then have
    \begin{align*}
      \bnu_{l,k,i} \le&  \btwo^i \bnu_{l,k,0} + 4(1-\btwo^i) \max_{p\in [n]}\partial_l f_p (\bw_{k,0})^2
      \\
      \le & \btwo^i \bnu_{l,k,0} + 4\frac{n}{n}(1-\btwo^i) \sum_{p\in [n]}\partial_l f_p (\bw_{k,0})^2
      \le \left(\btwo^i+8n\frac{1-\btwo^i}{\btwo^n}\right) \bnu_{l,k,0} 
      \\
      \le &\left(\btwo^{n-1}+8n\frac{1-\btwo^{n-1}}{\btwo^n}\right) \bnu_{l,k,0}.
    \end{align*}
    
Following the same routine, we have
\begin{equation*}
    \btwo \bnu_{l,k,-1}\le \bnu_{l,k,0},
\end{equation*}
and if Eq. (\ref{eq: large_small_crit}) holds,
\begin{align*}
    \bnu_{l,k,-1} =&\frac{1}{\btwo} \left(\bnu_{l,k,0}-(1-\btwo)  \partial_l f_{\tau_{k,0}}(\bw_{k,0})^2 \right)
    \ge \frac{1}{\btwo} \left(\bnu_{l,k,0}-(1-\btwo)\max_p  \partial_l f_{p}(\bw_{k,0})^2 \right)
    \\
    \ge& \bnu_{l,k,0} \frac{1}{\btwo} \left(1-(1-\btwo )\frac{2n}{\btwo^n}\right).
\end{align*}

The proof of the second claim is completed.
\end{proof}

\begin{remark}
\label{rm: notations}
By the notations in Eq. (\ref{eq: define_constants})., Eq. (\ref{eq: large_small_crit}) can be translated into
\begin{small}
\begin{align}
\nonumber
     \max_{i\in [n]} \vert \partial_l f_i (\bw_{k,0}) \vert\ge& C_3\eta_k+C_4 \sum_{r=1}^{k-1}\sqrt{\btwo}^{(r-1)n}\eta_{k-r}\sum_{j=0}^{n-1} \Vert \nabla f (\bw_{k-r,j}) \Vert
     \\
\label{eq: translated_crit}
     &+C_4n \sum_{r=1}^{k-1}\sqrt{\btwo}^{(r-1)n}\eta_{k-r}+\eta_k C_4\left(\sum_{j=0}^{n-1} \Vert \nabla f(\bw_{k,j}) \Vert \right).
\end{align}
\end{small}

Furthermore, we define $g(\btwo)$ as 
\begin{align*}
    g(\btwo) \triangleq \max\left\{\frac{1}{\sqrt{\btwo}^{n-1}}-1,1-\frac{1}{\sqrt{\btwo^{n-1}+8n\frac{1-\btwo^{n-1}}{\btwo^n}}},1-\sqrt{\btwo},\sqrt{\frac{\btwo}{\left(1-(1-\btwo)\frac{2n}{\btwo^n}\right)}}-1\right\},
\end{align*}
and the conclusion of Corollary \ref{coro: large_derivative} can be translated into
that if Eq. (\ref{eq: translated_crit}) holds,
\begin{equation*}
    \left\vert \frac{1}{\sqrt{\bnu_{l,k,i}}}-\frac{1}{\sqrt{\bnu_{l,k,0}}}\right\vert \le g(\btwo) \frac{1}{\sqrt{\bnu_{l,k,0}}},
\end{equation*}
and 
\begin{equation*}
   \left \vert \frac{1}{\sqrt{\bnu_{l,k,-1}}}-\frac{1}{\sqrt{\bnu_{l,k,0}}}\right\vert \le g(\btwo) \frac{1}{\sqrt{\bnu_{l,k,0}}}.
\end{equation*}
\end{remark}
Based on whether Eq. (\ref{eq: translated_crit}) is fulfilled, we divide $[d]$ into $\mathbb{L}_{large}^k$ and $\mathbb{L}_{small}^k$ ($\forall k\ge 1$), which are respectively defined as 
\begin{gather*}
    \mathbb{L}_{large}^k=\{l:l\in[d],\text{s.t. Eq. (\ref{eq: translated_crit}) holds}\},
    \\
    \mathbb{L}_{small}^k=\{l:l\in[d],\text{s.t. Eq. (\ref{eq: translated_crit}) doesn't hold}\}.
\end{gather*}

The following lemma characterizes the property of  $\mathbb{L}_{small}^k$.

\begin{lemma}
\label{lem: L_small}
Define $\bu_k\triangleq \frac{\bw_{k,0}-\bone\bw_{k,-1}}{1-\bone}$ (with $\bw_{1,-1}\triangleq \bw_{1,0}$). Then, 
    \begin{equation*}
         \sum_{k=1}^T \left\vert \sum_{l\in \mathbb{L}_{small}^k}\partial_{l}f(\bw_{k,0})  (\bu_{l,k+1}-\bu_{l,k})\right\vert \le  C_2\left(C_5 \sum_{k=1}^{T} \eta_k^2  \Vert \nabla f(\bw_{k,0}) \Vert+C_6\ln T+C_7\right),
    \end{equation*}
     where $C_2$, $C_5$, $C_6$, and $C_7$ are defined in Eq. (\ref{eq: define_constants}).
\end{lemma}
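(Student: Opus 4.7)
The plan is to exploit the fact that, by the very definition of $\mathbb{L}_{small}^k$, the quantity $\max_{i\in[n]}\lvert\partial_l f_i(\bw_{k,0})\rvert$ (and hence $\lvert\partial_l f(\bw_{k,0})\rvert$, since it is a convex combination of the per-sample partials) is upper bounded by the RHS of the negation of Eq. \eqref{eq: translated_crit}. Combined with the bound $\lvert \bu_{l,k+1}-\bu_{l,k}\rvert \le C_2\eta_k$ from Lemma \ref{lem: u_gap}, and summing the bound on $\lvert\partial_l f(\bw_{k,0})\rvert$ trivially over the (at most $d$) coordinates in $\mathbb{L}_{small}^k$, one gets the per-epoch estimate
\begin{align*}
\Bigl\lvert \sum_{l\in \mathbb{L}_{small}^k} \partial_l f(\bw_{k,0})(\bu_{l,k+1}-\bu_{l,k})\Bigr\rvert
\le d\,C_2\eta_k\Bigl( &C_3\eta_k + C_4\sum_{r=1}^{k-1}\sqrt{\btwo}^{(r-1)n}\eta_{k-r}\sum_{j=0}^{n-1}\Vert\nabla f(\bw_{k-r,j})\Vert \\
& + C_4 n\sum_{r=1}^{k-1}\sqrt{\btwo}^{(r-1)n}\eta_{k-r} + C_4\eta_k\sum_{j=0}^{n-1}\Vert\nabla f(\bw_{k,j})\Vert\Bigr).
\end{align*}

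Next, I would sum this estimate across $k\in[1,T]$ and treat the four terms separately. The first term contributes $\sum_k \eta_k^2 = O(\ln T)$ immediately. For the last (``current-epoch gradient'') term, I would invoke Lemma \ref{lem: relationship_across_iteration} to replace $\sum_j \Vert\nabla f(\bw_{k,j})\Vert$ by $n(1+n\sqrt{d}C_1\eta_1 L_1\sqrt{n D_1})\Vert\nabla f(\bw_{k,0})\Vert$ plus a $O(\eta_k)$ remainder; this yields a contribution of the form $\sum_k\eta_k^2\Vert\nabla f(\bw_{k,0})\Vert$ (the desired leading term) plus $\sum_k\eta_k^3=O(1)$.

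The real work is in the two historical-gradient terms. I would exchange the order of summation via the change of index $m=k-r$, giving $\sum_{r\ge 1}\sqrt{\btwo}^{(r-1)n}\sum_{m\ge 1}\eta_{m+r}\eta_m(\cdots)$, then use the monotonicity $\eta_{m+r}\le\eta_m$ to replace $\eta_{m+r}\eta_m$ by $\eta_m^2$; the geometric factor $\sqrt{\btwo}^{(r-1)n}$ summed over $r$ yields a constant $1/(1-\sqrt{\btwo}^n)$. For the ``$n\eta_{k-r}$'' term this directly produces $O(\ln T)$; for the term involving $\sum_j\Vert\nabla f(\bw_{m,j})\Vert$ I again apply Lemma \ref{lem: relationship_across_iteration} to reduce to $\sum_m\eta_m^2\Vert\nabla f(\bw_{m,0})\Vert$ plus a $O(\ln T)$ residue. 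Collecting all constants (which are precisely engineered as $C_5,C_6,C_7$ in Eq. \eqref{eq: define_constants}) gives the claimed bound.

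The main obstacle is the bookkeeping in the historical-gradient terms: one must verify that after swapping the summation order and absorbing $\eta_{m+r}\le\eta_m$, the remaining geometric sum in $r$ is uniformly bounded, and that the constants match $C_5$ (coefficient of $\sum_k\eta_k^2\Vert\nabla f(\bw_{k,0})\Vert$), $C_6$ (coefficient of $\ln T$), and $C_7$ (absolute constant) as defined. A subtle point is that the ``current-epoch'' term contains $\sum_j\Vert\nabla f(\bw_{k,j})\Vert$, which via Lemma \ref{lem: relationship_across_iteration} introduces a factor $1+n^{5/2}\sqrt{d}C_1\eta_1 L_1\sqrt{D_1}$ that must be folded into $C_5$; this is exactly the source of the $(1+n\sqrt{d}C_1\eta_1 L_1\sqrt{n}\sqrt{D_1})$ factor appearing inside $C_5$.
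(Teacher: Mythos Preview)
Your proposal is correct and follows essentially the same route as the paper: bound $\lvert\partial_l f(\bw_{k,0})\rvert$ by the negation of Eq.~\eqref{eq: translated_crit} and $\lvert\bu_{l,k+1}-\bu_{l,k}\rvert$ by $C_2\eta_k$ (Lemma~\ref{lem: u_gap}), sum over $k$, swap summation order on the historical-gradient terms using $\eta_k\le\eta_{k-r}$ together with $\sum_{r\ge 1}\sqrt{\btwo}^{(r-1)n}=1/(1-\sqrt{\btwo^n})$, and finally invoke Lemma~\ref{lem: relationship_across_iteration} to convert every $\Vert\nabla f(\bw_{k,j})\Vert$ into $\Vert\nabla f(\bw_{k,0})\Vert$ plus an $O(\eta_k)$ remainder. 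The paper executes exactly these steps in the same order, and your identification of how the factor $(1+n\sqrt{d}C_1\eta_1 L_1\sqrt{n}\sqrt{D_1})$ enters $C_5$ is on target.
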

\begin{proof}
By directly applying the definition of  $\mathbb{L}_{large}^k$ and Lemma \ref{lem: u_gap}, we have
\begin{small}
\begin{align*}
    & \frac{1}{n}\left\vert \sum_{l\in \mathbb{L}_{small}^k}\partial_{l}f(\bw_{k,0})  (\bu_{l,k+1}-\bu_{l,k})\right\vert
   \\
   \le &  dC_2\eta_k\left(C_3\eta_k+C_4 \sum_{r=1}^{k-1}\sqrt{\btwo}^{(r-1)n}\eta_{k-r}\sum_{j=0}^{n-1} \Vert \nabla f (\bw_{k-r,j}) \Vert\right.
     \left.+C_4n \sum_{r=1}^{k-1}\sqrt{\btwo}^{(r-1)n}\eta_{k-r}+\eta_k C_4\left(\sum_{p=0}^{n-1} \Vert \nabla f(\bw_{k,p}) \Vert \right)\right).
\end{align*}
\end{small}
Summing over $k$ from $1$ to $t$ then leads to
\begin{small}
\begin{align*}
    &\frac{1}{n}\sum_{k=1}^T \left\vert \sum_{l\in \mathbb{L}_{small}^k}\partial_{l}f(\bw_{k,0})  (\bu_{l,k+1}-\bu_{l,k})\right\vert
    \\
    \le & \sum_{k=1}^T dC_2C_3\eta^2_k +dC_2C_4  \sum_{k=1}^T \eta_k \sum_{r=1}^{k-1} \sqrt{\btwo}^{(r-1)n} \eta_{k-r}\sum_{j=0}^{n-1} \Vert \nabla f (\bw_{k-r,j}) \Vert+C_2C_4n  \sum_{k=1}^T \eta_k\sum_{r=1}^{k-1}\sqrt{\btwo}^{(r-1)n}\eta_{k-r}
    \\
    &+ C_2C_4 \sum_{k=1}^T\eta_k^2\sum_{p=0}^{n-1} \Vert \nabla f(\bw_{k,p}) \Vert 
    \\
    \le & \sum_{k=1}^T dC_2C_3\eta^2_k + \frac{dC_2C_4 }{1-
     \sqrt{\btwo^n}}\sum_{k=1}^{T-1} \eta_k^2  \sum_{j=0}^{n-1} \Vert \nabla f (\bw_{k,j}) \Vert
    +\frac{C_2C_4n  }{1-
    \sqrt{\btwo^n}}\sum_{k=1}^{T-1} \eta_{k}^2+ C_2C_4 \sum_{k=1}^T\eta_k^2\sum_{p=0}^{n-1} \Vert \nabla f(\bw_{k,p}) \Vert 
    \\
    \le &\left(dC_2C_3+\frac{C_2C_4n  }{1-
    \sqrt{\btwo^n}}\right)\eta^2_1(1+\ln T)+\left(C_2C_4 +\frac{dC_2C_4 }{1-
    \sqrt{\btwo^n}}\right) \sum_{k=1}^{T} \eta_k^2  \sum_{j=0}^{n-1} \Vert \nabla f (\bw_{k,j}) \Vert,
\end{align*}
\end{small}
where in the second inequality we exchange the sum order. By Lemma \ref{lem: relationship_across_iteration}, the above inequality further leads to
\begin{small}
\begin{align}
\nonumber
    &\sum_{k=1}^T \left\vert \sum_{l\in \mathbb{L}_{small}^k}\partial_{l}f(\bw_{k,0})  (\bu_{l,k+1}-\bu_{l,k})\right\vert
    \\
    \nonumber
    \le &n\left(C_2C_4 +\frac{dC_2C_4 }{1-
     \sqrt{\btwo^n}}\right) \sum_{k=1}^{T} \eta_k^2  \sum_{j=0}^{n-1} \left((1+n
        \sqrt{d} C_1\eta_1L_1\sqrt{n} )\Vert \nabla f(\bw_{k,0}) \Vert+\left(nL_0+L_1\sqrt{n}\sqrt{D_0}\right)n
        \sqrt{d} C_1\eta_k\right)
    \\
    \nonumber
    &+n\left(dC_2C_3+\frac{C_2C_4n  }{1-
     \sqrt{\btwo^n}}\right)\eta_1^2(1+\ln T)
    \\
    \nonumber
    \le &n^2(1+n
        \sqrt{d} C_1\eta_1L_1\sqrt{n} )\left(C_2C_4+\frac{dC_2C_4}{1-
     \sqrt{\btwo^n}}\right) \sum_{k=1}^{T} \eta_k^2  \Vert \nabla f(\bw_{k,0}) \Vert+\left(dC_2C_3+\frac{C_2C_4n }{1-
     \sqrt{\btwo^n}}\right)\eta_1^2(1+\ln T)
    \\
    \nonumber
    &+n\left(C_2C_4+\frac{dC_2C_4}{1-
     \sqrt{\btwo^n}}\right)\left(nL_0+L_1\sqrt{n}\sqrt{D_0}\right)n^2
        \sqrt{d} C_1\sum_{k=1}^T\eta_k^3
    \\
    \nonumber
    \le &n^2(1+n
        \sqrt{d} C_1\eta_1L_1\sqrt{n}\sqrt{D_1})\left(C_2C_4+\frac{dC_2C_4\sqrt{D_1}}{1-
     \sqrt{\btwo^n}}\right) \sum_{k=1}^{T} \eta_k^2  \Vert \nabla f(\bw_{k,0}) \Vert+\left(dC_2C_3+\frac{C_2C_4n \sqrt{D_1}}{1-
     \sqrt{\btwo^n}}\right)
    \\
    \label{eq: result_case_1}
    &\times \eta^2_1(1+\ln T)+3n\left(C_2C_4+\frac{dC_2C_4}{1-
     \sqrt{\btwo^n}}\right)\left(nL_0+L_1\sqrt{n}\sqrt{D_0}\right)n^2
        \sqrt{d} C_1 \eta_1^3.
\end{align}
\end{small}

By the notations in Eq. (\ref{eq: define_constants}), the proof is completed.
\end{proof}

The next lemma characterizes the property of $\mathbb{L}^k_{large}$.

\begin{lemma}
    \label{lem: L_large}
    Define $\bu_k\triangleq \frac{\bw_{k,0}-\bone\bw_{k,-1}}{1-\bone}$ (with $\bw_{1,-1}\triangleq \bw_{1,0}$). We have
    \small
\begin{align*}
     &\sum_{k=1}^T\sum_{l\in \mathbb{L}_{large}^k}\partial_{l} f(\bw_{k,0})(\bu_{l,k+1}-\bu_{l,k})
     \\
     \le &-\sum_{k=1}^T\sum_{l\in [d]}\frac{\eta_k\partial_{l} f(\bw_{k,0})^2}{2\max_{i\in [n]} \vert \partial_l f_i(\bw_{k,0}) \vert+\xi}
   \\ 
   &+
    \sum_{k=1}^T\sum_{l\in \mathbb{L}_{large}^k} \eta_k g(\btwo)\left(n-1+\frac{1+\bone}{1-\bone}\right) \frac{\vert \partial_{l} f(\bw_{k,0})\vert}{\sqrt{\frac{\btwo^n}{2n}}\max_{i\in [n]} \vert \partial_l f_i(\bw_{k,0}) \vert+\xi} \left(\max_{i\in [n]} \vert \partial_l f_i(\bw_{k,0}) \vert\right)
    \\
    &+\left((C_8+\frac{1}{2}C_5) \sum_{k=1}^{T} \eta_k^2  \Vert \nabla f(\bw_{k,0}) \Vert+(C_9+\frac{1}{2}C_6)\ln T
    +(C_{10}+\frac{1}{2}C_7)\right).
\end{align*}
\normalsize
\end{lemma}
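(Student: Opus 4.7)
The plan is to produce the stated negative first-order term by exploiting the ``momentum-telescoping'' identity highlighted in the proof sketch, treating every deviation as an error controlled either by $g(\btwo)$ (through Corollary~\ref{coro: large_derivative}) or by the $(L_0,L_1)$-smoothness. The starting point is to introduce auxiliary points $\bu_{k,i}=(\bw_{k,i}-\bone\bw_{k,i-1})/(1-\bone)$ (with $\bw_{k,-1}=\bw_{k-1,n-1}$ for $k\ge 2$), so that $\bu_{k,0}=\bu_k$ and $\bu_{k,n}=\bu_{k+1}$, and to write $\bu_{k+1}-\bu_k=\sum_{i=0}^{n-1}(\bu_{k,i+1}-\bu_{k,i})$. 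Coordinate-wise, the Adam update then gives
\begin{equation*}
\bu_{l,k,i+1}-\bu_{l,k,i} \;=\; -\frac{\eta_k}{1-\bone}\left[\frac{\bom_{l,k,i}}{\sqrt{\bnu_{l,k,i}}+\xi}-\bone\frac{\bom_{l,k,i-1}}{\sqrt{\bnu_{l,k,i-1}}+\xi}\right].
\end{equation*}

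The heart of the argument is to freeze the adaptor. For $l\in\mathbb{L}^k_{large}$, Corollary~\ref{coro: large_derivative} together with Remark~\ref{rm: notations} tells us that $1/(\sqrt{\bnu_{l,k,i}}+\xi)$ and $1/(\sqrt{\bnu_{l,k,-1}}+\xi)$ both sit within a multiplicative factor of $1\pm g(\btwo)$ of $1/(\sqrt{\bnu_{l,k,0}}+\xi)$. After substituting $\sqrt{\bnu_{l,k,0}}$ for the adaptors, the main part collapses via the momentum recursion $\bom_{l,k,i}-\bone\bom_{l,k,i-1}=(1-\bone)\partial_l f_{\tau_{k,i}}(\bw_{k,i})$ into $-\eta_k\,\partial_l f_{\tau_{k,i}}(\bw_{k,i})/(\sqrt{\bnu_{l,k,0}}+\xi)$. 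Summing over $i$ and replacing $\bw_{k,i}$ by $\bw_{k,0}$ (with error bounded via Lemma~\ref{lem: bounded_update} together with the $(L_0,L_1)$-smoothness of each $f_i$) leaves $-n\eta_k\,\partial_l f(\bw_{k,0})/(\sqrt{\bnu_{l,k,0}}+\xi)$. Pairing with $\partial_l f(\bw_{k,0})$ and invoking $\sqrt{\bnu_{l,k,0}}\le 2\max_i|\partial_l f_i(\bw_{k,0})|$ from Corollary~\ref{coro: large_derivative} produces the claimed leading term $-\eta_k\partial_l f(\bw_{k,0})^2/(2\max_i|\partial_l f_i(\bw_{k,0})|+\xi)$, restricted for now to $l\in\mathbb{L}^k_{large}$; the prefactor $n$ is folded into the constants $C_8,C_9,C_{10}$.

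Three classes of error remain to be controlled. First, the $g(\btwo)$-slack from the adaptor replacement, combined with the lower bound $\sqrt{\bnu_{l,k,0}}\ge\sqrt{\btwo^n/(2n)}\max_i|\partial_l f_i(\bw_{k,0})|$ of Corollary~\ref{coro: large_derivative} and the momentum estimate of Lemma~\ref{lem: estimation_momentum}, produces exactly the explicit $g(\btwo)(n-1+(1+\bone)/(1-\bone))$ term in the statement. Second, the $(L_0,L_1)$-based replacement $\nabla f_{\tau_{k,i}}(\bw_{k,i})\to\nabla f_{\tau_{k,i}}(\bw_{k,0})$ contributes historical-gradient residuals which, after the same sum-order exchange used in Lemma~\ref{lem: L_small}, reshape into $\sum_k\eta_k^2\|\nabla f(\bw_{k,0})\|$, $\ln T$, and constant contributions that can be absorbed into $C_8,C_9,C_{10}$. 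Third, the boundary term at $i=0$ involving $\bom_{k,-1}$ and $\bnu_{k,-1}$ from the previous epoch is handled by the second part of Lemma~\ref{lem: estimation_momentum} and the $\bnu_{l,k,-1}$ bounds in Corollary~\ref{coro: large_derivative}.

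Finally, to extend the negative term from $\mathbb{L}^k_{large}$ to all of $[d]$, I add and subtract the $\mathbb{L}^k_{small}$ contribution. By the very definition of $\mathbb{L}^k_{small}$, the quantity $\max_i|\partial_l f_i(\bw_{k,0})|$---hence also $|\partial_l f(\bw_{k,0})|$---is bounded by the error expression in Eq.~\eqref{eq: translated_crit}, so that $\eta_k\partial_l f(\bw_{k,0})^2/(2\max_i|\partial_l f_i|+\xi)\le \eta_k|\partial_l f(\bw_{k,0})|/2$; repeating the sum-order-change argument of Lemma~\ref{lem: L_small} with the $C_2\eta_k$ factor replaced by $\tfrac{1}{2}\eta_k$ yields the $\tfrac{1}{2}C_5,\tfrac{1}{2}C_6,\tfrac{1}{2}C_7$ contributions. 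The main obstacle throughout is not any single inequality but the bookkeeping: aligning the $g(\btwo)$-error, the $(L_0,L_1)$-induced historical residuals, and the boundary effects at $i=0$ and $i=n-1$ with the precise constants $C_8,C_9,C_{10}$, while ensuring that the multiplicative slack $1\pm g(\btwo)$ (small thanks to the threshold on $\btwo$) does not destroy the sign of the main term.
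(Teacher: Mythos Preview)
Your plan mirrors the paper's proof closely: the paper also freezes all adaptors at $\sqrt{\bnu_{l,k,0}}$, collapses via the momentum identity $\bom_{l,k,i}-\bone\bom_{l,k,i-1}=(1-\bone)\partial_l f_{\tau_{k,i}}(\bw_{k,i})$ to isolate the main negative term (its $a_l^1$), isolates the $g(\btwo)$-slack using Corollary~\ref{coro: large_derivative} and Lemma~\ref{lem: estimation_momentum} (its $a_l^2$), and closes by adding back the $\mathbb{L}^k_{small}$ coordinates, bounding their contribution through the defining inequality~\eqref{eq: translated_crit} exactly as you describe to produce the $\tfrac{1}{2}C_5,\tfrac{1}{2}C_6,\tfrac{1}{2}C_7$ pieces.

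The one point you underspecify is the epoch boundary at $i=0$. The anomaly there is not only that $\bom_{k,-1}$ and $\bnu_{k,-1}$ are indexed from the previous epoch, but that the step $\bw_{k,0}-\bw_{k,-1}$ was taken with learning rate $\eta_{k-1}$, not $\eta_k$. Hence your displayed increment formula is off at $i=0$ by
\[
\frac{\bone(\eta_{k-1}-\eta_k)}{1-\bone}\cdot\frac{\bom_{l,k,-1}}{\sqrt{\bnu_{l,k,-1}}+\xi}.
\]
The paper tracks this as a separate term $a_l^3$ and bounds it via $|\eta_{k-1}-\eta_k|=\eta_1/(\sqrt{k}\sqrt{k-1}(\sqrt{k}+\sqrt{k-1}))=O(\eta_k^2/\eta_1)$ together with Lemma~\ref{lem: bounded_update}, which is what produces the summand $2\bone\sqrt{d}C_1/((1-\bone)\eta_1)$ inside $C_8$. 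This is a minor and easily repaired omission, but worth flagging since your account attributes the $i=0$ issue to the wrong cause.
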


\begin{proof}
    Compared to the proof of Lemma \ref{lem: L_small}, the proof of this lemma is more complicated. To begin with, we provide a decomposition of $\bu_{k+1}-\bu_{k}$. According to the definition of $\bu_{k}$, we have
\begin{small}
\begin{align}
\nonumber
    &\bu_{k+1}-\bu_k
    \\
\nonumber
    =&\frac{(\bw_{k+1,0}-\bone \bw_{k+1,-1})-(\bw_{k,0}-\bone \bw_{k,-1})}{1-\bone}
    \\
\nonumber
    =&\frac{(\bw_{k+1,0}- \bw_{k,0})-\bone(\bw_{k+1,-1}- \bw_{k,-1})}{1-\bone}
    \\
\nonumber
    =&\frac{\sum_{i=0}^{n-1}(\bw_{k,i+1}- \bw_{k,i})-\bone\sum_{i=0}^{n-1}(\bw_{k,i}- \bw_{k,i-1})}{1-\bone}
    \\
\nonumber
    =&\frac{(\bw_{k+1,0}- \bw_{k+1,-1})+(1-\bone)\sum_{i=0}^{n-2}(\bw_{k,i+1}- \bw_{k,i})-\bone(\bw_{k,0}- \bw_{k,-1})}{1-\bone}
    \\
\nonumber
    \overset{(\star)}{=}&-\frac{\frac{\eta_k}{\sqrt{\bnu_{k,n-1}}}\odot\bom_{k,n-1}+(1-\bone)\sum_{i=0}^{n-2}\frac{\eta_k}{\sqrt{\bnu_{k,i}}}\odot\bom_{k,i}-\bone\frac{\eta_{k-1}}{\sqrt{\bnu_{k-1,n-1}}}\odot\bom_{k-1,n-1}}{1-\bone}
    \\
\nonumber
    =&-\frac{\eta_k}{\sqrt{\bnu_{k,0}}}\odot\frac{\bom_{k,n-1}+(1-\bone)\sum_{i=0}^{n-2}\bom_{k,i}-\bone\bom_{k-1,n-1}}{1-\bone}
    -\eta_k\left(\left(\frac{1}{\sqrt{\bnu_{k,n-1}}}-\frac{1}{\sqrt{\bnu_{k,0}}}\right)\odot\frac{\bom_{k,n-1}}{1-\bone}\right.
    \\
    \nonumber
    &~~~\left.+\sum_{i=0}^{n-2}\left(\frac{1}{\sqrt{\bnu_{k,i}}}-\frac{1}{\sqrt{\bnu_{k,0}}}\right)\odot\bom_{k,i}-\frac{\bone}{1-\bone}\left(\frac{1}{\sqrt{\bnu_{k-1,n-1}}}-\frac{1}{\sqrt{\bnu_{k,0}}}\right)\odot{\bom_{k-1,n-1}}\right)
    \\
\label{eq:decomposition_u_update}
    &~~~-\frac{\beta_1}{1-\beta_1}(\eta_{k-1}-\eta_{k})\frac{1}{\sqrt{\bnu_{k-1,n-1}}}\odot{\bom_{k-1,n-1}}.
\end{align}
\end{small}
Here equation ($\star$) is due to a direct application of the update rule of $\bw_{k,i}$.
We then analyze the above three terms respectively, namely, we define
\begin{small}
\begin{gather*}
    a^1_l\triangleq -\frac{\eta_k}{\sqrt{\bnu_{l,k,0}}}\frac{\bom_{l,k,n-1}+(1-\bone)\sum_{i=0}^{n-2}\bom_{l,k,i}-\bone\bom_{l,k-1,n-1}}{1-\bone}=-\frac{\eta_k}{\sqrt{\bnu_{l,k,0}}}\sum_{i=0}^{n-1}\partial_lf_{\tau_{k,i}} (\bw_{k,i}),
    \\
    a^2_l\triangleq-\eta_k\left(\left(\frac{1}{\sqrt{\bnu_{l,k,n-1}}}-\frac{1}{\sqrt{\bnu_{l,k,0}}}\right)\frac{\bom_{l,k,n-1}}{1-\bone}+\sum_{i=0}^{n-2}\left(\frac{1}{\sqrt{\bnu_{l,k,i}}}-\frac{1}{\sqrt{\bnu_{l,k,0}}}\right)\bom_{l,k,i}\right.~~~~~~~~~~~~~~~~~~~~~~~~
    \\
  ~~~~~~~~~~~~~~~~~~~~~~~~~~~~~~~~~~~~~~~~~~~~~~~~~~~~~~~~~~~~~~~~~~~~~~~~~~~~~~-\left.\frac{\bone}{1-\bone}\left(\frac{1}{\sqrt{\bnu_{l,k-1,n-1}}}-\frac{1}{\sqrt{\bnu_{l,k,0}}}\right){\bom_{l,k-1,n-1}}\right),
    \\
    a^3_l\triangleq -\frac{\beta_1}{1-\beta_1}(\eta_{k-1}-\eta_{k})\frac{1}{\sqrt{\bnu_{l,k-1,n-1}}}{\bom_{l,k-1,n-1}}.
\end{gather*}
\end{small}

One can then easily observe that by Eq. (\ref{eq:decomposition_u_update}),
\begin{equation*}
    \sum_{l\in \mathbb{L}_{large}^k}\partial_{l} f(\bw_{k,0})(\bu_{l,k+1}-\bu_{l,k})=\sum_{l\in \mathbb{L}_{large}^k}\partial_{l} f(\bw_{k,0})a_l^1+\sum_{l\in \mathbb{L}_{large}^k}\partial_{l} f(\bw_{k,0})a_l^2+\sum_{l\in \mathbb{L}_{large}^k}\partial_{l} f(\bw_{k,0})a_l^3.
\end{equation*}

\textbf{\color{blue}\ding{172}  Tackling Term $\sum_{l\in \mathbb{L}_{large}^k}\partial_{l} f(\bw_{k,0})a_l^1$:}

We have
\begin{small}
\begin{align*}
    &\sum_{l\in \mathbb{L}_{large}^k} \partial_{l} f(\bw_{k,0}) a_l^1
    \\
    =&-\sum_{l\in \mathbb{L}_{large}^k}\partial_{l} \frac{\eta_k}{\sqrt{\bnu_{l,k,0}}}\partial_{l} f(\bw_{k,0}) \left(\sum_{i=0}^{n-1}\partial_lf_{\tau_{k,i}} (\bw_{k,0})\right)
    -\sum_{l\in \mathbb{L}_{large}^k}\frac{\eta_k}{\sqrt{\bnu_{l,k,0}}}\partial_{l} f(\bw_{k,0}) \left(\sum_{i=0}^{n-1}(\partial_lf_{\tau_{k,i}} (\bw_{k,i})-\partial_lf_{\tau_{k,i}} (\bw_{k,0}))\right)
    \\
    =&-\sum_{l\in \mathbb{L}_{large}^k}\frac{\eta_k}{\sqrt{\bnu_{l,k,0}}}\partial_{l} f(\bw_{k,0})^2 -\sum_{l\in \mathbb{L}_{large}^k}\frac{\eta_k}{\sqrt{\bnu_{l,k,0}}}\partial_{l} f(\bw_{k,0}) \left(\sum_{i=0}^{n-1}(\partial_lf_{\tau_{k,i}} (\bw_{k,i})-\partial_lf_{\tau_{k,i}} (\bw_{k,0}))\right)
    \\
    \overset{(\star)}{=}&-\sum_{l\in \mathbb{L}_{large}^k}\frac{\eta_k}{\sqrt{\bnu_{l,k,0}}}\partial_{l} f(\bw_{k,0})^2+\mathcal{O}\left(\eta_k^2\right)+\mathcal{O}\left(\eta_k^2 \Vert \nabla f(\bw_{k,0})\Vert\right),
\end{align*}
\end{small}
where Eq. ($\star$) is due to 
\begin{small}
\begin{align*}
    &\left\vert\sum_{l\in \mathbb{L}_{large}^k}\frac{\eta_k}{\sqrt{\bnu_{l,k,0}}}\partial_{l} f(\bw_{k,0}) \left(\sum_{i=0}^{n-1}(\partial_lf_{\tau_{k,i}} (\bw_{k,i})-\partial_lf_{\tau_{k,i}} (\bw_{k,0}))\right)\right\vert
    \\
    \overset{(\ast)}{\le} &\eta_k \sqrt{\frac{2n^2}{\btwo^n}}\left(\sum_{l\in \mathbb{L}_{large}^k}\sum_{i=0}^{n-1}\vert\partial_lf_{\tau_{k,i}} (\bw_{k,i})-\partial_lf_{\tau_{k,i}} (\bw_{k,0})\vert  \right)
    \\
    \le &\eta_k \sqrt{\frac{2n^2}{\btwo^n}}\left(\sqrt{d}\sum_{i=0}^{n-1}\Vert\nabla f_{\tau_{k,i}} (\bw_{k,i})-\nabla f_{\tau_{k,i}} (\bw_{k,0})\Vert  \right)
    \\
    \overset{(\circ)}{\le} &\eta_k \sqrt{\frac{2n^2}{\btwo^n}}\sqrt{d}\sum_{i=0}^{n-1}(L_0+L_1\Vert \nabla f_{\tau_{k,i}}(\bw_{k,0})\Vert)\Vert\bw_{k,i}-\bw_{k,0}\Vert  
    \\
    \le &\eta_k \sqrt{\frac{2n^2}{\btwo^n}}\sqrt{d}(nL_0+L_1\sqrt{D_1}\sqrt{n}\Vert \nabla f(\bw_{k,0})\Vert+\sqrt{n}L_1\sqrt{D_0})n\sqrt{d}C_1\eta_k
    \\
   \overset{(\bullet)}{\le}  & \sqrt{\frac{2n^2}{\btwo^n}}d(n^2L_0+n\sqrt{n}L_1\sqrt{D_0}) C_1\eta_k^2+\eta_k^2d\sqrt{\frac{2n^2}{\btwo^n}}L_1\sqrt{D_1}n\sqrt{n}\Vert \nabla f(\bw_{k,0})\Vert.
\end{align*}
\end{small}
Here Eq. ($\ast$) is due to Corollary \ref{coro: large_derivative}, Eq. ($\circ$) is due to $f_i$ is $(L_0,L_1)$-smooth, $\forall i$, and Eq. ($\bullet$) is due to Lemma \ref{lem: bounded_update}.

\textbf{\color{blue} \ding{173} Tackling Term $\sum_{l\in \mathbb{L}_{large}^k}\partial_{l} f(\bw_{k,0})a_l^2$:}

We have for any $l\in \mathbb{L}_{max}$,
\begin{align*}
    &\vert\partial_{l} f(\bw_{k,0})a_l^2\vert
    \\
    \le & \eta_k \vert \partial_{l} f(\bw_{k,0})\vert \left(\left\vert\frac{1}{\sqrt{\bnu_{l,k,n-1}}}-\frac{1}{\sqrt{\bnu_{l,k,0}}}\right\vert\frac{\vert\bom_{l,k,n-1}\vert}{1-\bone}+\sum_{i=0}^{n-2}\left\vert\frac{1}{\sqrt{\bnu_{l,k,i}}}-\frac{1}{\sqrt{\bnu_{l,k,0}}}\right\vert \vert\bom_{l,k,i}\vert\right.
    \\
 &-\left.\frac{\bone}{1-\bone}\left\vert\frac{1}{\sqrt{\bnu_{l,k-1,n-1}}}+\frac{1}{\sqrt{\bnu_{l,k,0}}}\right\vert{\vert\bom_{l,k-1,n-1}\vert}\right)
 \\
 \overset{(\star)}{\le}&\eta_kg(\btwo) \frac{\vert \partial_{l} f(\bw_{k,0})\vert}{\sqrt{\bnu_{l,k,0}}} \left(\frac{\vert\bom_{l,k,n-1}\vert}{1-\bone}+\sum_{i=0}^{n-2}\vert\bom_{l,k,i}\vert+\frac{\bone}{1-\bone}{\vert\bom_{l,k-1,n-1}\vert}\right)
 \\
 \overset{(\ast)}{\le }& \eta_k g(\btwo)\left(n-1+\frac{1+\bone}{1-\bone}\right) \frac{\vert \partial_{l} f(\bw_{k,0})\vert}{\sqrt{\bnu_{l,k,0}}} \left(\max_{i\in [n]} \vert \partial_l f_i(\bw_{k,0}) \vert\right)
 \\
 &+\eta_k^2 g(\btwo)\left(n-1+\frac{1+\bone}{1-\bone}\right)\frac{\sqrt{2}n}{\btwo^{\frac{n}{2}}} \left(n+\frac{2\sqrt{2}\bone}{1-\bone}\right)C_1(L_0+L_1\sqrt{D_0})\sqrt{d}
 \\
 &+\eta_k^2 g(\btwo)\left(n-1+\frac{1+\bone}{1-\bone}\right)\frac{\sqrt{2}n}{\btwo^{\frac{n}{2}}} L_1C_1\sqrt{D_1}\sum_{j=0}^{n-1}\Vert \nabla f(\bw_{k,j}) \Vert
  \\
 &+\eta_k g(\btwo)\left(n-1+\frac{1+\bone}{1-\bone}\right)\frac{\sqrt{2}n}{\btwo^{\frac{n}{2}}} L_1C_1\sqrt{D_1}\sum_{t=1}^{k-1}\eta_{k-t}\sum_{j=0}^{n-1} \bone^{tn-1-j}\Vert \nabla f(\bw_{k-t,j}) \Vert,
\end{align*}
where Inequality ($\star$) is due to Corollary \ref{coro: large_derivative}, and $g(\btwo)$ is defined in Lemma \ref{rm: notations} , and Inequality $(\ast)$ is due to Lemma \ref{lem: estimation_momentum}, by which we have $\forall i \in \{-1,\cdots,n-1\}$
\begin{align*}
    \vert \bom_{l,k,i}\vert \le& \max_{i'\in [n]} \vert \partial_l f_{i'}(\bw_{k,0})\vert+
    \left(n+\frac{2\sqrt{2}\bone}{1-\bone}\right)C_1(L_0+L_1\sqrt{D_0})\sqrt{d}\eta_k+L_1C_1\sqrt{D_1}\eta_k\sum_{j=0}^{n-1}\Vert \nabla f(\bw_{k,j}) \Vert
    \\
    &+L_1C_1\sqrt{D_1}\sum_{t=1}^{k-1}\eta_{k-t}\sum_{j=0}^{n-1} \bone^{tn-1-j}\Vert \nabla f(\bw_{k-t,j}) \Vert.
\end{align*}

Therefore, summing over $\mathbb{L}_{large}^k$ and $k$ leads to  
\begin{align*}
    &  \sum_{k=1}^T \left\vert \sum_{l\in \mathbb{L}_{large}^k}\partial_{l} f(\bw_{k,0})a_l^2\right\vert
    \\
    \le & \sum_{k=1}^T \sum_{l\in \mathbb{L}_{large}^k} \eta_k g(\btwo)\left(n-1+\frac{1+\bone}{1-\bone}\right) \frac{\vert \partial_{l} f(\bw_{k,0})\vert}{\sqrt{\bnu_{l,k,0}}} \left(\max_{i\in [n]} \vert \partial_l f_i(\bw_{k,0}) \vert\right)
    \\
    &+\sum_{k=1}^T\eta_k^2 g(\btwo)\left(n-1+\frac{1+\bone}{1-\bone}\right)\frac{\sqrt{2}n}{\btwo^{\frac{n}{2}}} \left(n+\frac{2\sqrt{2}\bone}{1-\bone}\right)C_1(L_0+L_1\sqrt{D_0})d\sqrt{d}
 \\
 &+dg(\btwo)\left(n-1+\frac{1+\bone}{1-\bone}\right)\frac{\sqrt{2}n}{\btwo^{\frac{n}{2}}} L_1C_1\sqrt{D_1}\sum_{k=1}^T \eta_k^2 \sum_{j=0}^{n-1}\Vert \nabla f(\bw_{k,j}) \Vert
  \\
 &+ dg(\btwo)\left(n-1+\frac{1+\bone}{1-\bone}\right)\frac{\sqrt{2}n}{\btwo^{\frac{n}{2}}} L_1C_1\sqrt{D_1}\sum_{k=1}^T\eta_k\sum_{t=1}^{k-1}\eta_{k-t}\sum_{j=0}^{n-1} \bone^{(t-1)n}\Vert \nabla f(\bw_{k-t,j}) \Vert
\\
    \le & \sum_{k=1}^T \sum_{l\in \mathbb{L}_{large}^k} \eta_k g(\btwo)\left(n-1+\frac{1+\bone}{1-\bone}\right) \frac{\vert \partial_{l} f(\bw_{k,0})\vert}{\sqrt{\bnu_{l,k,0}}} \left(\max_{i\in [n]} \vert \partial_l f_i(\bw_{k,0}) \vert\right)
    \\
    &+ g(\btwo)\left(n-1+\frac{1+\bone}{1-\bone}\right)\frac{\sqrt{2}n}{\btwo^{\frac{n}{2}}} \left(n+\frac{2\sqrt{2}\bone}{1-\bone}\right)C_1(L_0+L_1\sqrt{D_0})d\sqrt{d}\eta_1(1+\ln T)
 \\
 &+dg(\btwo)\left(n-1+\frac{1+\bone}{1-\bone}\right)\frac{\sqrt{2}n}{\btwo^{\frac{n}{2}}} L_1C_1\sqrt{D_1}\left(1+\frac{1}{1-\btwo^n}\right)\sum_{k=1}^T \eta_k^2 \sum_{j=0}^{n-1}\Vert \nabla f(\bw_{k,j}) \Vert
 \\
 \overset{(\star)}{\le} &\sum_{k=1}^T \sum_{l\in \mathbb{L}_{large}^k} \eta_k g(\btwo)\left(n-1+\frac{1+\bone}{1-\bone}\right) \frac{\vert \partial_{l} f(\bw_{k,0})\vert}{\sqrt{\bnu_{l,k,0}}} \left(\max_{i\in [n]} \vert \partial_l f_i(\bw_{k,0}) \vert\right)
    \\
    &+ g(\btwo)\left(n-1+\frac{1+\bone}{1-\bone}\right)\frac{\sqrt{2}n}{\btwo^{\frac{n}{2}}} \left(n+\frac{2\sqrt{2}\bone}{1-\bone}\right)C_1(L_0+L_1\sqrt{D_0})d\sqrt{d}\eta_1(1+\ln T)
 \\
 &+dg(\btwo)\left(n-1+\frac{1+\bone}{1-\bone}\right)\frac{\sqrt{2}n}{\btwo^{\frac{n}{2}}} L_1C_1\sqrt{D_1}\left(1+\frac{1}{1-\btwo^n}\right)
 \\
 &\cdot\sum_{k=1}^T \eta_k^2 \sum_{j=0}^{n-1}\left((1+n
        \sqrt{d} C_1\eta_1L_1\sqrt{n}\sqrt{D_1})\Vert \nabla f(\bw_{k,0}) \Vert+\left(nL_0+L_1\sqrt{n}\sqrt{D_0}\right)n
        \sqrt{d} C_1\eta_k\right)
    \\
    \le  &\sum_{k=1}^T \sum_{l\in \mathbb{L}_{large}^k} \eta_k g(\btwo)\left(n-1+\frac{1+\bone}{1-\bone}\right) \frac{\vert \partial_{l} f(\bw_{k,0})\vert}{\sqrt{\bnu_{l,k,0}}} \left(\max_{i\in [n]} \vert \partial_l f_i(\bw_{k,0}) \vert\right)
    \\
    &+ g(\btwo)\left(n-1+\frac{1+\bone}{1-\bone}\right)\frac{\sqrt{2}n}{\btwo^{\frac{n}{2}}} \left(n+\frac{2\sqrt{2}\bone}{1-\bone}\right)C_1(L_0+L_1\sqrt{D_0})d\sqrt{d}\eta^2_1(1+\ln T)
 \\
 &+dg(\btwo)\left(n-1+\frac{1+\bone}{1-\bone}\right)\frac{\sqrt{2}n}{\btwo^{\frac{n}{2}}} L_1C_1\sqrt{D_1}\left(1+\frac{1}{1-\btwo^n}\right)(n+n^{\frac{5}{2}}
        \sqrt{d} C_1\eta_1L_1\sqrt{D_1})\sum_{k=1}^T \eta_k^2\Vert \nabla f(\bw_{k,0}) \Vert
 \\
&+3dg(\btwo)\left(n-1+\frac{1+\bone}{1-\bone}\right)\frac{\sqrt{2}n}{\btwo^{\frac{n}{2}}} L_1C_1\sqrt{D_1}\left(1+\frac{1}{1-\btwo^n}\right)n\left(nL_0+L_1\sqrt{n}\sqrt{D_0}\right)n
        \sqrt{d} C_1\eta_1^3.
\end{align*}
where Inequality $(\star)$ is due to Lemma \ref{lem: relationship_across_iteration}.

\textbf{\color{blue} \ding{174} Tackling Term $\sum_{l\in \mathbb{L}_{large}^k}\partial_{l} f(\bw_{k,0})a_l^3$:}

For any $l\in \mathbb{L}_{large}^k$,
\begin{align*}
&\vert\partial_{l} f(\bw_{k,0})a_l^3\vert
\le
    \frac{\beta_1}{1-\beta_1}\vert\eta_{k-1}-\eta_{k}\vert\frac{1}{\sqrt{\bnu_{l,k-1,n-1}}}\vert\bom_{l,k-1,n-1}\vert\vert\partial_{l} f(\bw_{k,0})\vert
\\
  \le&
    \frac{\beta_1\eta_1}{(1-\beta_1)}\frac{1}{\sqrt{k}\sqrt{k-1}(\sqrt{k}+\sqrt{k-1})}C_1\vert\partial_{l} f(\bw_{k,0})\vert
\\
 =& \frac{\beta_1\eta_k}{(1-\beta_1)}\frac{1}{\sqrt{k-1}(\sqrt{k}+\sqrt{k-1})}C_1\vert\partial_{l} f(\bw_{k,0})\vert  .
\end{align*}
Summing over $k$ and $\mathbb{L}_{large}^k$ then leads to \begin{small}
\begin{align*}
&\sum_{k=1}^T\sum_{l\in \mathbb{L}_{large}^k}\vert\partial_{l} f(\bw_{k,0})a_l^3\vert
\le
   \frac{\beta_1}{(1-\beta_1)}\sum_{k=1}^T\sum_{l\in \mathbb{L}_{large}^k}\frac{\eta_k}{\sqrt{k-1}(\sqrt{k}+\sqrt{k-1})}C_1\vert\partial_{l} f(\bw_{k,0})\vert
\\
\le &  2\frac{\beta_1}{(1-\beta_1)\eta_1}\sqrt{d}C_1\sum_{k=1}^T\eta_k^2\Vert\nabla  f(\bw_{k,0})\Vert.
\end{align*}
\end{small}
 Put \textbf{\color{blue}\ding{172}, \ding{173}, and \ding{174}} together and applying the notations in Eq. (\ref{eq: define_constants}), we then have
 \begin{small}
\begin{align}
\nonumber
    &\sum_{k=1}^T\sum_{l\in \mathbb{L}_{large}^k} \partial_l f(\bw_{k,0}) (\bu_{l,k+1}-\bu_{l,k})
    \\
\nonumber
    \le & -\sum_{k=1}^T\sum_{l\in \mathbb{L}_{large}^k}\frac{\eta_k}{\sqrt{\bnu_{l,k,0}}}\partial_{l} f(\bw_{k,0})^2+\sum_{k=1}^T\sum_{l\in \mathbb{L}_{large}^k} \eta_k g(\btwo)\left(n-1+\frac{1+\bone}{1-\bone}\right) \frac{\vert \partial_{l} f(\bw_{k,0})\vert}{\sqrt{\bnu_{l,k,0}}} \left(\max_{i\in [n]} \vert \partial_l f_i(\bw_{k,0}) \vert\right)
    \\
\label{eq: mid_proof_for_2}
    &+C_8 \sum_{k=1}^T\eta_k^2\Vert\nabla f(\bw_{k,0})\Vert +C_9 \ln T+C_{10}.
\end{align}
\end{small}

We then focus on the first two terms of the RHS of the above inequality. Specifically, we have $\forall k\ge 1$,
\small
\begin{align*}
    &\sum_{l\in \mathbb{L}_{large}^k}\frac{\eta_k\partial_{l} f(\bw_{k,0})^2}{\sqrt{\bnu_{l,k,0}}+\xi}
    -
    \sum_{l\in \mathbb{L}_{large}^k} \eta_k g(\btwo)\left(n-1+\frac{1+\bone}{1-\bone}\right) \frac{\vert \partial_{l} f(\bw_{k,0})\vert}{\sqrt{\bnu_{l,k,0}}+\xi} \left(\max_{i\in [n]} \vert \partial_l f_i(\bw_{k,0}) \vert\right)
    \\
    \overset{(\star)}{\ge} & \sum_{l\in \mathbb{L}_{large}^k}\frac{\eta_k\partial_{l} f(\bw_{k,0})^2}{\sqrt{\bnu_{l,k,0}}+\xi}
    -
    \sum_{l\in \mathbb{L}_{large}^k} \eta_k g(\btwo)\left(n-1+\frac{1+\bone}{1-\bone}\right) \frac{\vert \partial_{l} f(\bw_{k,0})\vert}{\sqrt{\frac{\btwo^n}{2n}}\max_{i\in [n]} \vert \partial_l f_i(\bw_{k,0}) \vert+\xi} \left(\max_{i\in [n]} \vert \partial_l f_i(\bw_{k,0}) \vert\right)
    \\
    \ge & \sum_{l\in \mathbb{L}_{large}^k}\frac{\eta_k\partial_{l} f(\bw_{k,0})^2}{2\max_{i\in [n]} \vert \partial_l f_i(\bw_{k,0}) \vert+\xi}
    -
    \sum_{l\in \mathbb{L}_{large}^k} \eta_k g(\btwo)\left(n-1+\frac{1+\bone}{1-\bone}\right) \frac{\vert \partial_{l} f(\bw_{k,0})\vert}{\sqrt{\frac{\btwo^n}{2n}}\max_{i\in [n]} \vert \partial_l f_i(\bw_{k,0}) \vert+\xi} \left(\max_{i\in [n]} \vert \partial_l f_i(\bw_{k,0}) \vert\right)
    \\
     \overset{(\circ)}{=} & \sum_{l\in [d]}\frac{\eta_k\partial_{l} f(\bw_{k,0})^2}{2\max_{i\in [n]} \vert \partial_l f_i(\bw_{k,0}) \vert+\xi}
    -
    \sum_{l\in \mathbb{L}_{large}^k} \eta_k g(\btwo)\left(n-1+\frac{1+\bone}{1-\bone}\right) \frac{\vert \partial_{l} f(\bw_{k,0})\vert}{\sqrt{\frac{\btwo^n}{2n}}\max_{i\in [n]} \vert \partial_l f_i(\bw_{k,0}) \vert+\xi} \left(\max_{i\in [n]} \vert \partial_l f_i(\bw_{k,0}) \vert\right)
    \\
    &-\frac{nd\eta_k}{2} \left(C_3\eta_k+C_4 \sum_{r=1}^{k-1}\sqrt{\btwo}^{(r-1)n}\eta_{k-r}\sum_{j=0}^{n-1} \Vert \nabla f (\bw_{k-r,j}) \Vert+C_4n \sum_{r=1}^{k-1}\sqrt{\btwo}^{(r-1)n}\eta_{k-r}+\eta_k C_4\sum_{j=0}^{n-1} \Vert \nabla f(\bw_{k,j}) \Vert \right),
\end{align*}
\normalsize
where Inequality $(\star)$ is due to Corollary \ref{coro: large_derivative} and Equality $(\circ)$ is due to
\small
\begin{align*}
    & \sum_{l\in \mathbb{L}_{small}^k}\frac{\eta_k\partial_{l} f(\bw_{k,0})^2}{2\max_{i\in [n]} \vert \partial_l f_i(\bw_{k,0}) \vert+\xi} \le  \sum_{l\in \mathbb{L}_{small}^k}\frac{\eta_k\partial_{l} f(\bw_{k,0})^2}{2\max_{i\in [n]} \vert \partial_l f_i(\bw_{k,0}) \vert+\xi}\le \frac{n}{2}\eta_k \sum_{l\in \mathbb{L}_{small}^k} \max_{i\in [n]} \vert \partial_l f_i(\bw_{k,0}) \vert
    \\ 
    \le & \frac{nd\eta_k}{2} \left(C_3\eta_k+C_4 \sum_{r=1}^{k-1}\sqrt{\btwo}^{(r-1)n}\eta_{k-r}\sum_{j=0}^{n-1} \Vert \nabla f (\bw_{k-r,j}) \Vert+C_4n \sum_{r=1}^{k-1}\sqrt{\btwo}^{(r-1)n}\eta_{k-r}+\eta_k C_4\sum_{j=0}^{n-1} \Vert \nabla f(\bw_{k,j}) \Vert \right).
\end{align*}
\normalsize

Summing the both sides of the above inequality then leads to
\small
\begin{align*}
    &\sum_{k=1}^T \sum_{l\in \mathbb{L}_{large}^k}\frac{\eta_k\partial_{l} f(\bw_{k,0})^2}{\sqrt{\bnu_{l,k,0}}+\xi}
    -
    \sum_{k=1}^T\sum_{l\in \mathbb{L}_{large}^k} \eta_k g(\btwo)\left(n-1+\frac{1+\bone}{1-\bone}\right) \frac{\vert \partial_{l} f(\bw_{k,0})\vert}{\sqrt{\bnu_{l,k,0}}+\xi} \left(\max_{i\in [n]} \vert \partial_l f_i(\bw_{k,0}) \vert\right)
    \\
    \ge &\sum_{k=1}^T\sum_{l\in [d]}\frac{\eta_k\partial_{l} f(\bw_{k,0})^2}{2\max_{i\in [n]} \vert \partial_l f_i(\bw_{k,0}) \vert+\xi}
    \\
    &-
    \sum_{k=1}^T\sum_{l\in \mathbb{L}_{large}^k} \eta_k g(\btwo)\left(n-1+\frac{1+\bone}{1-\bone}\right) \frac{\vert \partial_{l} f(\bw_{k,0})\vert}{\sqrt{\frac{\btwo^n}{2n}}\max_{i\in [n]} \vert \partial_l f_i(\bw_{k,0}) \vert+\xi} \left(\max_{i\in [n]} \vert \partial_l f_i(\bw_{k,0}) \vert\right)
    \\
    &-\sum_{k=1}^T\frac{nd\eta_k}{2} \left(C_3\eta_k+C_4 \sum_{r=1}^{k-1}\sqrt{\btwo}^{(r-1)n}\eta_{k-r}\sum_{j=0}^{n-1} \Vert \nabla f (\bw_{k-r,j}) \Vert+C_4n \sum_{r=1}^{k-1}\sqrt{\btwo}^{(r-1)n}\eta_{k-r}+\eta_k C_4\sum_{j=0}^{n-1} \Vert \nabla f(\bw_{k,j}) \Vert \right)
    \\
    \ge  &\sum_{k=1}^T\sum_{l\in [d]}\frac{\eta_k\partial_{l} f(\bw_{k,0})^2}{2\max_{i\in [n]} \vert \partial_l f_i(\bw_{k,0}) \vert+\xi}
    \\
    &-
    \sum_{k=1}^T\sum_{l\in \mathbb{L}_{large}^k} \eta_k g(\btwo)\left(n-1+\frac{1+\bone}{1-\bone}\right) \frac{\vert \partial_{l} f(\bw_{k,0})\vert}{\sqrt{\frac{\btwo^n}{2n}}\max_{i\in [n]} \vert \partial_l f_i(\bw_{k,0}) \vert+\xi} \left(\max_{i\in [n]} \vert \partial_l f_i(\bw_{k,0}) \vert\right)
    \\
    &-\frac{1}{2}\left(C_5 \sum_{k=1}^{T} \eta_k^2  \Vert \nabla f(\bw_{k,0}) \Vert+C_6\ln T
    +C_7\right). 
\end{align*}

Applying the above inequality back to 
Eq. (\ref{eq: mid_proof_for_2}), the proof is completed.
\end{proof}

The following lemma will be useful when translating $\langle \nabla f(\bw_{k,0}), \frac{1}{\sqrt{\bnu_{k,0}}}\odot \nabla f(\bw_{k,0})\rangle $ to $\min \left\{\frac{\Vert\nabla f(\bw_{k,0})\Vert }{\sqrt{D_1 } },\frac{\Vert\nabla f(\bw_{k,0})\Vert^2 }{\sqrt{D_0 } }\right\}$.

\begin{lemma}
\label{lem: omit_proof}
Let all conditions in Theorem \ref{thm:rate} hold. Then, 
either there exists a iteration $k\in [T]$, such that either
\begin{equation*}
    \Vert \nabla f(\bw_{k,0})\Vert
    \le
  2 \sqrt{d}(2\sqrt{2}+1)\sqrt{D_0} g(\btwo)\left(n-1+\frac{1+\bone}{1-\bone}\right) \sqrt{\frac{2n}{\btwo^n}},
\end{equation*}
or for all iteration $k\in [1, T]$, we have that
\begin{small}
\begin{align*}
    &\sum_{l\in [d]}\frac{\eta_k\partial_{l} f(\bw_{k,0})^2}{2\max_{i\in [n]} \vert \partial_l f_i(\bw_{k,0}) \vert+\xi}
    -
    \sum_{l\in [d]} \eta_k g(\btwo)\left(n-1+\frac{1+\bone}{1-\bone}\right) \frac{\vert \partial_{l} f(\bw_{k,0})\vert}{\sqrt{\frac{\btwo^n}{2n}}\max_{i\in [n]} \vert \partial_l f_i(\bw_{k,0}) \vert+\xi} \left(\max_{i\in [n]} \vert \partial_l f_i(\bw_{k,0}) \vert\right)
   \\
   \ge & \eta_k \frac{1}{2(2\sqrt{2}+1)}
    \min\left\{ \frac{ \Vert \nabla f(\bw_{k,0})\Vert}{\sqrt{D_1}}, \frac{ \Vert \nabla f(\bw_{k,0})\Vert^2}{\xi+\sqrt{D_0}}
   \right\}.
\end{align*}
\end{small}
\end{lemma}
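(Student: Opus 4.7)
The proof proceeds by a per-iteration dichotomy. Fix any $k\in[1,T]$ and write $M_l \triangleq \max_{i\in[n]}|\partial_l f_i(\bw_{k,0})|$; note $M_l \geq |\partial_l f(\bw_{k,0})|$. Either $\|\nabla f(\bw_{k,0})\|$ falls below the stated threshold (first alternative of the lemma, in which case there is nothing to prove), or it exceeds that threshold and the lower bound must be established.

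For the positive term, I would apply Cauchy--Schwarz in the form $(\sum_l a_l)^2 \leq (\sum_l a_l^2/c_l)(\sum_l c_l)$ with $a_l = |\partial_l f(\bw_{k,0})|$ and $c_l = 2M_l + \xi$, together with $\|\cdot\|_{\ell^1}\geq\|\cdot\|_{\ell^2}$, to obtain
\begin{equation*}
\sum_{l\in[d]} \frac{(\partial_l f(\bw_{k,0}))^2}{2M_l + \xi} \;\geq\; \frac{\|\nabla f(\bw_{k,0})\|^2}{2\sum_l M_l + d\xi}.
\end{equation*}
Assumption \ref{assum:GC} yields $\sum_l M_l^2 \leq \sum_i \|\nabla f_i(\bw_{k,0})\|^2 \leq n(D_1\|\nabla f(\bw_{k,0})\|^2+D_0)$, and a further Cauchy--Schwarz gives $\sum_l M_l \leq \sqrt{dn}(\sqrt{D_1}\|\nabla f(\bw_{k,0})\| + \sqrt{D_0})$. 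The elementary inequality $\frac{x^2}{ax+b}\geq \frac{1}{2}\min\{x/a,\, x^2/b\}$ then converts this bound into the $\min$-shape of the target.

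For the negative term, the uniform estimate $M_l/(\sqrt{\btwo^n/(2n)}\,M_l + \xi) \leq \sqrt{2n/\btwo^n}$ combined with $\sum_l |\partial_l f(\bw_{k,0})| \leq \sqrt{d}\|\nabla f(\bw_{k,0})\|$ yields a bound of order $g(\btwo)(n-1+(1+\bone)/(1-\bone))\sqrt{2nd/\btwo^n}\,\|\nabla f(\bw_{k,0})\|$. The calibration of $\gamma(D_1)$, which forces $\sqrt{d}\,g(\btwo)\,n/\btwo^{n/2} \leq (2(4+\sqrt{2})\sqrt{D_1}(n-1+(1+\bone)/(1-\bone)))^{-1}$ whenever $\btwo\geq\gamma(D_1)$, is designed precisely so that this prefactor is pinned to a small fraction of $1/\sqrt{D_1}$, ensuring the negative term is dominated by at most a constant fraction of the positive term.

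Subtracting the two estimates and splitting into cases depending on whether $\sqrt{D_1}\|\nabla f(\bw_{k,0})\|$ or $\sqrt{D_0}+\xi$ dominates the denominator of the positive bound produces the $\min\{\|\nabla f\|/\sqrt{D_1},\,\|\nabla f\|^2/(\xi+\sqrt{D_0})\}$ shape with prefactor $1/(2(2\sqrt{2}+1))$; in the regime where $\sqrt{D_0}$ is dominant, the hypothesis that one is not in the first alternative of the lemma is exactly what prevents the $\sqrt{D_0}$ contribution from overwhelming the residual. I expect the main obstacle to be the careful bookkeeping of the $d,n$ factors so that the final prefactor matches exactly $1/(2(2\sqrt{2}+1))$: the factor $(4+\sqrt{2})$ built into $\gamma(D_1)$ is tuned to absorb both the factor of $2$ lost in the elementary $\min$-inequality and the factor $(2\sqrt{2}+1)$ lost when crossing out the small-gradient threshold.
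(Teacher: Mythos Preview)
Your overall architecture---dichotomy on whether the gradient falls below the threshold, bounding the negative term via $M_l/(\sqrt{\btwo^n/(2n)}\,M_l+\xi)\le\sqrt{2n/\btwo^n}$ together with the calibration of $\gamma(D_1)$, and then a case split on which term in the $\min$ is active---matches the paper. The gap is in your handling of the \emph{positive} term.

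Your Cauchy--Schwarz step
\[
\sum_{l\in[d]}\frac{(\partial_l f(\bw_{k,0}))^2}{2M_l+\xi}\;\ge\;\frac{\|\nabla f(\bw_{k,0})\|^2}{2\sum_l M_l+d\xi}
\]
loses a factor of $\sqrt{dn}$ in the denominator (via $\sum_l M_l\le\sqrt{dn}\,(\sqrt{D_1}\|\nabla f\|+\sqrt{D_0})$) and turns $\xi$ into $d\xi$. The lemma, however, asserts the lower bound with the dimension-free prefactor $1/(2(2\sqrt{2}+1))$ and the dimension-free denominator $\xi+\sqrt{D_0}$; neither can be recovered once you have paid $\sqrt{dn}$ and $d$. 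No amount of bookkeeping at the end repairs this, because the calibration of $\gamma(D_1)$ is tuned only to kill a $\sqrt{d}$ factor in the \emph{negative} term, not to supply an extra $\sqrt{dn}$ to the positive one.

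The correct step---which the paper uses---is simpler than Cauchy--Schwarz: observe that each $M_l$ admits the \emph{uniform, $l$-independent} bound
\[
M_l=\max_{i}|\partial_l f_i(\bw_{k,0})|\;\le\;\sqrt{\sum_{i}\|\nabla f_i(\bw_{k,0})\|^2}\;\le\;\sqrt{D_1\|\nabla f(\bw_{k,0})\|^2+D_0},
\]
so the denominator $2M_l+\xi$ can be replaced by the common value $2\sqrt{D_1\|\nabla f\|^2+D_0}+\xi$ before summing. Then $\sum_l(\partial_l f)^2=\|\nabla f\|^2$ exactly, with no $d$ or $n$ loss, and the case analysis on $\xi$ versus $\sqrt{D_0}$ and on $\|\nabla f\|^2$ versus $D_0/D_1$ (or $\xi^2/D_1$) yields the constant $(2\sqrt{2}+1)^{-1}$ directly. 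Your treatment of the negative term and of the threshold alternative can be kept unchanged.
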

\begin{proof}
To begin with, we have 
\begin{small}
    \begin{align*}
        &\sum_{l\in [d]}\frac{\eta_k\partial_{l} f(\bw_{k,0})^2}{2\max_{i\in [n]} \vert \partial_l f_i(\bw_{k,0}) \vert+\xi}
    -
    \sum_{l\in [d]} \eta_k g(\btwo)\left(n-1+\frac{1+\bone}{1-\bone}\right) \frac{\vert \partial_{l} f(\bw_{k,0})\vert}{\sqrt{\frac{\btwo^n}{2n}}\max_{i\in [n]} \vert \partial_l f_i(\bw_{k,0}) \vert+\xi} \left(\max_{i\in [n]} \vert \partial_l f_i(\bw_{k,0}) \vert\right)
    \\
    \overset{(\star)}{\ge } &\sum_{l\in [d]}\frac{\eta_k\partial_{l} f(\bw_{k,0})^2}{2\sqrt{D_1\Vert \nabla f(\bw_{k,0}) \Vert^2+D_0 }+\xi}
    -
    \sum_{l\in [d]} \eta_k g(\btwo)\left(n-1+\frac{1+\bone}{1-\bone}\right) \frac{\vert \partial_{l} f(\bw_{k,0})\vert}{\sqrt{\frac{\btwo^n}{2n}}\max_{i\in [n]} \vert \partial_l f_i(\bw_{k,0}) \vert+\xi} \left(\max_{i\in [n]} \vert \partial_l f_i(\bw_{k,0}) \vert\right)
    \\
    =&\frac{\eta_k\Vert \nabla f(\bw_{k,0})\Vert^2}{2\sqrt{D_1\Vert \nabla f(\bw_{k,0}) \Vert^2+D_0 }+\xi}
    -
    \sum_{l\in [d]} \eta_k g(\btwo)\left(n-1+\frac{1+\bone}{1-\bone}\right) \frac{\vert \partial_{l} f(\bw_{k,0})\vert}{\sqrt{\frac{\btwo^n}{2n}}\max_{i\in [n]} \vert \partial_l f_i(\bw_{k,0}) \vert+\xi} \left(\max_{i\in [n]} \vert \partial_l f_i(\bw_{k,0}) \vert\right),
    \end{align*}
    \end{small}
    where Inequality $(\star)$ is due to that
    \begin{align*}
        &\max_{i\in [n]} \vert \partial_l f_i(\bw_{k,0}) \vert = \sqrt{\max_{i\in [n]} \vert \partial_l f_i(\bw_{k,0})\vert^2}
        \\
        \le & \sqrt{\sum_{i\in [n]} \sum_{l'=1}^d \vert \partial_{l'} f_i(\bw_{k,0})\vert^2}= \sqrt{\sum_{i\in [n]} \Vert \nabla f_i(\bw_{k,0}) \Vert^2}\le  \sqrt{D_1 \Vert \nabla f(\bw_{k,0}) \Vert^2+D_0}.
    \end{align*}

    We respectively consider the case $\xi\le \sqrt{D_0}$ and $\xi> \sqrt{D_0}$.
    
    \textbf{\color{green}Case I: $\xi\le \sqrt{D_0}$.} In this case, we have that 
    \begin{small}
    \begin{align*}
        &\frac{\eta_k\Vert \nabla f(\bw_{k,0})\Vert^2}{2\sqrt{D_1\Vert \nabla f(\bw_{k,0}) \Vert^2+D_0 }+\xi}
    -
    \sum_{l\in [d]} \eta_k g(\btwo)\left(n-1+\frac{1+\bone}{1-\bone}\right) \frac{\vert \partial_{l} f(\bw_{k,0})\vert}{\sqrt{\frac{\btwo^n}{2n}}\max_{i\in [n]} \vert \partial_l f_i(\bw_{k,0}) \vert+\xi} \left(\max_{i\in [n]} \vert \partial_l f_i(\bw_{k,0}) \vert\right)
    \\
    \ge & \frac{\eta_k\Vert \nabla f(\bw_{k,0})\Vert^2}{2\sqrt{D_1\Vert \nabla f(\bw_{k,0}) \Vert^2+D_0 }+\sqrt{D_0}}
    -
    \sum_{l\in [d]} \eta_k g(\btwo)\left(n-1+\frac{1+\bone}{1-\bone}\right) \frac{\vert \partial_{l} f(\bw_{k,0})\vert}{\sqrt{\frac{\btwo^n}{2n}}\max_{i\in [n]} \vert \partial_l f_i(\bw_{k,0}) \vert} \left(\max_{i\in [n]} \vert \partial_l f_i(\bw_{k,0}) \vert\right)
    \\
    = &  \frac{\eta_k\Vert \nabla f(\bw_{k,0})\Vert^2}{2\sqrt{D_1\Vert \nabla f(\bw_{k,0}) \Vert^2+D_0 }+\sqrt{D_0}}
    -
    \sum_{l\in [d]} \eta_k g(\btwo)\left(n-1+\frac{1+\bone}{1-\bone}\right) \sqrt{\frac{2n}{\btwo^n}}\vert \partial_{l} f(\bw_{k,0})\vert
    \\
    \ge &  \frac{\eta_k\Vert \nabla f(\bw_{k,0})\Vert^2}{2\sqrt{D_1\Vert \nabla f(\bw_{k,0}) \Vert^2+D_0 }+\sqrt{D_0}}
    -
   \sqrt{d}\eta_k g(\btwo)\left(n-1+\frac{1+\bone}{1-\bone}\right) \sqrt{\frac{2n}{\btwo^n}}\Vert \nabla f(\bw_{k,0})\Vert.
    \end{align*}
    \end{small}
We further discuss the case depending on whether $\Vert \nabla f(\bw_{k,0}) \Vert^2\le \frac{D_0}{D_1}$ or not.

\textbf{\color{pink}Case I.1: $\Vert \nabla f(\bw_{k,0}) \Vert^2\le \frac{D_0}{D_1}$.} In this case, the last line of the above equations can be further lower bounded by
\begin{align*}
    &\frac{\eta_k\Vert \nabla f(\bw_{k,0})\Vert^2}{2\sqrt{D_1\Vert \nabla f(\bw_{k,0}) \Vert^2+D_0 }+\sqrt{D_0}}
    -
   \sqrt{d}\eta_k g(\btwo)\left(n-1+\frac{1+\bone}{1-\bone}\right) \sqrt{\frac{2n}{\btwo^n}}\Vert \nabla f(\bw_{k,0})\Vert
   \\
   \ge &\frac{\eta_k\Vert \nabla f(\bw_{k,0})\Vert^2}{(2\sqrt{2}+1)\sqrt{D_0}}
    -
   \sqrt{d}\eta_k g(\btwo)\left(n-1+\frac{1+\bone}{1-\bone}\right) \sqrt{\frac{2n}{\btwo^n}}\Vert \nabla f(\bw_{k,0})\Vert
   \\
   =& \eta_k\left(\frac{\Vert \nabla f(\bw_{k,0})\Vert}{(2\sqrt{2}+1)\sqrt{D_0}}
    -
   \sqrt{d} g(\btwo)\left(n-1+\frac{1+\bone}{1-\bone}\right) \sqrt{\frac{2n}{\btwo^n}}\right)\Vert \nabla f(\bw_{k,0})\Vert
\end{align*}

\textbf{\color{pink}Case I.2: $\Vert \nabla f(\bw_{k,0}) \Vert^2> \frac{D_0}{D_1}$.}
\begin{align*}
    &\frac{\eta_k\Vert \nabla f(\bw_{k,0})\Vert^2}{2\sqrt{D_1\Vert \nabla f(\bw_{k,0}) \Vert^2+D_0 }+\sqrt{D_0}}
    -
   \sqrt{d}\eta_k g(\btwo)\left(n-1+\frac{1+\bone}{1-\bone}\right) \sqrt{\frac{2n}{\btwo^n}}\Vert \nabla f(\bw_{k,0})\Vert
   \\
   \ge &\frac{\eta_k\Vert \nabla f(\bw_{k,0})\Vert^2}{(2\sqrt{2}+1)\sqrt{D_1}\Vert \nabla f(\bw_{k,0})\Vert}
    -
   \sqrt{d}\eta_k g(\btwo)\left(n-1+\frac{1+\bone}{1-\bone}\right) \sqrt{\frac{2n}{\btwo^n}}\Vert \nabla f(\bw_{k,0})\Vert
   \\
   =&\eta_k\left( \frac{1}{(2\sqrt{2}+1)\sqrt{D_1}}
    -
   \sqrt{d} g(\btwo)\left(n-1+\frac{1+\bone}{1-\bone}\right) \sqrt{\frac{2n}{\btwo^n}}\right)\Vert \nabla f(\bw_{k,0})\Vert
   \\
   \overset{(\ast)}{\ge} & \eta_k \frac{1}{2(2\sqrt{2}+1)\sqrt{D_1}}
    \Vert \nabla f(\bw_{k,0})\Vert,
\end{align*}
where Inequality ($\ast$) is due to the constraint on $\btwo$.

Therefore, we have either (1). there exists a iteration $k\in [T]$, such that 
\begin{equation*}
    \Vert \nabla f(\bw_{k,0})\Vert
    \le
  2 \sqrt{d}(2\sqrt{2}+1)\sqrt{D_0} g(\btwo)\left(n-1+\frac{1+\bone}{1-\bone}\right) \sqrt{\frac{2n}{\btwo^n}},
\end{equation*}
or (2).for all $k\in [1, T]$, 
\begin{small}
\begin{align*}
    &\sum_{l\in [d]}\frac{\eta_k\partial_{l} f(\bw_{k,0})^2}{2\max_{i\in [n]} \vert \partial_l f_i(\bw_{k,0}) \vert+\xi}
    -
    \sum_{l\in [d]} \eta_k g(\btwo)\left(n-1+\frac{1+\bone}{1-\bone}\right) \frac{\vert \partial_{l} f(\bw_{k,0})\vert}{\sqrt{\frac{\btwo^n}{2n}}\max_{i\in [n]} \vert \partial_l f_i(\bw_{k,0}) \vert+\xi} \left(\max_{i\in [n]} \vert \partial_l f_i(\bw_{k,0}) \vert\right)
   \\
   \ge & \eta_k \frac{1}{2(2\sqrt{2}+1)}
   \min\left\{ \frac{ \Vert \nabla f(\bw_{k,0})\Vert}{\sqrt{D_1}}, \frac{ \Vert \nabla f(\bw_{k,0})\Vert^2}{\sqrt{D_0}}
   \right\}.
\end{align*}
\end{small}

\textbf{\color{green}Case II: $\xi>\sqrt{D_0}$.} In this case, we have that 
\begin{small}
    \begin{align*}
        &\frac{\eta_k\Vert \nabla f(\bw_{k,0})\Vert^2}{2\sqrt{D_1\Vert \nabla f(\bw_{k,0}) \Vert^2+D_0 }+\xi}
    -
    \sum_{l\in [d]} \eta_k g(\btwo)\left(n-1+\frac{1+\bone}{1-\bone}\right) \frac{\vert \partial_{l} f(\bw_{k,0})\vert}{\sqrt{\frac{\btwo^n}{2n}}\max_{i\in [n]} \vert \partial_l f_i(\bw_{k,0}) \vert+\xi} \left(\max_{i\in [n]} \vert \partial_l f_i(\bw_{k,0}) \vert\right)
    \\
    \ge &\frac{\eta_k\Vert \nabla f(\bw_{k,0})\Vert^2}{2\sqrt{D_1\Vert \nabla f(\bw_{k,0}) \Vert^2+\xi^2 }+\xi}
    -
    \sum_{l\in [d]} \eta_k g(\btwo)\left(n-1+\frac{1+\bone}{1-\bone}\right) \frac{\vert \partial_{l} f(\bw_{k,0})\vert}{\sqrt{\frac{\btwo^n}{2n}}\max_{i\in [n]} \vert \partial_l f_i(\bw_{k,0}) \vert+\xi} \left(\max_{i\in [n]} \vert \partial_l f_i(\bw_{k,0}) \vert\right).
    \end{align*}
\end{small}
Similar as \textbf{\color{green} Case I}, we further divides the case regarding the value of $\Vert \nabla f(\bw_{k,0}) \Vert$.

\textbf{\color{pink} Case II.1: $D_1\Vert \nabla f(\bw_{k,0}) \Vert^2\le \xi^2$.} In this case, we have 
\begin{small}
\begin{align*}
    &\frac{\eta_k\Vert \nabla f(\bw_{k,0})\Vert^2}{2\sqrt{D_1\Vert \nabla f(\bw_{k,0}) \Vert^2+\xi^2 }+\xi}
    -
    \sum_{l\in [d]} \eta_k g(\btwo)\left(n-1+\frac{1+\bone}{1-\bone}\right) \frac{\vert \partial_{l} f(\bw_{k,0})\vert}{\sqrt{\frac{\btwo^n}{2n}}\max_{i\in [n]} \vert \partial_l f_i(\bw_{k,0}) \vert+\xi} \left(\max_{i\in [n]} \vert \partial_l f_i(\bw_{k,0}) \vert\right)
    \\
    \ge &\frac{\eta_k\Vert \nabla f(\bw_{k,0})\Vert^2}{(2\sqrt{2}+1)\xi}
    -
    \sum_{l\in [d]} \eta_k g(\btwo)\left(n-1+\frac{1+\bone}{1-\bone}\right) \frac{\vert \partial_{l} f(\bw_{k,0})\vert}{\xi} \left(\max_{i\in [n]} \vert \partial_l f_i(\bw_{k,0}) \vert\right)
    \\
    \ge &\frac{\eta_k\Vert \nabla f(\bw_{k,0})\Vert^2}{(2\sqrt{2}+1)\xi}
    -
   \eta_k g(\btwo)\left(n-1+\frac{1+\bone}{1-\bone}\right) \frac{\Vert \nabla f(\bw_{k,0})\Vert}{\xi} \sqrt{D_1\Vert \nabla f(\bw_{k,0})\Vert^2+D_0}
   \\
   =&\frac{\eta_k\Vert \nabla f(\bw_{k,0})\Vert}{\xi}\left(\frac{\Vert \nabla f(\bw_{k,0})\Vert}{2\sqrt{2}+1}
    -
   g(\btwo)\left(n-1+\frac{1+\bone}{1-\bone}\right)  \sqrt{D_1\Vert \nabla f(\bw_{k,0})\Vert^2+D_0}\right).
\end{align*}
\end{small}

\textbf{\color{pink} Case II.2: $D_1\Vert \nabla f(\bw_{k,0}) \Vert^2>\xi^2$.} This case is quite similar to \textbf{\color{pink}Case I.2}, and we have
\begin{small}
\begin{align*}
    &\frac{\eta_k\Vert \nabla f(\bw_{k,0})\Vert^2}{2\sqrt{D_1\Vert \nabla f(\bw_{k,0}) \Vert^2+\xi^2 }+\xi}
    -
    \sum_{l\in [d]} \eta_k g(\btwo)\left(n-1+\frac{1+\bone}{1-\bone}\right) \frac{\vert \partial_{l} f(\bw_{k,0})\vert}{\sqrt{\frac{\btwo^n}{2n}}\max_{i\in [n]} \vert \partial_l f_i(\bw_{k,0}) \vert+\xi} \left(\max_{i\in [n]} \vert \partial_l f_i(\bw_{k,0}) \vert\right)
    \\
    \ge &\frac{\eta_k\Vert \nabla f(\bw_{k,0})\Vert^2}{(2\sqrt{2}+1)\sqrt{D_1} \Vert \nabla f(\bw_{k,0})\Vert}
    -
    \sum_{l\in [d]} \eta_k g(\btwo)\left(n-1+\frac{1+\bone}{1-\bone}\right) \frac{\vert \partial_{l} f(\bw_{k,0})\vert}{\sqrt{\frac{\btwo^n}{2n}}\max_{i\in [n]} \vert \partial_l f_i(\bw_{k,0}) \vert} \left(\max_{i\in [n]} \vert \partial_l f_i(\bw_{k,0}) \vert\right)
    \\
    \ge &\frac{\eta_k\Vert \nabla f(\bw_{k,0})\Vert}{(2\sqrt{2}+1)\sqrt{D_1} }
    -
   \sqrt{d}\sqrt{\frac{2n}{\btwo^n}} \eta_k g(\btwo)\left(n-1+\frac{1+\bone}{1-\bone}\right) \Vert \nabla f(\bw_{k,0})\Vert
   \\
   = & \eta_k\left( \frac{1}{(2\sqrt{2}+1)\sqrt{D_1}}
    -
   \sqrt{d} g(\btwo)\left(n-1+\frac{1+\bone}{1-\bone}\right) \sqrt{\frac{2n}{\btwo^n}}\right)\Vert \nabla f(\bw_{k,0})\Vert
   \\
   \ge &\eta_k \frac{1}{2(2\sqrt{2}+1)\sqrt{D_1}}
  \Vert \nabla f(\bw_{k,0})\Vert.
\end{align*}
\end{small}

Therefore, we have either (1). there exists a iteration $k\in [T]$, such that 
\begin{equation*}
    \Vert \nabla f(\bw_{k,0})\Vert
    \le
  2 \sqrt{d}(2\sqrt{2}+1)\sqrt{D_0} g(\btwo)\left(n-1+\frac{1+\bone}{1-\bone}\right) \sqrt{\frac{2n}{\btwo^n}},
\end{equation*}
or (2). for all $k\in [1, T]$, 
\begin{small}
\begin{align*}
    &\sum_{l\in [d]}\frac{\eta_k\partial_{l} f(\bw_{k,0})^2}{2\max_{i\in [n]} \vert \partial_l f_i(\bw_{k,0}) \vert+\xi}
    -
    \sum_{l\in [d]} \eta_k g(\btwo)\left(n-1+\frac{1+\bone}{1-\bone}\right) \frac{\vert \partial_{l} f(\bw_{k,0})\vert}{\sqrt{\frac{\btwo^n}{2n}}\max_{i\in [n]} \vert \partial_l f_i(\bw_{k,0}) \vert+\xi} \left(\max_{i\in [n]} \vert \partial_l f_i(\bw_{k,0}) \vert\right)
   \\
   \ge & \eta_k \frac{1}{2(2\sqrt{2}+1)}
    \min\left\{ \frac{ \Vert \nabla f(\bw_{k,0})\Vert}{\sqrt{D_1}}, \frac{ \Vert \nabla f(\bw_{k,0})\Vert^2}{\xi}
   \right\}.
\end{align*}
\end{small}

\textbf{As a conclusion of {\color{green}Case I} and {\color{green}Case II}}, we have that either there exists a iteration $k\in [T]$, such that 
\begin{equation*}
    \Vert \nabla f(\bw_{k,0})\Vert
    \le
  2 \sqrt{d}(2\sqrt{2}+1)\sqrt{D_0} g(\btwo)\left(n-1+\frac{1+\bone}{1-\bone}\right) \sqrt{\frac{2n}{\btwo^n}},
\end{equation*}
or for all iteration $k\in [1, T]$, we have that
\begin{small}
\begin{align*}
    &\sum_{l\in [d]}\frac{\eta_k\partial_{l} f(\bw_{k,0})^2}{2\max_{i\in [n]} \vert \partial_l f_i(\bw_{k,0}) \vert+\xi}
    -
    \sum_{l\in [d]} \eta_k g(\btwo)\left(n-1+\frac{1+\bone}{1-\bone}\right) \frac{\vert \partial_{l} f(\bw_{k,0})\vert}{\sqrt{\frac{\btwo^n}{2n}}\max_{i\in [n]} \vert \partial_l f_i(\bw_{k,0}) \vert+\xi} \left(\max_{i\in [n]} \vert \partial_l f_i(\bw_{k,0}) \vert\right)
   \\
   \ge & \eta_k \frac{1}{2(2\sqrt{2}+1)}
    \min\left\{ \frac{ \Vert \nabla f(\bw_{k,0})\Vert}{\sqrt{D_1}}, \frac{ \Vert \nabla f(\bw_{k,0})\Vert^2}{\xi+\sqrt{D_0}}
   \right\}.
\end{align*}
\end{small}
The proof is completed.
\end{proof}

\subsection{Proof of Theorem \ref{thm:rate}}
\label{appen: convergence}

\begin{proof}[Proof of Theorem \ref{thm:rate}]

We start by the descent lemma of $f(\bu_{k})$. Specifically, by Lemma \ref{lem: descent}, we have
\begin{align*}
    &f(\bu_{k+1})
    \\
    \le&f(\bu_{k})+\left\langle\nabla f(\bw_{k,0}),\bu_{k+1}-\bu_{k}\right\rangle+\frac{nL_0+L_1\sum_{i\in [n]}\Vert \nabla f_i(\bw_{k,0})\Vert}{2}(\Vert \bw_{k,0}-\bu_{k} \Vert+\Vert \bw_{k,0}-\bu_{k+1} \Vert )\Vert \bu_{k+1}-\bu_{k} \Vert 
    \\
    \le &f(\bu_{k})+\left\langle\nabla f(\bw_{k,0}),\bu_{k+1}-\bu_{k}\right\rangle
    +\frac{nL_0+L_1\sum_{i\in [n]}\Vert \nabla f_i(\bw_{k,0})\Vert}{2}3C_2^2d\eta_k^2
    \\
    \le &f(\bu_{k})
    +\left\langle\nabla f(\bw_{k,0}),\bu_{k+1}-\bu_{k}\right\rangle
    +\frac{nL_0+L_1\sqrt{n}\sqrt{\sum_{i\in [n]}\Vert \nabla f_i(\bw_{k,0})\Vert^2}}{2}3C_2^2d\eta_k^2
    \\
    \le &f(\bu_{k})
    +\left\langle\nabla f(\bw_{k,0}),\bu_{k+1}-\bu_{k}\right\rangle
    +\frac{nL_0+L_1\sqrt{n}\sqrt{D_0+D_1\Vert \nabla f(\bw_{k,0})\Vert^2}}{2}3C_2^2d\eta_k^2
    \\
     \le &f(\bu_{k})
    +\left\langle\nabla f(\bw_{k,0}),\bu_{k+1}-\bu_{k}\right\rangle
    +\frac{nL_0+L_1\sqrt{n}(\sqrt{D_0}+\sqrt{D_1}\Vert \nabla f(\bw_{k,0})\Vert)}{2}3C_2^2d\eta_k^2
    \\
    \overset{(\ast)}{=}&f(\bu_{k})
    +\sum_{l\in \mathbb{L}_{large}^k}\partial_{l} f(\bw_{k,0})(\bu_{l,k+1}-\bu_{l,k})
    +\sum_{l\in \mathbb{L}_{small}^k}\partial_{l} f(\bw_{k,0})(\bu_{l,k+1}-\bu_{l,k})
    \\
    &+\frac{nL_0+L_1\sqrt{n}\sqrt{D_0}}{2}3C_2^2d\eta_k^2+\frac{3L_1\sqrt{n}\sqrt{D_1}C_2^2d\eta_k^2}{2}
    \Vert \nabla f(\bw_{k,0})\Vert.
\end{align*}

Summing the above inequality over $k$ from $1$ to $T$ then leads to
\begin{align*}
    f(\bu_{T+1})\le& f(\bu_1) +\sum_{k=1}^T\sum_{l\in \mathbb{L}_{large}^k}\partial_{l} f(\bw_{k,0})(\bu_{l,k+1}-\bu_{l,k})
    +\sum_{k=1}^T\sum_{l\in \mathbb{L}_{small}^k}\partial_{l} f(\bw_{k,0})(\bu_{l,k+1}-\bu_{l,k})
    \\
    &+\sum_{k=1}^T\frac{nL_0+L_1\sqrt{n}\sqrt{D_0}}{2}3C_2^2d\eta_k^2+\sum_{k=1}^T\frac{3L_1\sqrt{n}\sqrt{D_1}C_2^2d\eta_k^2}{2}
    \Vert \nabla f(\bw_{k,0})\Vert.
\end{align*}

Bounding the second term and the third term of the RHS of the above inequality respectively by Lemma \ref{lem: L_large} and Lemma \ref{lem: L_small}, we then arrive at
\small
\begin{align*}
    f(\bu_{T+1})\le& f(\bu_1)-\sum_{k=1}^T\sum_{l\in [d]}\frac{\eta_k\partial_{l} f(\bw_{k,0})^2}{2\max_{i\in [n]} \vert \partial_l f_i(\bw_{k,0}) \vert+\xi}
   \\ 
   &+
    \sum_{k=1}^T\sum_{l\in \mathbb{L}_{large}^k} \eta_k g(\btwo)\left(n-1+\frac{1+\bone}{1-\bone}\right) \frac{\vert \partial_{l} f(\bw_{k,0})\vert}{\sqrt{\frac{\btwo^n}{2n}}\max_{i\in [n]} \vert \partial_l f_i(\bw_{k,0}) \vert+\xi} \left(\max_{i\in [n]} \vert \partial_l f_i(\bw_{k,0}) \vert\right)
    \\
    &+\left((C_8+(\frac{1}{2}+C_2)C_5) \sum_{k=1}^{T} \eta_k^2  \Vert \nabla f(\bw_{k,0}) \Vert+(C_9+(\frac{1}{2}+C_2)C_6)\ln T
    +(C_{10}+(\frac{1}{2}+C_2)C_7)\right)
\\&+\sum_{k=1}^T\frac{nL_0+L_1\sqrt{n}\sqrt{D_0}}{2}3C_2^2d\eta_k^2+\sum_{k=1}^T\frac{3L_1\sqrt{n}\sqrt{D_1}C_2^2d\eta_k^2}{2}
    \Vert \nabla f(\bw_{k,0})\Vert.
\end{align*}
\normalsize

Suppose now there does not exist an iteration $k\in [T]$, such that
\begin{equation*}
    \Vert \nabla f(\bw_{k,0})\Vert
    \le
  2 \sqrt{d}(2\sqrt{2}+1)\sqrt{D_0} g(\btwo)\left(n-1+\frac{1+\bone}{1-\bone}\right) \sqrt{\frac{2n}{\btwo^n}},
\end{equation*}
since otherwise, the proof has been completed. By Lemma \ref{lem: omit_proof}, we then have 
\begin{small}
\begin{align*}
   & \sum_{l\in \mathbb{L}_{large}^k}\frac{\eta_k\partial_{l} f(\bw_{k,0})^2}{\sqrt{\bnu_{l,k,0}}+\xi}
    -
    \sum_{l\in \mathbb{L}_{large}^k} \eta_k g(\btwo)\left(n-1+\frac{1+\bone}{1-\bone}\right) \frac{\vert \partial_{l} f(\bw_{k,0})\vert}{\sqrt{\bnu_{l,k,0}}+\xi} \left(\max_{i\in [n]} \vert \partial_l f_i(\bw_{k,0}) \vert\right)
    \\
    \ge& \eta_k \frac{1}{2(2\sqrt{2}+1)}
    \min\left\{ \frac{ \Vert \nabla f(\bw_{k,0})\Vert}{\sqrt{D_1}}, \frac{ \Vert \nabla f(\bw_{k,0})\Vert^2}{\xi+\sqrt{D_0}}
   \right\}.
\end{align*}
\end{small}




Therefore, we have
\begin{small}
\begin{align*}
    &f(\bu_{T+1})-f(\bu_1)
    \\
    \le&- \sum_{k=1}^T \eta_k \frac{1}{2(2\sqrt{2}+1)}
    \min\left\{ \frac{ \Vert \nabla f(\bw_{k,0})\Vert}{\sqrt{D_1}}, \frac{ \Vert \nabla f(\bw_{k,0})\Vert^2}{\xi+\sqrt{D_0}}
   \right\}+\left((\frac{1}{2}+C_2)C_5+C_8\right)\sum_{k=1}^T\eta_k^2\Vert\nabla f(\bw_{k,0})\Vert
   \\
   &+\left((\frac{1}{2}+C_2)C_6+C_9\right)\ln T+ \left((\frac{1}{2}+C_2)C_7+C_{10}\right)+\sum_{k=1}^T\frac{nL_0+L_1\sqrt{n}\sqrt{D_0}}{2}3C_2^2d\eta_k^2+\sum_{k=1}^T\frac{3L_1\sqrt{n}\sqrt{D_1}C_2^2d\eta_k^2}{2}
    \Vert \nabla f(\bw_{k,0})\Vert
\\
    \le&- \sum_{k=1}^T \eta_k \frac{1}{2(2\sqrt{2}+1)}
    \min\left\{ \frac{ \Vert \nabla f(\bw_{k,0})\Vert}{\sqrt{D_1}}, \frac{ \Vert \nabla f(\bw_{k,0})\Vert^2}{\xi+\sqrt{D_0}}
   \right\}+\left((\frac{1}{2}+C_2)C_5+C_8+\frac{3L_1\sqrt{n}\sqrt{D_1}C_2^2d}{2}\right)\sum_{k=1}^T\eta_k^2\Vert\nabla f(\bw_{k,0})\Vert
   \\
   &+\left((\frac{1}{2}+C_2)C_6+C_9+\frac{nL_0+L_1\sqrt{n}\sqrt{D_0}}{2}3C_2^2d\eta_1^2\right)\ln T+ \left((\frac{1}{2}+C_2)C_7+C_{10}+\frac{nL_0+L_1\sqrt{n}\sqrt{D_0}}{2}3C_2^2d\eta_1^2\right)
   \\
   \le & \sum_{k=1}^T \eta_k \frac{1}{2(2\sqrt{2}+1)}
    \min\left\{ \frac{ \Vert \nabla f(\bw_{k,0})\Vert}{\sqrt{D_1}}, \frac{ \Vert \nabla f(\bw_{k,0})\Vert^2}{\xi+\sqrt{D_0}}
   \right\}+C_{11}\sum_{k=1}^T\eta_k^2\Vert\nabla f(\bw_{k,0})\Vert
   +C_{12}\ln T+ C_{13},
\end{align*}
\end{small}
where $C_{11}$, $C_{12}$, and $C_{13}$ is defined as 
\begin{small}
\begin{gather*}
    C_{11}\triangleq (\frac{1}{2}+C_2)C_5+C_8+\frac{3L_1\sqrt{n}\sqrt{D_1}C_2^2d}{2},
    \\
    C_{12}\triangleq (\frac{1}{2}+C_2)C_6+C_9+\frac{nL_0+L_1\sqrt{n}\sqrt{D_0}}{2}3C_2^2d\eta_1^2,
    \\
    C_{13}\triangleq (\frac{1}{2}+C_2)C_7+C_{10}+\frac{nL_0+L_1\sqrt{n}\sqrt{D_0}}{2}3C_2^2d\eta_1^2.
\end{gather*}
\end{small}
On the other hand, as for $\forall k\in [T]$,
\begin{equation*}
    \eta_k^2 \Vert \nabla f(\bw_{k,0}) \Vert \le  \frac{1}{4}\frac{\sqrt{D_0}+\xi}{\sqrt{D_1}}\eta_k^2+\frac{\sqrt{D_1}}{\sqrt{D_0}+\xi}\eta_k^2 \Vert \nabla f(\bw_{k,0}) \Vert^2,
\end{equation*}
we have that
\begin{align*}
    \eta_k^2 \Vert \nabla f(\bw_{k,0}) \Vert \le&\frac{1}{4}\frac{\sqrt{D_0}+\xi}{\sqrt{D_1}}\eta_k^2+\eta_k^2\min\left\{  \Vert \nabla f(\bw_{k,0}) \Vert , \frac{\sqrt{D_1}}{\sqrt{D_0}+\xi}\Vert \nabla f(\bw_{k,0}) \Vert^2\right\}
    \\
   =& \frac{1}{4}\frac{\sqrt{D_0}+\xi}{\sqrt{D_1}}\eta_k^2+\sqrt{D_1}\eta_k^2\min\left\{  \frac{\Vert \nabla f(\bw_{k,0})\Vert}{\sqrt{D_1}} , \frac{\Vert \nabla f(\bw_{k,0}) \Vert^2}{\sqrt{D_0}+\xi}\right\},
\end{align*}
and thus,
\begin{small}
\begin{align*}
    &f(\bu_{T+1})-f(\bu_1)
    \\
    \le & - \sum_{k=1}^T \eta_k \frac{1}{2(2\sqrt{2}+1)}
    \min\left\{ \frac{ \Vert \nabla f(\bw_{k,0})\Vert}{\sqrt{D_1}}, \frac{ \Vert \nabla f(\bw_{k,0})\Vert^2}{\xi+\sqrt{D_0}}
   \right\}+C_{11}\sum_{k=1}^T\eta_k^2\Vert\nabla f(\bw_{k,0})\Vert
   +C_{12}\ln T+ C_{13}
   \\
    \le & - \sum_{k=1}^T \eta_k \frac{1}{2(2\sqrt{2}+1)}
    \min\left\{ \frac{ \Vert \nabla f(\bw_{k,0})\Vert}{\sqrt{D_1}}, \frac{ \Vert \nabla f(\bw_{k,0})\Vert^2}{\xi+\sqrt{D_0}}
   \right\}+\frac{\sqrt{D_0}+\xi}{4\sqrt{D_1}}C_{11}\sum_{k=1}^T\eta_k^2+C_{12}\ln T
   + C_{13}
   \\
   &+\sqrt{D_1}C_{11}\sum_{k=1}^T\eta_k^2\min\left\{ \frac{ \Vert \nabla f(\bw_{k,0})\Vert}{\sqrt{D_1}}, \frac{ \Vert \nabla f(\bw_{k,0})\Vert^2}{\xi+\sqrt{D_0}}
   \right\}
   \\
   \le & - \sum_{k=1}^T \eta_k \left(\frac{1}{2(2\sqrt{2}+1)}-\sqrt{D_1}C_{11}\eta_k\right)
    \min\left\{ \frac{ \Vert \nabla f(\bw_{k,0})\Vert}{\sqrt{D_1}}, \frac{ \Vert \nabla f(\bw_{k,0})\Vert^2}{\xi+\sqrt{D_0}}
   \right\}+\left(C_{12}+\frac{\sqrt{D_0}
   +\xi}{4\sqrt{D_1}}C_{11}\eta_1^2\right)\ln T
   \\
   &
    + \left(C_{13}+\frac{\sqrt{D_0}
   +\xi}{4\sqrt{D_1}}C_{11}\eta_1^2\right)
   \\
   \le &- \sum_{k=1}^T \eta_k \frac{1}{4(2\sqrt{2}+1)}
    \min\left\{ \frac{ \Vert \nabla f(\bw_{k,0})\Vert}{\sqrt{D_1}}, \frac{ \Vert \nabla f(\bw_{k,0})\Vert^2}{\xi+\sqrt{D_0}}
   \right\}+\left(C_{12}+\frac{\sqrt{D_0}
   +\xi}{4\sqrt{D_1}}C_{11}\eta_1^2\right)\ln T
   \\
   &
    + \left(C_{13}+\frac{\sqrt{D_0}
   +\xi}{4\sqrt{D_1}}C_{11}\eta_1^2\right).
\end{align*}
\end{small}
Dividing $\sum_{k=1}^{T} \eta_k$ to the both sides of the above inequality, the proof is completed.
\end{proof}

\section{Experiment Details}
\label{appen: exper}
This section collects experiments and their corresponding settings, and is arranged as follows: to begin with, we show that Adam works well under the different reshuffling order; we then provide the experiment settings of Figure \ref{fig: attention}.
\subsection{Adam works well under different reshuffling order}
We run Adam on ResNet 110 for CIFAR 10 across different random seeds and plot the 10-run mean and variance in Figure \ref{fig: multi-experiments}. One can observe that the performance of Adam is robust with respect to random seed, and support Theorem \ref{thm:rate} in terms of trajectory-wise convergence. The experiment is based on \href{https://github.com/akamaster/pytorch_resnet_cifar10}{this repo}, where we adopt the default hyperparameters settings.

\begin{figure}[htb!]
\centering 
\includegraphics[height=4.5cm,width=12cm]{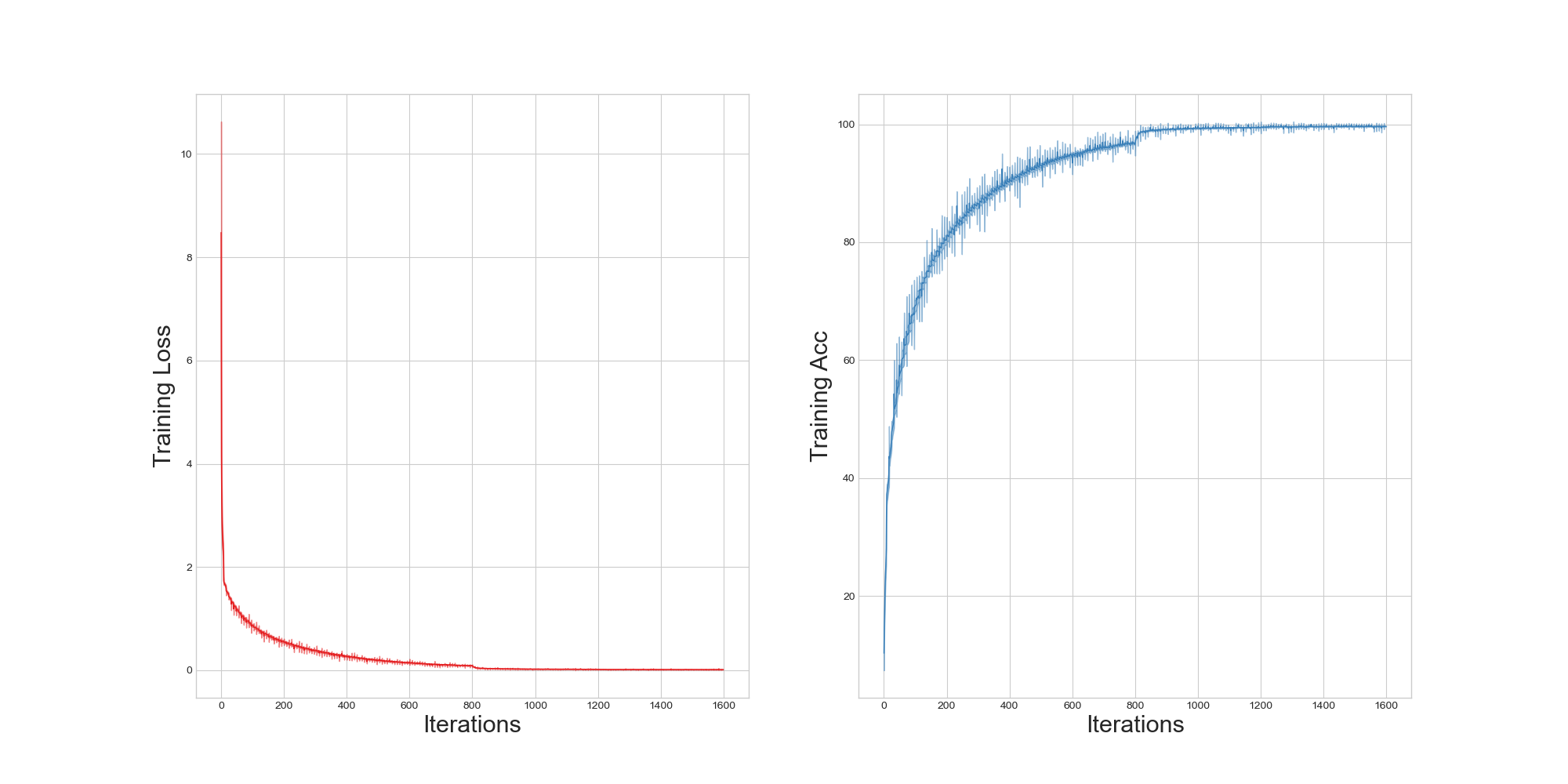}
\caption{Performance of Adam with different shuffling orders. We respectively plot the training loss and the training accuracy of Adam together with their variances over 10 runs with different random shuffling order. The result indicate the performance of Adam is robust w.r.t. the shuffling order.}
\label{fig: multi-experiments}

\end{figure}

\subsection{Local smoothness vs. gradient norm}
\label{appen: setting_details}

In this section, we provide the models and hyperparameter settings of Figures \ref{fig: attention}. We will also illustrate how we evaluate the local smoothness.

\textbf{Models and hyper-parameter settings  in Figures \ref{fig: attention}.} In Figure \ref{fig: attention}, we use exactly the same setting as \citet{vaswani2017attention}  on WMT 2014 dataset, based on \href{https://github.com/bkoch4142/attention-is-all-you-need-paper}{this repo}.

\textbf{How we evaluate the local smoothness.} We use the same method as \citet{zhang2019gradient}. Specifically, with a finite-difference step $\alpha$, we calculate the smoothness at $\bw_k$ as 
\begin{equation*}
    \text{local smoothness}=\max_{\gamma\in \{\alpha,2\alpha,\cdots,1\}} \frac{\Vert\nabla f(\bw_k+\gamma (\bw_{k+1}-\bw_k)) -\nabla f(\bw_k)\Vert}{\gamma\Vert \bw_{k+1}-\bw_k\Vert}.
\end{equation*}

\end{document}